\documentclass{article} 
\usepackage{iclr2026_conference,times}

\usepackage{amsmath,amsfonts,bm}

\def\eqref#1{equation~\ref{#1}}

\def\1{\bm{1}}

\DeclareMathAlphabet{\mathsfit}{\encodingdefault}{\sfdefault}{m}{sl}
\SetMathAlphabet{\mathsfit}{bold}{\encodingdefault}{\sfdefault}{bx}{n}

\usepackage{hyperref}
\usepackage{url}
\usepackage{hyperref}       
\usepackage{url}            
\usepackage{booktabs}       
\usepackage{amsfonts}       
\usepackage{nicefrac}       
\usepackage{microtype}      
\usepackage{xcolor}         
\usepackage{amsmath}
\usepackage{mathrsfs}
\usepackage{amsfonts}
\usepackage{amsmath, amsthm, amsfonts}
\usepackage{booktabs}
\usepackage{multirow}
\usepackage{graphicx}
\usepackage{bm}
\usepackage{amsmath}
\usepackage{amsthm}
\usepackage{wrapfig}
\usepackage{booktabs}
\usepackage{thmtools}
\usepackage{thm-restate}
\usepackage{xcolor}
\usepackage{array}

\def\*#1{\mathbf{#1}} 
\def\+#1{\mathcal{#1}} 
\def\-#1{\mathrm{#1}} 
\def\=#1{\boldsymbol{#1}} 
\def\!#1{\mathtt{#1}}
\def\@#1{\mathscr{#1}}

\allowdisplaybreaks

\newtheorem{theorem}{Theorem}[section] 

\title{SDDBMs: Soft Denoising Diffusion Bridge Models}

\author{
\textbf{Shiyi Qi} \\
Nanjing University \\
\texttt{syqi981125@163.com}
\And
\textbf{Kun He}\thanks{Corresponding author.} \\
Renmin University of China \\
Nanjing University\\
\texttt{hekun2023@ruc.edu.cn}
\And
\textbf{Mingmou Liu}\thanks{Corresponding author.} \\
Nanjing University\\
\texttt{lmm@nju.edu.cn}
}
\iclrfinalcopy

\begin{document}

\maketitle

\lhead{}

\begin{abstract}
Diffusion bridge models leverage Doob's \(h\)-transform to construct
stochastic transports between arbitrary endpoint distributions, and have
shown strong potential in image-to-image translation and restoration.
However, most existing bridge models rely on hard endpoint conditioning,
which forces the terminal state to match a prescribed target exactly. This
hard constraint induces terminal-boundary singularities: the terminal law
collapses to a Dirac measure, and the resulting drift coefficients become
ill-conditioned near the endpoint. In this paper, we propose Soft
Denoising Diffusion Bridge Models (SDDBMs), a generalized framework that
regularizes diffusion bridges directly at the level of their terminal
constraints. Instead of imposing an exact endpoint, SDDBMs prescribe a
non-degenerate Gaussian terminal marginal under the transformed path
measure, with a flexible terminal center and variance. Starting from this prescribed marginal, we develop a complete closed-form
construction of the soft bridge, including the Gaussian terminal
reweighting and soft \(h\)-function, the induced Gaussian forward
marginals and \(\mathbf{x}_0\)-free dynamics.
Theoretically, SDDBMs provide a unified probabilistic
perspective that encompasses existing diffusion bridge models, including
DDBMs, GOUB, and UniDB, as special cases under specific parameter choices.
Extensive experiments on image restoration tasks demonstrate that SDDBMs
achieve improved numerical stability and superior generation quality over
existing bridge-based methods.
\end{abstract}

\section{Introduction}
Diffusion models have emerged as a powerful class of generative models,
achieving remarkable success in image synthesis, image restoration, video
generation, and other domains~\citep{sohl2015deep,ho2020denoising,song2021scorebased,dhariwal2021diffusion,rombach2022high,podell2024sdxl}. 
Their standard formulation gradually transports data to a simple Gaussian
prior and learns the reverse denoising process. While this paradigm is
highly effective for unconditional generation, it is less natural for tasks
that require transporting between two structured distributions, such as
image-to-image translation and restoration. In such settings, both
endpoints carry semantic and structural information, and treating one side
as pure Gaussian noise may discard the intrinsic correspondence between the
source and target domains.

Diffusion bridge models provide a principled alternative by constructing a
stochastic process that connects two endpoint distributions. Denoising
Diffusion Bridge Models (DDBMs)~\citep{zhou2024denoising} apply Doob's
\(h\)-transform to build bridges between paired source and target domains,
thereby enabling direct distribution-to-distribution transport. GOUB
~\citep{yue2023image} further extends this idea by combining Doob's
\(h\)-transform with a generalized Ornstein--Uhlenbeck process, whose
mean-reverting structure is particularly suitable for image restoration.
These bridge-based models have demonstrated strong empirical performance
on image-to-image tasks by explicitly incorporating endpoint information
into the diffusion dynamics.

Despite their effectiveness, most existing Doob's \(h\)-transform bridges
rely on hard endpoint conditioning: the terminal state of the process,
denoted by \(\mathbf{x}_T\), is forced to coincide with a prescribed target
endpoint \(\mathbf{x}^{\star}\).
This hard
constraint is intuitive, but it also leads to terminal-boundary
singularities. For example, the bridge reverse process in DDBMs is only
defined up to \(T-\epsilon\), and practical sampling requires approximating
the terminal state near \(T\) rather than starting exactly from the
terminal boundary. 
Although this endpoint singularity has been
observed in existing bridge formulations, its probabilistic origin remains
insufficiently understood. In particular, it is still unclear from the
viewpoint of terminal distributions why such singularities arise, whether
they can be avoided, and what form of relaxation is required to remove
them.

A closely related work is UniDB~\citep{zhu2025unidb}, which revisits
diffusion bridges from the perspective of stochastic optimal control. By
introducing a finite terminal penalty coefficient, UniDB relaxes the
infinitely strict endpoint matching of Doob's \(h\)-transform and derives a
stabilized analytical control term. This provides an important alternative
viewpoint: hard endpoint bridges are recovered as the limiting case where
the terminal penalty tends to infinity. However, UniDB is primarily derived
from an optimal-control objective, rather than by explicitly specifying a
smooth terminal reweighting under an \(h\)-transformed path measure. 
Consequently, while the method 
could mitigate singularities, the probabilistic picture underlying UniDB is not very clear.


In this paper, we propose Soft Denoising Diffusion Bridge Models (SDDBMs), schematically illustrated in Figure~\ref{fig:framework},
to address the terminal singularities of hard-constrained diffusion
bridges from the level of terminal marginals.
Our key insight is that a finite-coefficient, \(\mathbf{x}_0\)-free affine
bridge dynamics necessarily induces a smooth, bounded, non-degenerate
Gaussian terminal distribution that still retains dependence on the source
state \(\mathbf{x}_0\). Therefore, exact source forgetting or hard endpoint
collapse can only be achieved through a blow-up of the drift
coefficients.
To avoid this numerical singularity at its source, we replace the hard
endpoint constraint with a soft terminal constraint. Specifically, instead
of enforcing \(\mathbf{x}_T=\mathbf{x}^{\star}\) exactly, we prescribe the
terminal marginal under the transformed path measure to be a
non-degenerate Gaussian distribution,
$\mathbf{x}_T
\sim
\mathcal N\left(
\mathbf{x}^{\ast},
\sigma^2\mathbf I
\right)$,
where the auxiliary terminal center
\(\mathbf{x}^{\ast}\) is allowed to differ
slightly from the desired target \(\mathbf{x}^{\star}\) and to retain a
controlled dependence on \(\mathbf{x}_0\).
Our theoretical analysis shows that this prescribed terminal marginal can
be exactly realized by Doob's \(h\)-transform through a suitable Gaussian
terminal reweighting. This construction yields a closed-form soft
\(h\)-function, a nonsingular analytical drift correction, and closed-form
Gaussian forward marginals. It further enables a well-posed
\(\mathbf{x}_0\)-free reformulation of the bridge dynamics. These
ingredients form a complete training and sampling framework for SDDBMs,
while also providing a unified probabilistic perspective on existing
diffusion bridge models, including DDBMs, GOUB, and UniDB.

Our main contributions are summarized as follows.

\begin{itemize}

\item We prove that a finite-coefficient, \(\mathbf{x}_0\)-free affine bridge
dynamics necessarily induces a smooth, bounded, non-degenerate Gaussian
terminal distribution that still retains dependence on the source state
\(\mathbf{x}_0\). Consequently, within this class of dynamics, exact source
forgetting or hard endpoint collapse requires the drift coefficients to
blow up near the terminal time. This characterizes the numerical
singularities of hard endpoint bridges from the perspective of their
terminal distributions.

\item To overcome the singularities of hard endpoint bridges, we introduce soft terminal constraints by prescribing a non-degenerate Gaussian terminal
marginal. Starting from this prescribed marginal, we derive the
corresponding Gaussian \(h\)-function in closed form, which further yields
closed-form forward marginals. These
results form the basis of the complete SDDBM framework, including both the
original \(\mathbf x_0\)-conditioned soft bridge and its
\(\mathbf x_0\)-free reformulation.

\item  We show that the proposed SDDBM framework provides a unified
perspective on several existing diffusion bridge models, including DDBMs,
GOUB, and UniDB. In particular, although UniDB is derived from the
perspective of stochastic optimal control, our framework gives it an
explicit probabilistic interpretation in terms of Gaussian terminal
reweighting.

\item Our model achieves state-of-the-art performance on various image-to-image tasks, including deraining, super-resolution, and inpainting.
\end{itemize}

\section{Preliminaries}
\label{eq:preliminaries}

\subsection{\texorpdfstring{Doob's $h$-transform}{Doob's h-transform}}
\label{sec:doob_h_transform}
Doob's $h$-transform~\citep{sarkka2019applied,leonard2011stochastic} is a classical technique for constructing conditioned or reweighted diffusion processes via a {harmonic positive function}. Let us consider the reference diffusion process governed by the following stochastic differential equation:
\begin{equation}
    \mathrm{d}\mathbf{x}_t
    =
    \mathbf{f}(\mathbf{x}_t,t) \mathrm{d}t
    +
    g_t \mathrm{d}\mathbf{w}_t,
    \qquad 0 \le t \le T,
\end{equation}
where $\mathbf{x}_t \in \mathbb{R}^d$, $\mathbf{f}: \mathbb{R}^d \times [0,T] \rightarrow \mathbb{R}^d$ represents the drift coefficient, $g: [0,T] \rightarrow \mathbb{R}$ is the scalar diffusion coefficient, and $\mathbf{w}_t$ denotes a standard $d$-dimensional Wiener process. 
We assume that the reference process admits a well-defined transition density $p(\mathbf{x}_t,t \mid \mathbf{x}_s,s)$ for $0 \le s < t \le T$.

For a given positive harmonic function $h$, the Doob's $h$-transform elegantly modifies the path measure of the reference process. Analogous to Bayes' rule in continuous time, the marginal density of the $h$-transformed process for $0 < t \le T$ is obtained by simply reweighting the reference density:
\begin{equation}
    p_h(\mathbf{x},t \mid \mathbf{x}_0,0)
    =
    \frac{
    p(\mathbf{x},t \mid \mathbf{x}_0,0)h(\mathbf{x},t)
    }
    {
    h(\mathbf{x}_0,0)
    }.
    \label{eq:general_h_marginal}
\end{equation}

Provided that $h$ is sufficiently smooth, the transformed diffusion can be characterized by an SDE with an adjusted drift:
\begin{equation}
    \mathrm{d}\mathbf{x}_t
    =
    \left[
    \mathbf{f}(\mathbf{x}_t,t)
    +
    g_t^2\nabla_{\mathbf{x}_t}\log h(\mathbf{x}_t,t)
    \right] \mathrm{d}t
    +
    g_t \mathrm{d}\mathbf{w}_t,
    \qquad t < T.
    \label{eq:general_h_sde}
\end{equation}
To encode a soft terminal constraint, we introduce a nonnegative terminal weight function $q:\mathbb{R}^d \to [0,\infty)$. For $0 \le t < T$, we define the corresponding $h$-function as {the conditional expectation of the terminal weight}, which naturally satisfies the boundary condition $h(\mathbf{x},T)=q(\mathbf{x})$. Substituting this $h$-function into Eq.~(\ref{eq:general_h_marginal}) yields the modified terminal marginal density. This definition and its corresponding density are expressed jointly as:

\begin{equation}
    h(\mathbf{x},t)
    =
    \int_{\mathbb{R}^d}
    p(\mathbf{z},T \mid \mathbf{x},t)q(\mathbf{z}) \, \mathrm{d}\mathbf{z}, \qquad 
    p_h(\mathbf{x},T \mid \mathbf{x}_0,0)
    =
    \frac{
    p(\mathbf{x},T \mid \mathbf{x}_0,0)q(\mathbf{x})
    }
    {
    h(\mathbf{x}_0,0)
    }.
    \label{eq:soft_h_function_terminal_density}
\end{equation}

Consequently, a soft terminal constraint does not arbitrarily replace the reference terminal distribution; rather, it reweights the reference terminal density in an absolutely continuous manner.

In contrast, a hard endpoint bridge imposes a singular terminal constraint, ensuring the process reaches a prescribed state $\mathbf{x}^\star$ exactly at time $T$. Formally, this corresponds to replacing $q$ with the Dirac measure $\delta(\mathbf{z} - \mathbf{x}^\star)$. For $t < T$, the resulting hard $h$-function simplifies to the transition density of the reference process, given by $h_{\mathrm{hard}}(\mathbf{x},t)=p(\mathbf{x}^\star,T \mid \mathbf{x},t)$, which leads to the time-$t$ marginal density of $p_{h_{\mathrm{hard}}}(\mathbf{x},t \mid \mathbf{x}_0,0; \mathbf{x}^\star,T)=\frac{p(\mathbf{x},t \mid \mathbf{x}_0,0)p(\mathbf{x}^\star,T \mid \mathbf{x},t)}{p(\mathbf{x}^\star,T \mid \mathbf{x}_0,0)}$ for $0<t<T$. As $t \to T$, the marginal distribution of the hard bridge degenerates to the Dirac measure $\delta(\mathbf{x}_T - \mathbf{x}^\star)$.

\subsection{Generalized Ornstein-Uhlenbeck Bridge}
\label{sec:goub}
The Generalized Ornstein--Uhlenbeck (GOU) process is a flexible
mean-reverting diffusion process that extends the classical
Ornstein--Uhlenbeck process to time-varying coefficients
\citep{ahmad1988introduction}. It is governed by the SDE:
\begin{equation}
    \mathrm{d}\mathbf{x}_t
    =
    \theta_t(\boldsymbol{\mu}-\mathbf{x}_t)\,\mathrm{d} t
    +
    g_t\,\mathrm{d}\mathbf{w}_t
\end{equation}
where \(\boldsymbol{\mu}\in\mathbb{R}^d\) denotes a given state vector,
\(\theta_t>0\) is a scalar drift coefficient, \(g_t\) is the
diffusion coefficient, and \(\mathbf{w}_t\) is a standard \(d\)-dimensional
Wiener process.
Since the drift is affine in \(\mathbf{x}_t\), the GOU process admits a
closed-form transition density. In particular, for any \(0\le s<t\),
$
p(\mathbf{x}_t,t\mid \mathbf{x}_s,s)
=
\mathcal{N}
\left(
\mathbf{x}_t;
\bar{\boldsymbol{\mu}}_{s:t},
\bar{\sigma}_{s:t}^2\mathbf I
\right)$,
where
\begin{equation}
    \bar{\boldsymbol{\mu}}_{s:t}
    =
    e^{-\bar{\theta}_{s:t}}\mathbf{x}_s
    +
    \boldsymbol{\mu}
    \left(
    1-e^{-\bar{\theta}_{s:t}}
    \right),
    \qquad \bar{\sigma}_{s:t}^2
    =
    \int_s^t
    e^{-2\bar{\theta}_{u:t}}
    g_u^2\,du,\qquad
    \bar{\theta}_{s:t}
    :=
    \int_s^t\theta_u\,du.
    \label{eq:gou_transition_variance_mean}
\end{equation}
Under the common parametrization
\(g_t^2=2\lambda^2\theta_t\), where \(\lambda>0\) is a constant, the
transition variance reduces to
$   \bar{\sigma}_{s:t}^2
    =
    \lambda^2
    \left(
    1-e^{-2\bar{\theta}_{s:t}}
    \right)$.
Consequently, if the accumulated mean-reversion satisfies
\(\bar{\theta}_{0:t}\to\infty\) as \(t\to\infty\), then the marginal
distribution converges to
$\mathcal{N}(\boldsymbol{\mu},\lambda^2\mathbf I)$.
Thus, \(\lambda^2\) plays the role of the limiting covariance of
the process.
A simple proof for these properties is provided in Appendix~\ref{sec:gou_p}.

The Generalized Ornstein-Uhlenbeck Bridge (GOUB) \citep{yue2023image} is a diffusion bridge model constructed by applying Doob's $h$-transform to the GOU process, serving as a generalization of DDBMs \citep{zhou2024denoising}. Despite its remarkable performance across various image-to-image translation tasks, its reverse-time formulation faces critical numerical challenges. 
Specifically, the reverse-time SDE of GOUB is governed by the following equation:
\begin{equation}
    \mathrm{d} \mathbf{x}_t = \left[ \left( \theta_t + g_t^2 \frac{e^{-2\bar{\theta}_{t:T}}}{\bar{\sigma}^2_{t:T}} \right) (\mathbf{x}^{\star} - \mathbf{x}_t) - g_t^2 \nabla_{\mathbf{x}_t} \log {p_h(\mathbf{x}_t,t \mid \mathbf{x}^{\star},T)} \right] \mathrm{d} t + g_t \mathrm{d}\bar{\mathbf w}_t.
    \label{eq:goub_reverse}
\end{equation}
However, at the terminal boundary $t=T$, Eq.~(\ref{eq:goub_reverse}) suffers from severe numerical singularities. 
First, 
the linear analytical drift diverges because the fraction $\frac{e^{-2\bar{\theta}_{t:T}}}{\bar{\sigma}^2_{t:T}}$ explodes 
as 
$\bar{\sigma}^2_{t:T} \rightarrow 0$.
Simultaneously, the score function ${\nabla_{\mathbf{x}_t}}\log {p_h(\mathbf{x}_t,t \mid \mathbf{x}^{\star},T)}$ becomes singular, because the conditional distribution inherently collapses to a Dirac distribution at the terminal time. Consequently, these interacting singularities prevent direct numerical sampling starting exactly from $t=T$.

UniDB~\citep{zhu2025unidb} revisits diffusion bridges from the perspective
of stochastic optimal control and introduces a stabilized reverse-time
process:
\begin{equation}
    \mathrm{d} \mathbf{x}_t
    =
    \left[
    \left(
    \theta_t
    +
    g_t^2
    \frac{
    e^{-2\bar{\theta}_{t:T}}
    }
    {
    \kappa^{-1}+\bar{\sigma}_{t:T}^2
    }
    \right)
    \left(
    \mathbf{x}^{\star}-\mathbf{x}_t
    \right)
    -
    g_t^2
    \nabla_{\mathbf{x}_t} \log {p_h(\mathbf{x}_t,t \mid \mathbf{x}^{\star},T) }
    \right]\mathrm{d} t
    +
    g_t \mathrm{d}\bar{\mathbf w}_t,
    \label{eq:unidb_reverse}
\end{equation}
where \(\kappa>0\) is the terminal penalty coefficient. For finite \(\kappa\), the
additional \(\kappa^{-1}\) term prevents the analytical control component
from diverging as \(\bar{\sigma}_{t:T}^2\to0\), balancing endpoint accuracy against drift stability. However, UniDB is primarily derived from the perspective of stochastic optimal control (SOC) and does not explicitly formulate this finite-penalty
mechanism as a smooth terminal reweighting under Doob's \(h\)-transform, leaving the induced terminal marginal distribution implicit. It is also worth noting that, although $\kappa$ regularizes the analytical drift away from the hard-endpoint singularity, the learned score in UniDB is implemented through a noise-parameterized form 
$\mathbf{s}_\theta(\mathbf{x}_t, \mathbf{x}^{\star}, t) = - \boldsymbol{\epsilon}_\theta(\mathbf{x}_t, \mathbf{x}^{\star}, t) / \bar{\sigma}'_t$, 
where $\mathbf{s}_\theta$ denotes the learned conditional score, $\boldsymbol{\epsilon}_\theta$ is the corresponding noise-prediction network, and $\bar{\sigma}'_t$ is the bridge noise scale used in UniDB.
Since \(\bar{\sigma}'_{t}\) vanishes at the endpoints, the
endpoint behavior of this parameterization still requires careful boundary
treatment in implementation.

\section{Soft-Constrained Diffusion Bridges}
\begin{figure}[ht]
    \centering
    \includegraphics[width=1.0\textwidth]{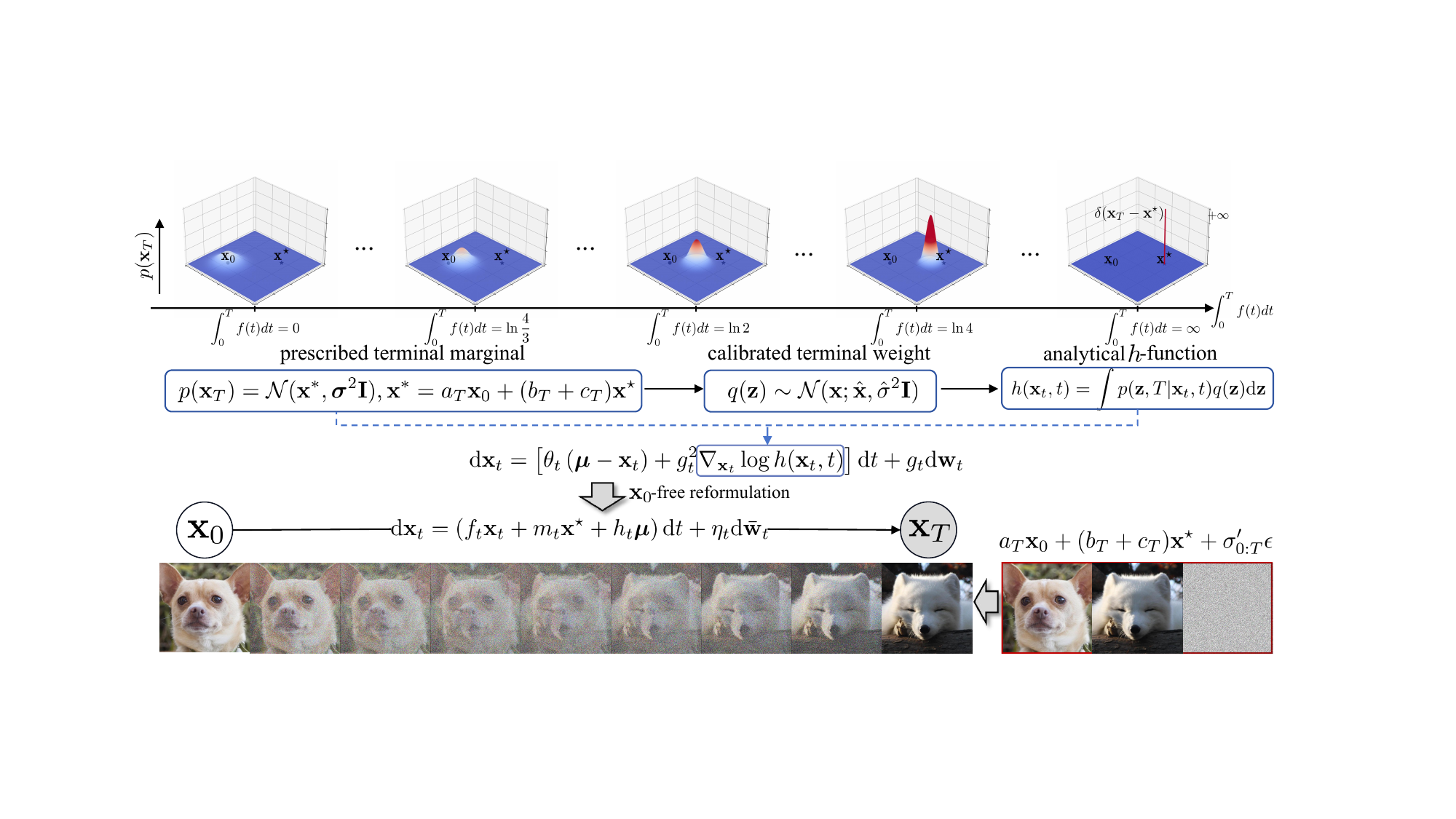}

\caption{To expose the singularity of hard endpoint constraints, we consider a particular linear SDE: $\mathrm{d}\mathbf{x}_t
=
f(t)(\mathbf{x}^{\star}-\mathbf{x}_t)\mathrm{d}t
+
\mathrm{d}\mathbf{w}_t$, with terminal distribution $P(\mathbf{x}_T)$. As the endpoint constraint becomes increasingly strict, the terminal distribution concentrates around $\mathbf{x}^\star$ and eventually degenerates to $\delta(\mathbf{x}_T-\mathbf{x}^\star)$, while the drift coefficient $f(t)$ becomes unbounded near $t=T$ (see proposition~\ref{thm:affine_nonsingular_terminal} for a rigorous general result).  To avoid this degeneracy, SDDBMs impose a soft terminal constraint by prescribing a non-degenerate Gaussian marginal $p(\mathbf{x}_T)$. This prescribed marginal uniquely induces a calibrated Gaussian terminal weight $q(\mathbf{z})$, from which we obtain an analytical soft \(h\)-function through terminal reweighting. The resulting Doob's \(h\)-transformed bridge has nonsingular drift corrections and closed-form Gaussian forward marginals, and can be further rewritten into an equivalent \(\mathbf{x}_0\)-free formulation for flexible reverse-time parameterization.
}
    \label{fig:framework}
\end{figure}
\subsection{Terminal Regularity of Finite-Coefficient Affine Bridges}
We now ask what kind of terminal distribution can be generated by a
finite-time bridge dynamics if its drift remains numerically regular up to
the terminal time. To make this question concrete, we focus on affine
diffusion dynamics of the form
$\mathrm{d}\mathbf x_t=(A_t\mathbf x_t+\mathbf b_t)\mathrm{d}t+g_t \mathrm{d}\mathbf W_t$.

To ensure numerical regularity and rule out singularities, we assume the $\mathbf{x}_0$-independent coefficients collectively satisfy
$0<\int_0^T g_t^2\,\mathrm{d}t<\infty,\,
\sup_{0\le t\le T}\|A_t\|<\infty,\,
\sup_{0\le t\le T}\|\mathbf b_t\|<\infty$.
These conditions are standard and widely adopted in diffusion models~\citep{ho2020denoising,song2021scorebased}. We relegate the complete setting description, detailed formulations of these assumptions, and their full justifications to Appendix~\ref{sec:full_assumptions_and_justifications}.

Under these assumptions, the natural question is: what restrictions are
imposed on the terminal law? In particular, can such a regular
\(x_0\)-free affine dynamics produce a hard endpoint distribution, or can
it completely erase the dependence on \(\mathbf x_0\) in finite time? 
The following standard result, adapted from classical linear SDE theory~\citep{sarkka2019applied, oksendal2013stochastic}, formally rules out both possibilities. 
A finite-coefficient affine
dynamics necessarily yields a non-degenerate Gaussian terminal law with a
smooth, continuous, and bounded density, and its terminal mean still
retains a nonzero dependence on the source state \(\mathbf x_0\).

\begin{restatable}{proposition}{thmterminaldistribution}
\label{thm:affine_nonsingular_terminal}
Consider the affine diffusion dynamics governed by the SDE $\mathrm{d}\mathbf{x}_t
    =
    (A_t\mathbf{x}_t + \mathbf{b}_t)\mathrm{d}t
    +
    g_t \mathrm{d}\mathbf{W}_t$ on a finite time interval $[0,T]$,
where $A: [0,T] \to \mathbb{R}^{d \times d}$, $\mathbf{b}: [0,T] \to \mathbb{R}^d$, and $g: [0,T] \to \mathbb{R}$ are deterministic, time-dependent coefficients. Notably, for any time $t$, these parameters are strictly independent of the initial state $\mathbf{x}_0$, the current state $\mathbf{x}_t$, and the driving Brownian motion.
Assume that 
\begin{equation}
    \sup_{0\le t\le T}\|A_t\|<\infty,
    \qquad
    \sup_{0\le t\le T}\|\mathbf b_t\|<\infty,\qquad 
0<\int_0^T g_t^2\mathrm{d}t<\infty.
    \label{eq:bounded_affine_coefficients}
\end{equation}
Let \(\Phi(t,s)\) be the fundamental matrix associated with \(A_t\), defined
by
$\Phi(s,s)=I,\frac{\partial}{\partial t}\Phi(t,s)
    =
    A_t\Phi(t,s)$.
Then the terminal law of \(\mathbf x_T\) is a non-degenerate Gaussian:
    $\mathbf x_T\mid \mathbf x_0
    \sim
    \mathcal N
    \left(
    \mathbf m_T(\mathbf x_0),
    \Sigma_T
    \right)$
where
$\mathbf m_T(\mathbf x_0)
    =
    \Phi(T,0)\mathbf x_0
    +
    \int_0^T \Phi(T,s)\mathbf b_s\, \mathrm{d}s,
    \Sigma_T
    =
    \int_0^T
    g_s^2\Phi(T,s)\Phi(T,s)^\top \,\mathrm{d}s$.
Here, $\Phi(T,0)$ is invertible and \(\Sigma_T\) is positive definite.
Consequently, \(p(\mathbf x,T\mid \mathbf x_0,0)\) is a smooth, continuous,
and bounded density. Moreover, 
$\mathbf m_T(\mathbf x_0)\neq \mathbf m_T(\mathbf y_0)$
for any
\(\mathbf x_0\neq\mathbf y_0\).
\end{restatable}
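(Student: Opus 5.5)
The approach is variation of constants for linear SDEs. First I would establish that the fundamental matrix $\Phi(t,s)$ exists, is continuous, and is invertible. Since $\sup_t\|A_t\|<\infty$ (I also assume measurability), the matrix ODE $\partial_t\Phi(t,s)=A_t\Phi(t,s)$ has a unique absolutely continuous solution by Carathéodory/Picard theory. Gronwall gives $\|\Phi(t,s)\|\le e^{\int_s^t\|A_u\|du}\le e^{T\sup\|A\|}$. Invertibility comes from Liouville's formula,
\[
\det\Phi(t,s)=\exp\Bigl(\int_s^t \operatorname{tr}A_u\,\mathrm{d}u\Bigr)>0 .
\]
Equivalently, I could exhibit the inverse $\Psi(t,s)$ solving $\partial_t\Psi=-\Psi A_t$ and check that $\frac{d}{dt}(\Psi\Phi)=0$. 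As a consequence, $\Phi(T,0)$ is invertible and $\Phi(T,s)=\Phi(T,0)\Phi(0,s)$ is invertible for every $s$.

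Next I would derive the explicit solution. Apply Itô's product rule to $\Phi(0,t)\mathbf x_t=\Phi(t,0)^{-1}\mathbf x_t$, which is valid because $\Phi(0,\cdot)$ is deterministic and of bounded variation. This gives $\mathrm{d}(\Phi(0,t)\mathbf x_t)=\Phi(0,t)\mathbf b_t\,\mathrm{d}t+g_t\Phi(0,t)\,\mathrm{d}\mathbf W_t$. Integrating and multiplying by $\Phi(T,0)$ yields
\[
\mathbf x_T=\Phi(T,0)\mathbf x_0+\int_0^T\Phi(T,s)\mathbf b_s\,\mathrm{d}s+\int_0^T g_s\Phi(T,s)\,\mathrm{d}\mathbf W_s .
\]
The Lebesgue integral is finite because $\Phi$ and $\mathbf b$ are bounded. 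The stochastic integral is a Wiener integral of a deterministic integrand with $\int_0^T g_s^2\|\Phi(T,s)\|^2ds<\infty$. Such a Wiener integral is centered Gaussian, and by the Itô isometry its covariance is $\Sigma_T=\int_0^T g_s^2\Phi(T,s)\Phi(T,s)^\top ds$. Conditionally on $\mathbf x_0$ (independent of $\mathbf W$), this gives $\mathcal N(\mathbf m_T(\mathbf x_0),\Sigma_T)$. Pathwise uniqueness of the strong solution follows from the Lipschitz property of the affine drift, so this representation identifies the law.

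The main step is positive definiteness of $\Sigma_T$. For $v\neq 0$ I would write $v^\top\Sigma_T v=\int_0^T g_s^2\|\Phi(T,s)^\top v\|^2ds$. Since $\Phi(T,s)^\top$ is invertible and $\Phi(T,s)^{-1}=\Phi(s,T)$ is uniformly bounded by the Gronwall bound (say by $C$), we get $\|\Phi(T,s)^\top v\|\ge \|v\|/C$. Hence $v^\top\Sigma_Tv\ge C^{-2}\|v\|^2\int_0^Tg_s^2ds>0$, using the hypothesis $\int_0^Tg^2>0$. The obstacle is only to get this \emph{uniform} lower bound rather than pointwise invertibility: it requires bounding $\|\Phi(s,T)\|\le e^{T\sup\|A\|}$ by applying the same Gronwall argument backward in time.

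Finally, since $\Sigma_T\succ0$, the Gaussian density $(2\pi)^{-d/2}\det\Sigma_T^{-1/2}\exp(-\tfrac12(\mathbf x-\mathbf m_T)^\top\Sigma_T^{-1}(\mathbf x-\mathbf m_T))$ exists. It is $C^\infty$ and bounded by $(2\pi)^{-d/2}\det\Sigma_T^{-1/2}$. Injectivity of the mean follows because $\mathbf m_T(\mathbf x_0)-\mathbf m_T(\mathbf y_0)=\Phi(T,0)(\mathbf x_0-\mathbf y_0)\neq0$, by the invertibility of $\Phi(T,0)$ shown in the first step.
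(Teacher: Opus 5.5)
Your proposal is correct and follows essentially the same route as the paper. Both use Itô's product rule applied to $\Phi(t,0)^{-1}\mathbf x_t$ to get the variation-of-constants formula, identify the Gaussian law via the Wiener integral and Itô isometry, and use the uniform Gronwall bound on $\|\Phi(s,T)\|=\|\Phi(T,s)^{-1}\|$ to obtain $v^\top\Sigma_T v\ge e^{-2T\sup_t\|A_t\|}\|v\|^2\int_0^T g_s^2\,\mathrm{d}s>0$. Your additions (Liouville's formula for the invertibility of $\Phi$, measurability, and pathwise uniqueness to identify the law) only make explicit steps that the paper asserts without proof.
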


The proof of Proposition~\ref{thm:affine_nonsingular_terminal} is provided in Appendix~\ref{sec:thm_affine_nonsingular_terminal} for completeness. According to this proposition, conventional diffusion bridges suffer from singularities because they force the system to simultaneously achieve absolute noise elimination by collapsing to a fixed endpoint, and absolute source forgetting by entirely erasing the memory of $\mathbf{x}_0$ within a finite time horizon. We discuss these singularities in detail in Section~\ref{sec-remove-xzero}.


\subsection{Regularized Diffusion Bridges with Soft Terminal Constraints}
Motivated by Proposition~\ref{thm:affine_nonsingular_terminal}, we regularize
the bridge at the level of the terminal constraint. Rather than enforcing
the endpoint through a degenerate Dirac measure, we prescribe the terminal
marginal of the transformed process as a non-degenerate Gaussian:
\begin{align}
\label{eq-terminal-marginal}
p_{h_{\mathrm{soft}}}(\mathbf{x}_T,T\mid \mathbf{x}_0,0)
=
\mathcal{N}
\left(
\mathbf{x}_T;
\mathbf{x}^{\ast},
\sigma^2\mathbf I
\right).
\end{align}
Here, \(\mathbf{x}^{\ast}\) is a soft terminal center associated with the
desired target endpoint \(\mathbf{x}^{\star}\). In contrast to the hard
bridge, which conditions the process to terminate exactly at
\(\mathbf{x}^{\star}\), the proposed formulation assigns a smooth terminal
{density} to each path. Trajectories whose endpoints lie close to
\(\mathbf{x}^{\ast}\) receive higher probability 
{density}, whereas those ending
farther away are smoothly down-weighted. Since \(\mathbf{x}^{\ast}\) is
chosen to remain close to \(\mathbf{x}^{\star}\), this terminal 
{soft terminal constraint} still guides the diffusion process toward the target, while avoiding the
singular behavior caused by conditioning on an exact endpoint.

The parameter \(\sigma\) controls the strength of the terminal constraint.
A smaller \(\sigma\) yields a sharper terminal distribution and therefore a
stronger preference for endpoints near \(\mathbf{x}^{\ast}\), while any
finite \(\sigma>0\) keeps the terminal constraint non-degenerate. In the
limit \(\sigma\to0\), the Gaussian terminal marginal converges weakly to a
Dirac measure at \(\mathbf{x}^{\ast}\); when \(\mathbf{x}^{\ast}\) is set to
\(\mathbf{x}^{\star}\), this recovers the exact hard-endpoint bridge.
Thus, the proposed construction can be viewed as a continuous relaxation of
the hard bridge, replacing an infinitely strict endpoint constraint with a
finite, smooth terminal penalty.

Allowing \(\mathbf x^\ast\) to differ from \(\mathbf x^\star\) is
particularly useful for the \(\mathbf x_0\)-free reformulation, where this
extra degree of freedom helps keep the resulting drift correction
well-conditioned. We discuss the construction of \(\mathbf x^\ast\) in
Sec.~\ref{sec-remove-xzero}.

\subsection{\texorpdfstring{Terminal-Guided $h$-Function and Forward Process}{Terminal-Guided h-Function and Forward Process}}

In this section, we derive the soft \(h\)-function
\(h_{\mathrm{soft}}(\mathbf{x}_t,t)\) that ensures the transformed
process attains the prescribed terminal marginal in Eq.~(\ref{eq-terminal-marginal}).
Using this \(h\)-function, we obtain closed-form expressions for the
time-\(t\) marginals of the forward process, which in turn provide
explicit drift corrections for the reverse-time SDE. 

The following theorem derives the exact form of the terminal weighting in the \(h\)-function directly from the prescribed terminal marginals under the \(h\)-transformed measure.

\begin{restatable}{theorem}{thmphkey}
\label{thm-ph-key}
Consider the GOU process initialized at \(\mathbf{x}_0\), whose terminal transition density is
$p(\mathbf x,T\mid \mathbf{x}_0,0)
=
\mathcal N
\left(
\mathbf x;
\bar{\boldsymbol{\mu}}_{0:T},
\bar{\sigma}_{0:T}^2\mathbf I
\right)$,
where $\bar{\boldsymbol{\mu}}_{0:T}$ and $\bar{\sigma}_{0:T}^2$ denote the conditional mean and variance of the state at time $T$ given $\mathbf{x}_0$, respectively. 
Let \(q\) be a nonnegative probability density and define
\begin{align}\label{eq:h_function_new}
h(\mathbf{x},t)
=
\int p(\mathbf z,T\mid \mathbf{x},t)q(\mathbf z)\,\mathrm{d} \mathbf z.
\end{align}
Assume that the terminal marginal under this transformed measure is
$p_h(\mathbf x,T\mid\mathbf{x}_0,0)
=
\mathcal N(\mathbf x;\mathbf{x}^\ast,\sigma^2\mathbf I)$
for some \(0<\sigma^2<\bar{\sigma}_{0:T}^2\). Then \(q\) is uniquely
determined up to almost-everywhere equality and is given by
$q(\mathbf x)
=
\mathcal N(\mathbf x;\hat{\mathbf{x}},\hat{\sigma}^2\mathbf I)$,
where
\begin{align}\label{eq-xhat-sigmahat}
\hat{\mathbf{x}}
=
\frac{
\bar{\sigma}_{0:T}^2
}
{
\bar{\sigma}_{0:T}^2-\sigma^2
}
\mathbf{x}^\ast
-
\frac{
\sigma^2
}
{
\bar{\sigma}_{0:T}^2-\sigma^2
}
\bar{\boldsymbol{\mu}}_{0:T},
\qquad 
\hat{\sigma}^2
=
\frac{
\sigma^2\bar{\sigma}_{0:T}^2
}
{
\bar{\sigma}_{0:T}^2-\sigma^2
}.
\end{align}
\end{restatable}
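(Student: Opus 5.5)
The plan is to use the terminal-density identity in Eq.~(\ref{eq:soft_h_function_terminal_density}). Since $p(\mathbf z,T\mid\mathbf x,T)$ is a Dirac mass, $h(\mathbf x,T)=q(\mathbf x)$, and therefore
\begin{equation*}
\mathcal N(\mathbf x;\mathbf x^\ast,\sigma^2\mathbf I)
=
\frac{\mathcal N(\mathbf x;\bar{\boldsymbol\mu}_{0:T},\bar\sigma_{0:T}^2\mathbf I)\,q(\mathbf x)}{h(\mathbf x_0,0)}
\end{equation*}
for almost every $\mathbf x$. The constant $h(\mathbf x_0,0)=\int p(\mathbf z,T\mid\mathbf x_0,0)q(\mathbf z)\,\mathrm d\mathbf z$ depends only on the fixed $\mathbf x_0$, and it is strictly positive and finite. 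Positivity holds because $q$ is a probability density and the Gaussian is positive everywhere. Finiteness holds because the Gaussian is bounded and $q$ integrates to one. The Gaussian $p(\cdot,T\mid\mathbf x_0,0)$ is strictly positive, so we can solve for $q$ explicitly:
\begin{equation*}
q(\mathbf x)=C\,\frac{\mathcal N(\mathbf x;\mathbf x^\ast,\sigma^2\mathbf I)}{\mathcal N(\mathbf x;\bar{\boldsymbol\mu}_{0:T},\bar\sigma_{0:T}^2\mathbf I)}
\quad\text{a.e., with } C=h(\mathbf x_0,0)>0.
\end{equation*}
This already determines $q$ up to the scalar $C$ and up to null sets.

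Next I would compute the ratio by completing the square in the exponent. The quadratic coefficient is $\tfrac12(\sigma^{-2}-\bar\sigma_{0:T}^{-2})\|\mathbf x\|^2$, which equals $\|\mathbf x\|^2/(2\hat\sigma^2)$ with $\hat\sigma^{-2}=\sigma^{-2}-\bar\sigma_{0:T}^{-2}$. The hypothesis $\sigma^2<\bar\sigma_{0:T}^2$ is used here: it makes this coefficient positive, so the ratio is an integrable Gaussian shape rather than a growing exponential. Inverting the coefficient gives $\hat\sigma^2=\sigma^2\bar\sigma_{0:T}^2/(\bar\sigma_{0:T}^2-\sigma^2)$.

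For the linear term, the coefficient of $\mathbf x$ in the exponent is $\mathbf x^\ast/\sigma^2-\bar{\boldsymbol\mu}_{0:T}/\bar\sigma_{0:T}^2$. Setting this equal to $\hat{\mathbf x}/\hat\sigma^2$ and multiplying by $\hat\sigma^2$ gives exactly the stated $\hat{\mathbf x}$ in Eq.~(\ref{eq-xhat-sigmahat}). Hence $q(\mathbf x)=C'\,\mathcal N(\mathbf x;\hat{\mathbf x},\hat\sigma^2\mathbf I)$ for some constant $C'>0$, since all remaining factors do not depend on $\mathbf x$.

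Finally, the requirement that $q$ is a probability density forces $\int q=1$, so $C'=1$. This gives the claimed form and uniqueness a.e. For completeness, I would also verify consistency by computing $h(\mathbf x_0,0)=\mathcal N(\hat{\mathbf x};\bar{\boldsymbol\mu}_{0:T},(\bar\sigma_{0:T}^2+\hat\sigma^2)\mathbf I)$ via the Gaussian product identity and then applying Eq.~(\ref{eq:soft_h_function_terminal_density}). The standard product formula gives a Gaussian with precision $\bar\sigma_{0:T}^{-2}+\hat\sigma^{-2}=\sigma^{-2}$ and mean $\mathbf x^\ast$, which recovers the prescribed marginal exactly.

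I expect the main obstacle to be the careful justification that the identity at $t=T$ holds. This requires interpreting $h(\cdot,T)=q$ as the boundary value, and handling the fact that Eq.~(\ref{eq:soft_h_function_terminal_density}) is stated as a definition rather than derived. The completing-the-square algebra itself is routine.
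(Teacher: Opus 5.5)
Your proposal is correct and follows essentially the same route as the paper: you solve the terminal reweighting identity $p_h(\mathbf x,T\mid\mathbf x_0,0)=p(\mathbf x,T\mid\mathbf x_0,0)\,q(\mathbf x)/h(\mathbf x_0,0)$ for $q$ using strict positivity of the reference Gaussian, complete the square with $\sigma^2<\bar\sigma_{0:T}^2$ guaranteeing the positive precision $\hat\sigma^{-2}=\sigma^{-2}-\bar\sigma_{0:T}^{-2}$, and fix the constant by normalization of $q$. Your additional remarks are not in the paper's proof but are correct. These are that $h(\mathbf x_0,0)\in(0,\infty)$, and the Gaussian-product check that this $q$ indeed reproduces $\mathcal N(\mathbf x^\ast,\sigma^2\mathbf I)$; the latter confirms existence as well as uniqueness.
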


The proof of Theorem~\ref{thm-ph-key} is provided in Appendix~\ref{sec:thm_ph-key}.
It is important to note that the Gaussian weight
\(q\) should not be interpreted as the
terminal marginal distribution of the transformed process. Rather, it acts
as a terminal 
{weight}. Since the terminal marginal under the
\(h\)-transformed path measure satisfies
$p_h(\mathbf x,T\mid \mathbf{x}_0,0)
\propto
p(\mathbf x,T\mid \mathbf{x}_0,0)
q(\mathbf x)$,
centering \(q\) directly at
\(\mathbf{x}^\ast\) would only produce a precision-weighted interpolation
between the original terminal mean and \(\mathbf{x}^\ast\). To prescribe
a desired terminal marginal
$ \mathcal N(\mathbf x;\mathbf{x}^\ast,\sigma^2\mathbf I)$,
we instead calibrate the terminal 
{weight} by requiring
$q_{\hat{\mathbf{x}},\hat{\sigma}}(\mathbf x)
\propto
{\mathcal N(\mathbf x;\mathbf{x}^\ast,\sigma^2\mathbf I)}
/{p(\mathbf x,T\mid \mathbf{x}_0,0)}$.
For Gaussian \(p(\mathbf x,T\mid \mathbf{x}_0,0)\), this 
{weight} is again
Gaussian provided \(0<\sigma^2<\bar{\sigma}_{0:T}^2\), yielding Eq. (\ref{eq-xhat-sigmahat}).
Thus, \(\hat{\mathbf{x}}\) is a calibrated 
{weight} center rather than
the desired terminal mean itself.

The following theorem provides a fully explicit closed-form
expression for the soft \(h\)-function defined in
Eq.~(\ref{eq:h_function_new}). 
Furthermore, it demonstrates that this
\(h\)-function satisfies the Kolmogorov Backward Equation (KBE),
thereby linking the \(h\)-transform directly to the dynamics of the diffusion
process.

\begin{restatable}{theorem}{thmhfunction}
\label{thm:h_function}
Consider the transition kernel of the GOU process, given by 
\begin{align}\label{eq-def-pztxtt}
p(\mathbf{z}, T | \mathbf{x}_t, t) = \mathcal{N}(\mathbf{z}; \bar{\boldsymbol{\mu}}_{t:T}, \bar{\sigma}_{t:T}^2 \mathbf{I}),
\end{align}
where $\bar{\boldsymbol{\mu}}_{t:T}$ and $\bar{\sigma}_{t:T}^2$ denote the conditional mean and variance of the state at time $T$ given $\mathbf{x}_t$, respectively. Let $q(\mathbf{z}) := \mathcal{N}(\mathbf{z}; \hat{\mathbf{x}}, \hat{\sigma}^2 \mathbf{I})$ be a Gaussian density with parameters $\hat{\mathbf{x}} \in \mathbb{R}^d$ and $\hat{\sigma} > 0$. We define the function $h(\mathbf{x}_t, t)$ via the 
{integral} $h(\mathbf{x}_t, t) := \int p(\mathbf{z}, T | \mathbf{x}_t, t) q(\mathbf{z}) \,\mathrm{d} \mathbf z$. For any $t \in [0, T]$, $h(\mathbf{x}_t, t)$ admits the closed-form expression:
\begin{equation}
h(\mathbf{x}_t, t) = \frac{1}{[2\pi(\hat{\sigma}^2 + \bar{\sigma}^2_{t:T})]^{d/2}} \exp \left[ -\frac{\|\bar{\boldsymbol{\mu}}_{t:T} - \hat{\mathbf{x}}\|^2}{2(\hat{\sigma}^2 + \bar{\sigma}^2_{t:T})} \right].
\label{eq:h_function_new_trans}
\end{equation}
Furthermore, for any $t \in [0, T)$,
$h(\mathbf{x}_t, t)$ evolves according to the KBE: $-{\partial h(\mathbf{x}_t,t)}/{\partial t} = \theta_t(\boldsymbol{\mu} - \mathbf{x}_t) \cdot \nabla_{\mathbf{x}_t} h(\mathbf{x}_t,t) + \frac{1}{2}g^2_t \Delta_{\mathbf{x}_t} h(\mathbf{x}_t,t)$.
\end{restatable}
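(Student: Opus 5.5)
The closed form follows from the Gaussian convolution identity, and the KBE follows by a direct chain-rule computation. Both rest on the explicit GOU transition quantities in Eq.~(\ref{eq:gou_transition_variance_mean}).

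\textbf{Closed form.} For fixed $\mathbf{x}_t$, the kernel $p(\mathbf z,T\mid\mathbf x_t,t)$ is the density of $\mathcal N(\bar{\boldsymbol\mu}_{t:T},\bar\sigma^2_{t:T}\mathbf I)$ evaluated at $\mathbf z$. Since the Gaussian density is symmetric in its argument and mean, the integral is
\[
\int \mathcal N(\mathbf z;\bar{\boldsymbol\mu}_{t:T},\bar\sigma^2_{t:T}\mathbf I)\,\mathcal N(\mathbf z;\hat{\mathbf x},\hat\sigma^2\mathbf I)\,\mathrm d\mathbf z .
\]
This equals the density at $\bar{\boldsymbol\mu}_{t:T}$ of $\hat{\mathbf x}+\boldsymbol\xi$, with $\boldsymbol\xi\sim\mathcal N(\mathbf 0,\bar\sigma^2_{t:T}\mathbf I)$. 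That random variable is distributed as $\mathcal N(\hat{\mathbf x},(\hat\sigma^2+\bar\sigma^2_{t:T})\mathbf I)$, which gives Eq.~(\ref{eq:h_function_new_trans}). If one prefers not to use this identity, one can complete the square in $\mathbf z$ instead; that computation is routine.

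At $t=T$ we have $\bar\sigma^2_{T:T}=0$ and $\bar{\boldsymbol\mu}_{T:T}=\mathbf x_T$, since the kernel is a Dirac mass. The formula then gives $h=q$, consistent with the formula because $\hat\sigma>0$. So the expression is valid on all of $[0,T]$.

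\textbf{KBE.} Write $s_t=\hat\sigma^2+\bar\sigma^2_{t:T}$, $\mathbf r_t=\bar{\boldsymbol\mu}_{t:T}-\hat{\mathbf x}$ and $a_t=e^{-\bar\theta_{t:T}}$, so that $\bar{\boldsymbol\mu}_{t:T}=a_t\mathbf x_t+\boldsymbol\mu(1-a_t)$. The time derivatives are:
\begin{itemize}
\item $\partial_t a_t=\theta_t a_t$, which gives $\partial_t\bar{\boldsymbol\mu}_{t:T}=\theta_t a_t(\mathbf x_t-\boldsymbol\mu)$ with $\mathbf x_t$ held fixed;
\item $\partial_t\bar\sigma^2_{t:T}=-a_t^2g_t^2$, obtained by differentiating the lower limit in Eq.~(\ref{eq:gou_transition_variance_mean}).
\end{itemize}
With $\log h=-\tfrac d2\log(2\pi s_t)-\|\mathbf r_t\|^2/(2s_t)$, the spatial derivatives are
\[
\nabla h=-h\,a_t\mathbf r_t/s_t,\qquad \Delta h=h\Big(a_t^2\|\mathbf r_t\|^2/s_t^2-d\,a_t^2/s_t\Big).
\]
The time derivative is
\[
\partial_t h=h\Big[-\tfrac d2\,\partial_t s_t/s_t-\mathbf r_t\cdot\partial_t\bar{\boldsymbol\mu}_{t:T}/s_t+\|\mathbf r_t\|^2\partial_t s_t/(2s_t^2)\Big].
\]

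Substituting into $-\partial_t h=\theta_t(\boldsymbol\mu-\mathbf x_t)\cdot\nabla h+\tfrac12 g_t^2\Delta h$, the check splits into three terms:
\begin{itemize}
\item The drift term contributes $h\,\theta_t a_t(\mathbf x_t-\boldsymbol\mu)\cdot\mathbf r_t/s_t$. This matches $-h$ times the $\partial_t\bar{\boldsymbol\mu}$ term.
\item The two terms proportional to $\|\mathbf r_t\|^2$ match because $\partial_t s_t=-a_t^2g_t^2$.
\item The two terms proportional to $d$ match for the same reason.
\end{itemize}
A cleaner route avoids this bookkeeping: $p(\mathbf z,T\mid\mathbf x,t)$ satisfies the backward equation in $(\mathbf x,t)$ for each fixed $\mathbf z$, and one differentiates under the integral. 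Justifying that exchange needs Gaussian tail bounds, so I would present the direct verification.

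The main obstacle is the sign and chain-rule bookkeeping in $\partial_t\bar\sigma^2_{t:T}$ and $\partial_t\bar{\boldsymbol\mu}_{t:T}$, which involve derivatives with respect to the lower limit. The restriction to $t<T$ only ensures smoothness in $t$; since $\hat\sigma>0$, no singularity arises as $t\to T$.
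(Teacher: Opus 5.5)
Your proposal is correct and follows the same route as the paper. You use the Gaussian product identity $\int\mathcal N(\mathbf z;\mathbf m_1,\Sigma_1)\mathcal N(\mathbf z;\mathbf m_2,\Sigma_2)\,\mathrm d\mathbf z=\mathcal N(\mathbf m_1;\mathbf m_2,\Sigma_1+\Sigma_2)$ for the closed form, and then verify the KBE by computing $\nabla h$, $\Delta h$ and $\partial_t h$ explicitly, with $\partial_t\bar{\boldsymbol\mu}_{t:T}=\theta_t e^{-\bar\theta_{t:T}}(\mathbf x_t-\boldsymbol\mu)$ and $\partial_t\bar\sigma^2_{t:T}=-g_t^2e^{-2\bar\theta_{t:T}}$, and matching term by term. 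One small slip: in your probabilistic reading of the convolution, the random variable should be $\mathbf Z+\boldsymbol\xi$ with $\mathbf Z\sim q$ independent of $\boldsymbol\xi$, not $\hat{\mathbf x}+\boldsymbol\xi$ (which has variance only $\bar\sigma^2_{t:T}$); the law you then state and the final formula are correct.
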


The proof of Theorem~\ref{thm:h_function} is provided in Appendix~\ref{sec:thm_h_function}. With Eq. (\ref{eq:h_function_new_trans}) and inspired by Doob's $h$-transform, we have a new SDE:
\begin{equation}
    \begin{aligned}
       \mathrm{d}\mathbf{x}_t
        =& \left[\theta_t(\boldsymbol{\mu}-\mathbf{x}_t)+g^2_t e^{-\bar{\theta}_{t:T}}\frac{\hat{\mathbf{x}}-\bar{\boldsymbol{\mu}}_{t:T}}{\hat{\sigma}^2+\bar{\sigma}^2_{t:T}}\right]\mathrm{d} t + g_t \mathrm{d}\mathbf{w}_t.
    \end{aligned}
    \label{eq:new_sde}
\end{equation}

Compared with GOUB~\citep{yue2023image},
whose drift contains the singular denominator \(\bar{\sigma}_{t:T}^2\), our
softened formulation introduces an additional terminal variance
\(\hat{\sigma}^2\). Consequently, when \(\hat{\sigma}^2>0\), the denominator
\(\hat{\sigma}^2+\bar{\sigma}_{t:T}^2\) remains strictly positive even as
\(t\to T\). This prevents the analytical drift correction from exploding at
the terminal boundary. {Compared with UniDB~\citep{zhu2025unidb}, our construction imposes the
terminal relaxation explicitly at the level of the \(h\)-function. }

Furthermore, Eq.~(\ref{eq:new_sde}) provides flexible control over the
Gaussian terminal reweighting parameters \(\hat{\mathbf{x}}\) and
\(\hat{\sigma}^2\). This flexibility yields a more stable and general
diffusion bridge framework.

Recall that for the GOU process, $\bar{\boldsymbol{\mu}}_{t:T}
    =
    e^{-\bar{\theta}_{t:T}}\mathbf{x}_t
    +
    \left(
    1-e^{-\bar{\theta}_{t:T}}
    \right)\boldsymbol{\mu}$.
If $\mathbf{x}^\ast$ is set as $\mathbf{x}^\star$,~\footnote{\label{footnote_1} It is important to emphasize that the restriction $\mathbf{x}^{\ast}
\neq
\mathbf{x}^{\star}$ is specific to the
\(\mathbf{x}_0\)-free reformulation. For training or sampling formulations
that are allowed to condition explicitly on \(\mathbf{x}_0\), one may still
set \(\mathbf{x}^{\ast}=\mathbf{x}^{\star}\). In that case, there is no
need to invert the \(\mathbf{x}_0\)-dependent mean map, and the original
soft-constraint dynamics in Eq.~(\ref{eq-sde-xzeorxstarmu}) and its reverse-time SDE remain well-defined
due to the non-degenerate denominators
$\hat{\sigma}^2+\bar{\sigma}_{t:T}^2$ and $\bar{\sigma}_{0:T}^2-\sigma^2$.}
Eq.~(\ref{eq-xhat-sigmahat}) implies 
\begin{equation}\label{eq-xhat-xzero}
\hat{\mathbf{x}}
=  - \frac{
\sigma^2 e^{-\bar{\theta}_{0:T}}
}
{
\bar{\sigma}_{0:T}^2-\sigma^2
} \mathbf{x}_0  + \frac{
\bar{\sigma}_{0:T}^2
}
{
\bar{\sigma}_{0:T}^2-\sigma^2
}
\mathbf{x}^\star - \frac{
\sigma^2 \left(1-e^{-\bar{\theta}_{0:T}}\right)
}
{
\bar{\sigma}_{0:T}^2-\sigma^2
}\boldsymbol{\mu}.
\end{equation}
Then we can obtain an explicit parametrization of the
soft \(h\)-transformed SDE in Eq.~(\ref{eq:new_sde}), which is free of $\hat{\mathbf{x}}$. 
See Appendix~\ref{sec:explicit_param} for more details.

Given Eq.~(\ref{eq:new_sde}), the forward marginal distributions at any intermediate time \(t\in[0,T]\) can be derived in closed form, yielding the following proposition:

\begin{restatable}{proposition}{propph}
\label{prop:ph}
Consider the stochastic process governed by Eq.~(\ref{eq:new_sde}).
For \(0<t\le T\), its time-\(t\) marginal density under the \(h\)-transformed path measure is
\begin{equation}\label{eq:ph}
p_h(\mathbf{x}_t,t|\mathbf{x}_0,0)=\mathcal N
\left(
\mathbf{x}_t;
\bar{\boldsymbol{\mu}}_{0:t}
+
\frac{
e^{-\bar{\theta}_{t:T}}\bar{\sigma}_{0:t}^2
}
{
\hat{\sigma}^2+\bar{\sigma}_{0:T}^2
}
(\hat{\mathbf{x}}-\bar{\boldsymbol{\mu}}_{0:T}),
\frac{
\bar{\sigma}_{0:t}^2
(\hat{\sigma}^2+\bar{\sigma}_{t:T}^2)
}
{
\hat{\sigma}^2+\bar{\sigma}_{0:T}^2
}
\mathbf I
\right).
\end{equation}
\end{restatable}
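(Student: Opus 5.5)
We plan to use the Doob reweighting identity, Eq.~(\ref{eq:general_h_marginal}), rather than solving the SDE in Eq.~(\ref{eq:new_sde}) directly. Since Eq.~(\ref{eq:new_sde}) is exactly the $h$-transform of the GOU process with the Gaussian $h$ of Theorem~\ref{thm:h_function}, its time-$t$ marginal is
\begin{equation*}
p_h(\mathbf{x}_t,t\mid\mathbf{x}_0,0)=\frac{p(\mathbf{x}_t,t\mid\mathbf{x}_0,0)\,h(\mathbf{x}_t,t)}{h(\mathbf{x}_0,0)} .
\end{equation*}
Both factors in the numerator are Gaussian in $\mathbf{x}_t$, so the product is Gaussian. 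The task therefore reduces to completing the square.

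The first step writes $p(\mathbf{x}_t,t\mid\mathbf{x}_0,0)=\mathcal N(\mathbf{x}_t;\bar{\boldsymbol{\mu}}_{0:t},\bar{\sigma}_{0:t}^2\mathbf I)$. The second step rewrites $h(\mathbf{x}_t,t)$ from Eq.~(\ref{eq:h_function_new_trans}) as a Gaussian in $\mathbf{x}_t$. We set $a:=e^{-\bar{\theta}_{t:T}}$ and $v:=\hat{\sigma}^2+\bar{\sigma}_{t:T}^2$, and note that $\bar{\boldsymbol{\mu}}_{t:T}=a\mathbf{x}_t+(1-a)\boldsymbol{\mu}$. This gives
\begin{equation*}
h\propto\exp\!\left[-\frac{a^2}{2v}\bigl\|\mathbf{x}_t-\mathbf{c}\bigr\|^2\right],\qquad \mathbf{c}:=a^{-1}\bigl(\hat{\mathbf{x}}-(1-a)\boldsymbol{\mu}\bigr).
\end{equation*}
Multiplying the two Gaussians yields precision $\bar{\sigma}_{0:t}^{-2}+a^2/v$ and mean equal to the precision-weighted average of $\bar{\boldsymbol{\mu}}_{0:t}$ and $\mathbf{c}$:
\begin{equation*}
\mathbf m_t=\bar{\boldsymbol{\mu}}_{0:t}+\frac{a^2\bar{\sigma}_{0:t}^2}{v+a^2\bar{\sigma}_{0:t}^2}\bigl(\mathbf{c}-\bar{\boldsymbol{\mu}}_{0:t}\bigr).
\end{equation*}
Normalization is automatic, because $p_h$ integrates to one by the semigroup property $\int p(\mathbf{x}_t,t\mid\mathbf{x}_0,0)h(\mathbf{x}_t,t)\,\mathrm d\mathbf{x}_t=h(\mathbf{x}_0,0)$. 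So only the mean and covariance need checking.

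The key simplification, and the step that needs care, is the GOU semigroup identities:
\begin{equation*}
\bar{\sigma}_{0:T}^2=e^{-2\bar{\theta}_{t:T}}\bar{\sigma}_{0:t}^2+\bar{\sigma}_{t:T}^2,\qquad \bar{\boldsymbol{\mu}}_{0:T}=a\bar{\boldsymbol{\mu}}_{0:t}+(1-a)\boldsymbol{\mu}.
\end{equation*}
Both follow from Eq.~(\ref{eq:gou_transition_variance_mean}) by splitting the integral at $t$, using $\bar{\theta}_{u:T}=\bar{\theta}_{u:t}+\bar{\theta}_{t:T}$. The first identity gives $v+a^2\bar{\sigma}_{0:t}^2=\hat{\sigma}^2+\bar{\sigma}_{0:T}^2$. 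The variance then becomes
\begin{equation*}
\left(\bar{\sigma}_{0:t}^{-2}+\frac{a^2}{v}\right)^{-1}=\frac{\bar{\sigma}_{0:t}^2\,v}{\hat{\sigma}^2+\bar{\sigma}_{0:T}^2}=\frac{\bar{\sigma}_{0:t}^2(\hat{\sigma}^2+\bar{\sigma}_{t:T}^2)}{\hat{\sigma}^2+\bar{\sigma}_{0:T}^2}.
\end{equation*}
For the mean, the second identity gives
\begin{equation*}
a^2(\mathbf{c}-\bar{\boldsymbol{\mu}}_{0:t})=a\bigl(\hat{\mathbf{x}}-(1-a)\boldsymbol{\mu}-a\bar{\boldsymbol{\mu}}_{0:t}\bigr)=a\,(\hat{\mathbf{x}}-\bar{\boldsymbol{\mu}}_{0:T}).
\end{equation*}
Substituting into $\mathbf m_t$ yields exactly the stated mean $\bar{\boldsymbol{\mu}}_{0:t}+\frac{e^{-\bar{\theta}_{t:T}}\bar{\sigma}_{0:t}^2}{\hat{\sigma}^2+\bar{\sigma}_{0:T}^2}(\hat{\mathbf{x}}-\bar{\boldsymbol{\mu}}_{0:T})$.

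Two details remain. Since $a>0$ on $[0,T]$, dividing by $a$ when forming $\mathbf{c}$ is legitimate; alternatively, one can expand the quadratic in $\mathbf{x}_t$ directly and avoid $\mathbf{c}$ altogether. At $t=T$ we have $\bar{\sigma}_{T:T}^2=0$ and $a=1$, so the formula remains consistent with Theorem~\ref{thm-ph-key}. As a cross-check, one could verify the result against the linear-SDE mean/variance ODEs obtained from Eq.~(\ref{eq:new_sde}) via Proposition~\ref{thm:affine_nonsingular_terminal}, but the reweighting route is shorter.
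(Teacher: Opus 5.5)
Your proposal is correct and takes essentially the same route as the paper. Both arguments use the Doob reweighting identity, a Gaussian product computation, and the GOU semigroup identities. The paper packages the completion of the square as the linear-Gaussian factorization identity, taking $\mathbf{y}=\hat{\mathbf{x}}$, $A=e^{-\bar{\theta}_{t:T}}$ and $R=\hat{\sigma}^2+\bar{\sigma}_{t:T}^2$. That identity avoids dividing by $a$ and reads off the normalizer $\mathcal N(\hat{\mathbf{x}};\bar{\boldsymbol{\mu}}_{0:T},(\hat{\sigma}^2+\bar{\sigma}_{0:T}^2)\mathbf I)=h(\mathbf{x}_0,0)$ directly, where you use Chapman--Kolmogorov instead. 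You also derive the semigroup identities explicitly, whereas the paper only asserts them.
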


The derivation of the proposition is provided in the Appendix~\ref{sec:prop_ph}.

\subsection{\texorpdfstring{Tractable $\mathbf{x}_0$-Free Reformulation}{Tractable x0-Free Reformulation}}
\label{sec:reformulation}

Inspired by the affine structure of $\hat{\mathbf{x}}$ parametrized by $(\mathbf{x}_0, \mathbf{x}^\star, \boldsymbol{\mu})$ in Eq.~(\ref{eq-xhat-xzero}), we consider a more general and flexible case by setting $\hat{\mathbf{x}} = \alpha \mathbf{x}_0 + \beta \mathbf{x}^\star + \gamma \boldsymbol{\mu}$, where $\alpha, \beta$, and $\gamma$ are constants independent of $t$. Although certain training or sampling formulations may allow explicit conditioning on $\mathbf{x}_0$, constructing a standard reverse-time generative process where the source state is unobserved during inference typically requires a forward drift free of $\mathbf{x}_0$. By carefully eliminating this $\mathbf{x}_0$ dependence (see Appendix~\ref{sec-remove-xzero} for detailed derivations), we obtain an equivalent forward process driven solely by the current state and target parameters. 
Following~\citep{song2021scorebased}, we have the $\mathbf{x}_0$-free reverse SDE:

\begin{equation}
\mathrm{d}\mathbf{x}_t = \left[f_t \mathbf{x}_t + m_t \mathbf{x}^\star + {h}_t \boldsymbol{\mu} - \eta^2_t \nabla_{\mathbf{x}_t} \log p_h(\mathbf{x}_t,t \mid \mathbf{x}_T,T) \right]\mathrm{d}t +\eta_t \mathrm{d}\bar{\mathbf{w}}_t,
\label{eq:new_sde_reverse_mean_ode}
\end{equation}
where
\begin{align}
    \phi_t
    &=
    e^{-\bar{\theta}_{0:t}}
    \left(\hat{\sigma}^2+\bar{\sigma}^2_{t:T}\right)
    +
    \alpha e^{\bar{\theta}_{t:T}}
    \left(\bar{\sigma}^2_{0:T}-\bar{\sigma}^2_{t:T}\right),\qquad
    \varphi_t
    =
    \sigma^2
    \left(
    \bar{\sigma}^2_{0:T}
    -
    \bar{\sigma}^2_{t:T}
    \right)
    +
    \bar{\sigma}^2_{t:T}\bar{\sigma}^2_{0:T},\label{eq-def-phit}\\
    f_t
    &=
    g_t^2
    {
    e^{-\bar{\theta}_{t:T}}
    \left(\alpha-e^{-\bar{\theta}_{0:T}}\right)
    }
    \big/{
    \phi_t
    }
    -
    \theta_t,\qquad\qquad\qquad\;
    m_t
    =
    g_t^2
    {
    \beta e^{-\bar{\theta}_{0:T}}
    }
    \big/{
    \phi_t
    }, \label{eq:ft_def}\\
    h_t
    &=
    \theta_t
    +
    {
    g_t^2 e^{-\bar{\theta}_{t:T}}
    \left[
    e^{-\bar{\theta}_{0:T}}
    +(\gamma-1)e^{-\bar{\theta}_{0:t}}
    +\alpha\left(e^{-\bar{\theta}_{0:t}}-1\right)
    \right]
    }
    \Big/{
    \phi_t
    }.
    \label{eq:ht_def}\\
      \eta_t
    &=
    \frac{g_t}{\bar{\sigma}^2_{0:T}}
    \left\{
    \varphi_t
    \left[
    1-
    \frac{
    2e^{-\bar{\theta}_{t:T}}
    \bar{\sigma}^2_{0:t}
    \left(\alpha-e^{-\bar{\theta}_{0:T}}\right)
    }
    {
    \phi_t
    }
    \right]
    +
    \bar{\sigma}^2_{0:t}
    e^{-2\bar{\theta}_{t:T}}
    \left(
    \sigma^2-\bar{\sigma}^2_{0:T}
    \right)
    \right\}^{1/2}.
    \label{eq:eta}
    \end{align}

\subsection{Training objective of SDDBMs}\label{sec-objective}

In this section, we derive the training objective for the proposed SDDBMs.
The objective is formulated from the perspective of the conditional score
matching~\citep{song2021scorebased,vincent2011connection}, based on the soft
\(h\)-transformed forward bridge kernel
\(p_h(\mathbf{x}_t,t\mid \mathbf{x}_0,0;\mathbf{x}_T,T)\). Following Propositions~\ref{prop:ph} and~\ref{prop:new_sde_ode_solution},
the time marginal of the solution to the SDE in Eq.~(\ref{eq:final_sde})
is Gaussian, with mean and variance given by
\begin{align}\label{eq-muprime-sigmaprime}
\bar{\boldsymbol{\mu}}'_{0:t}= a_t \mathbf{x}_0 + b_t \mathbf{x}^\star + c_t \boldsymbol{\mu},\qquad \qquad \bar{\sigma}_{0:t}'^2 = {
\bar{\sigma}_{0:t}^2
(\hat{\sigma}^2+\bar{\sigma}_{t:T}^2)
}
/\left({
\hat{\sigma}^2+\bar{\sigma}_{0:T}^2
}\right),
\end{align}
where 
$a_t= \frac{e^{-\bar{\theta}_{0:t}}\left(\hat{\sigma}^2+\bar{\sigma}^2_{t:T}\right)+\alpha e^{\bar{\theta}_{t:T}}\left(\bar{\sigma}^2_{0:T}-\bar{\sigma}^2_{t:T}\right)}{\hat{\sigma}^2+ \bar{\sigma}^2_{0:T}}$,
$c_t=  1-\frac{e^{-\bar{\theta}_{0:t}}\left(\hat{\sigma}^2+\bar{\sigma}^2_{t:T}\right)+ (1-\gamma)e^{\bar{\theta}_{t:T}} \left(\bar{\sigma}^2_{0:T}-\bar{\sigma}^2_{t:T}\right)}{\hat{\sigma}^2+\bar{\sigma}^2_{0:T}}$, $b_t= \beta \frac{e^{\bar{\theta}_{t:T}} \left(\bar{\sigma}^2_{0:T}-\bar{\sigma}^2_{t:T}\right)}{\hat{\sigma}^2+\bar{\sigma}^2_{0:T}}$, are the coefficients of $\mathbf{x}_0$, $\boldsymbol{\mu}$ 
and $\mathbf{x}^{\star}$ in Eq.~(\ref{eq:new_sde_ode_solution}). 

Let $p_h(\mathbf{x}_{t-1},t-1\mid \mathbf{x}_0,0,\mathbf{x}_t,t;\mathbf{x}_T,T)$
denote the true one-step posterior induced by the soft \(h\)-transformed
forward bridge, and let $p_\theta(\mathbf{x}_{t-1},t-1\mid \mathbf{x}_t,t;\mathbf{x}_T,T)$
denote the parameterized reverse transition. The former is available in
closed form because the forward bridge marginals are Gaussian. Specifically,
we have
$p_h(\mathbf{x}_{t-1},t-1\mid \mathbf{x}_0,0,\mathbf{x}_t,t;\mathbf{x}_T,T)
=
\mathcal N
\left(
\mathbf{x}_{t-1};
\boldsymbol{\mu}_{t-1},
\sigma_{t-1}^2\mathbf I
\right)$, 
where
$\boldsymbol{\mu}_{t-1}
=
\bar{\boldsymbol{\mu}}'_{0:t-1}
+
\frac{
a_t\bar{\sigma}_{0:t-1}^{\prime 2}
}
{
a_{t-1}\bar{\sigma}_{0:t}^{\prime 2}
}
\left(
\mathbf{x}_t-\bar{\boldsymbol{\mu}}'_{0:t}
\right)$, 
$\sigma_{t-1}^2
=
\bar{\sigma}_{0:t-1}^{\prime 2}
-
\frac{
a_t^2\bar{\sigma}_{0:t-1}^{\prime 4}
}
{
a^2_{t-1}\bar{\sigma}_{0:t}^{\prime 2}
}$.
We parameterize the reverse transition under a unit-step discretization as 
$p_\theta(\mathbf{x}_{t-1}, t-1 \mid \mathbf{x}_t, t; \mathbf{x}_T, T) = \mathcal{N}(\mathbf{x}_{t-1}; \boldsymbol{\mu}_{\theta, t-1}, \sigma_{\theta, t-1}^2 \mathbf{I})$, 
where the variance is set to $\sigma_{\theta, t-1}^2 = \eta_t^2$, and the reverse mean is parameterized by
$ \boldsymbol{\mu}_{\theta, t-1} = \mathbf{x}_t - \left[ f_t \mathbf{x}_t + m_t \mathbf{x}_T + h_t \boldsymbol{\mu} + \frac{\eta_t^2}{\bar{\sigma}'_{0:t}} \boldsymbol{\epsilon}_{\theta}(\mathbf{x}_t, \mathbf{x}_T, t) \right]. $
Here, $\boldsymbol{\epsilon}_{\theta}(\mathbf{x}_t, \mathbf{x}_T, t)$ denotes a neural network parameterized by $\theta$, which is designed to predict the standard Gaussian noise component. This structural formulation directly aligns with the standard noise-prediction parameterization widely adopted in score-based diffusion models.

Since \(\hat{\sigma}^2>0\) and
\(\bar{\sigma}_{0:t}^2>0\) for any \(t>0\), Eq.~(\ref{eq-muprime-sigmaprime}) implies
$\bar{\sigma}_{0:t}'^2>0$ for all $t\in(0,T]$.
In particular, at the terminal time,
$\bar{\sigma}_{0:T}'^2
>0$.
Therefore, unlike hard bridge parameterizations whose noise scale vanishes at
the terminal boundary, our parameterization does not encounter a
division-by-zero singularity as \(t\) approaches \(T\). This contrasts with
the noise-parameterized score used in methods such as GOUB and UniDB,
where the endpoint behavior requires special numerical treatment.

Training is performed by matching the true posterior
\(p_h(\mathbf{x}_{t-1},t-1\mid \mathbf{x}_0,0,\mathbf{x}_t,t;\mathbf{x}_T,T)\)
with the parameterized reverse transition
\(p_\theta(\mathbf{x}_{t-1},t-1\mid\mathbf{x}_t,t;\mathbf{x}_T,T)\). For Gaussian
reverse kernels, the learnable part of the KL divergence reduces to a
weighted mean-matching objective. Following prior image-to-image diffusion
bridge models~\citep{luo2023image,yue2023image,zhu2025unidb}, we use an
\(L_1\) surrogate of this mean-matching objective to better preserve
pixel-level details and improve visual quality:
\[
\mathcal{L}
=
\mathbb{E}_{t,\mathbf{x}_0,\mathbf{x}_t,\mathbf{x}_T}
\left[
{1}/\left({2\sigma_{\theta,t-1}^2}\right)
\left\|
\boldsymbol{\mu}_{t-1}
-
\boldsymbol{\mu}_{\theta,t-1}
\right\|_1
\right].
\]

Please see Appendix~\ref{sec:training_objective_append} for more details. 
So far, we have completed the forward and backward process, and the training objective for SDDBMs.

\section{SDDBMs unifies diffusion bridge models}\label{sec-unified-framework}
In this section, we show that the proposed SDDBM framework provides a unified perspective that encompasses several existing diffusion bridge models, including DDBMs~\citep{zhou2024denoising}, GOUB~\citep{yue2023image},
and UniDB~\citep{zhu2025unidb}.

\begin{restatable}{proposition}{propgeneralization}
\label{prop:generalization}
The proposed SDDBM framework recovers DDBM-VP, DDBM-VE,
GOUB, and UniDB as special cases under the hyperparameter choices listed in
Table~\ref{table:hyper_space}.
\end{restatable}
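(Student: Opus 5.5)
The proof is a case-by-case verification. For each row of Table~\ref{table:hyper_space} we substitute the listed hyperparameters $(\theta_t,g_t,\boldsymbol{\mu},\sigma^2,\alpha,\beta,\gamma)$ into the soft $h$-transformed dynamics and check that the result coincides with the forward SDE, forward marginal, and reverse drift of the target model.

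The natural organizing object is the soft $h$-function of Theorem~\ref{thm:h_function}, together with the marginal of Proposition~\ref{prop:ph}. All four baselines are Doob $h$-transforms of a linear (GOU-type) reference process: DDBM-VP and DDBM-VE use a VP/VE reference, GOUB uses the GOU process, and UniDB uses the same GOU reference with a penalty. It therefore suffices to show three things for each model:
\begin{itemize}
\item the reference SDE matches;
\item the drift correction $g_t^2\nabla\log h$ matches;
\item the resulting $\mathbf{x}_0$-free reverse SDE, Eq.~(\ref{eq:new_sde_reverse_mean_ode}), reduces to theirs.
\end{itemize}

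\textbf{Reference processes.} The VE case is the degenerate GOU with $\theta_t\equiv0$, so $e^{-\bar\theta}=1$ and $\bar\sigma^2_{s:t}=\int_s^t g_u^2\,du$. The VP case takes $\boldsymbol{\mu}=\mathbf 0$ with $\theta_t=\tfrac12\beta_t$. In both cases the GOU formulas of Eq.~(\ref{eq:gou_transition_variance_mean}) specialize to the DDBM transition kernels. GOUB and UniDB use the GOU process directly with $\boldsymbol{\mu}=\mathbf{x}^\star$.

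\textbf{Hard-bridge models (DDBM, GOUB).} We take the limit $\sigma\to0$ with $\mathbf{x}^\ast=\mathbf{x}^\star$, which corresponds to $\alpha=0$, $\beta=1$, $\gamma=0$ in the limit. By Eq.~(\ref{eq-xhat-sigmahat}), $\hat\sigma^2\to0$ and $\hat{\mathbf{x}}\to\mathbf{x}^\star$. The drift in Eq.~(\ref{eq:new_sde}) then becomes
\[
g_t^2e^{-\bar\theta_{t:T}}(\mathbf{x}^\star-\bar{\boldsymbol{\mu}}_{t:T})/\bar\sigma^2_{t:T}.
\]
Using $\mathbf{x}^\star-\bar{\boldsymbol{\mu}}_{t:T}=e^{-\bar\theta_{t:T}}(\mathbf{x}^\star-\mathbf{x}_t)$ when $\boldsymbol{\mu}=\mathbf{x}^\star$, this is exactly the GOUB drift appearing in Eq.~(\ref{eq:goub_reverse}). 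For general $\boldsymbol{\mu}$ it is the DDBM correction $g_t^2\nabla\log p(\mathbf{x}^\star,T\mid\mathbf{x}_t,t)$. The marginal of Proposition~\ref{prop:ph} likewise reduces to the known hard-bridge Gaussian marginal: mean interpolation weights $\bar\sigma^2_{0:t}e^{-\bar\theta_{t:T}}/\bar\sigma^2_{0:T}$ and variance $\bar\sigma^2_{0:t}\bar\sigma^2_{t:T}/\bar\sigma^2_{0:T}$. Since this is a limit statement, we also check that every coefficient in Eqs.~(\ref{eq:ft_def})--(\ref{eq:eta}) has a finite limit for $t<T$. This holds because $\phi_t\to e^{-\bar\theta_{0:t}}\bar\sigma^2_{t:T}>0$.

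\textbf{UniDB.} Here we keep $\hat{\sigma}^2=\kappa^{-1}$ finite and $\hat{\mathbf{x}}=\mathbf{x}^\star$, i.e.\ $\alpha=\gamma=0$, $\beta=1$ at the level of $\hat{\mathbf{x}}$. With $\boldsymbol{\mu}=\mathbf{x}^\star$, the drift of Eq.~(\ref{eq:new_sde}) is
\[
g_t^2e^{-2\bar\theta_{t:T}}(\mathbf{x}^\star-\mathbf{x}_t)/(\kappa^{-1}+\bar\sigma^2_{t:T}),
\]
which matches Eq.~(\ref{eq:unidb_reverse}) term by term. We then read off the implied terminal marginal parameters from Theorem~\ref{thm-ph-key}. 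Inverting Eq.~(\ref{eq-xhat-sigmahat}) gives
\[
\sigma^2=\hat\sigma^2\bar\sigma^2_{0:T}/(\hat\sigma^2+\bar\sigma^2_{0:T}),
\]
together with the corresponding $\mathbf{x}^\ast$. This supplies the probabilistic interpretation of UniDB claimed in the paper.

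\textbf{Main obstacle.} The hardest part is the hard-endpoint cases. They are not literally attained at finite parameters, since Theorem~\ref{thm-ph-key} requires $\sigma^2>0$. So "recovers" must be made precise as a limit $\sigma\to0$, or equivalently $\hat\sigma\to0$, of the forward drift and marginals for each fixed $t<T$. We must also verify that the $\mathbf{x}_0$-free coefficients $f_t,m_t,h_t,\eta_t$ converge to the DDBM/GOUB reverse coefficients rather than merely to something equivalent in law. I would check this convergence algebraically by substituting $\alpha=0$ into $\phi_t$ and $\varphi_t$, which makes the reduction transparent.
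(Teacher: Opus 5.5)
Your proposal is correct and follows essentially the paper's route. The paper also sets $\alpha=0$, $\beta=1$, $\gamma=0$ and observes $\phi_t=e^{-\bar{\theta}_{0:t}}(\hat{\sigma}^2+\bar{\sigma}^2_{t:T})$, so that $\eta_t=g_t$ and $f_t,m_t,h_t$ collapse to the coefficients of the $h$-transformed drift (already $\mathbf{x}_0$-free when $\alpha=0$); it then specializes $\theta_t$, $\boldsymbol{\mu}$, $\hat{\sigma}^2$ row by row. The one difference is that the paper simply substitutes $\hat{\sigma}^2=0$ into these formulas (legitimate for $t<T$, where $\bar{\sigma}^2_{t:T}>0$), whereas you justify the hard-endpoint rows as a $\sigma\to0$ limit, which is slightly more careful given that Theorem~\ref{thm-ph-key} requires $\sigma^2>0$.
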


The detailed algebraic verification is provided in
Appendix~\ref{sec:prop_generalization}.

For DDBM-VP, DDBM-VE, and GOUB, which are based on hard constraints, the terminal marginal of the transformed process is effectively a Dirac delta distribution centered at \(\mathbf{x}^{\star}\).
Consequently, these hard-constraint methods can naturally be regarded as special cases of the soft-constraint formulation proposed in this work.

As previously discussed, UniDB is derived from the perspective of stochastic
optimal control, with the terminal penalty parameter \(\kappa\) governing
the trade-off between endpoint accuracy and the magnitude of the terminal
control.
Proposition~\ref{prop:generalization} offers a
probabilistic interpretation for UniDB within the SDDBM framework.
Specifically, according to the last row of Table~\ref{table:hyper_space}, we have
$\boldsymbol{\mu} = \mathbf{x}^\star$, $\hat{\mathbf{x}} = \alpha \mathbf{x}_0 + \beta \mathbf{x}^\star + \gamma \boldsymbol{\mu} = \mathbf{x}^\star$, $\hat{\sigma}^2 = \kappa^{-1}$.
By Eqs.~(\ref{eq-xhat-sigmahat}) and~(\ref{eq:gou_transition_variance_mean}), the corresponding terminal variance and mean are $\sigma^2=\bar{\sigma}^2_{0:T}/(\kappa \bar{\sigma}^2_{0:T}+1)$ and $\mathbf{x}^\ast=\left[1-e^{-\bar{\theta}_{0:T}}/\left(\kappa\bar{\sigma}^2_{0:T}+1\right)\right]\mathbf{x}^\star+e^{-\bar{\theta}_{0:T}}\mathbf{x}_0/(\kappa\bar{\sigma}^2_{0:T}+1)$, respectively. Hence, for UniDB, the associated \(h\)-transform can be written as $h(\mathbf{x},t)=\int p(\mathbf{z},T \mid \mathbf{x},t) \mathcal{N}(\mathbf{z};\mathbf{x}^\star,\kappa^{-1} \mathbf{I}) \, \,\mathrm{d} \mathbf z$. Moreover, by Eq.~(\ref{eq-terminal-marginal}), the terminal marginal of the transformed UniDB process is
\[
p_h(\mathbf{x}_T,T \mid \mathbf{x}_0,0)
=
\mathcal{N}
\left(
\mathbf{x}_T;
\left(1 - \frac{e^{-\bar{\theta}_{0:T}}}{\kappa\bar{\sigma}_{0:T}^2 + 1}\right)\mathbf{x}^\star + \frac{e^{-\bar{\theta}_{0:T}}}{\kappa\bar{\sigma}_{0:T}^2 + 1} \mathbf{x}_0,
\frac{\bar{\sigma}_{0:T}^2}{\kappa\bar{\sigma}_{0:T}^2 + 1} \mathbf{I}
\right).
\]

\begin{wraptable}{r}{0.54\textwidth}
    \centering 
    \vspace{-20pt} 
    
    \caption{Hyper-parameter spaces for different diffusion bridge models.}
    
    \vspace{3pt}
    \begin{tabular}{l c c c c c c } 
        \toprule
        $\mathcal{H}$ & $\theta_t$ & $\boldsymbol{\mu}$ & $\alpha$ & $\beta$ & $\gamma$ & $\hat{\sigma}^2$ \\
        \midrule
        DDBMs (VP)   & $\frac{1}{2}g^2_t$    & $\mathbf{0}$ & $0$             & $1$          & $0$        & $0$  \\
        DDBMs (VE)   & $0$    & $\mathbf{0}$    & $0$            & $1$          & $0$       & $0$  \\
        GOUB &   $\frac{1}{2\lambda^2}g^2_t$ & $\mathbf{x}^\star$    & $0 $            & $1$          & $0$        & $0$    \\
        UniDB & $\frac{1}{2\lambda^2} g^2_t$  & $\mathbf{x}^\star$    & $0$            & $1$          & $0$        & $\kappa^{-1}$  \\
        \bottomrule
        \label{table:hyper_space}
    \end{tabular}
    \vspace{-20pt}
\end{wraptable}

While UniDB emerges as a highly constrained special case within our unified framework (e.g., restricted to a rigid scalar terminal penalty), SDDBMs represent a fundamental theoretical and practical advancement rather than a mere probabilistic reinterpretation. By shifting from a stochastic optimal control perspective to a measure-theoretic $h$-transform foundation, our formulation uniquely derives the explicit forward path measure $q$, the $h$-function, and the time-$t$ forward marginals 
directly from a prescribed target Gaussian distribution. Building upon this rigorous probabilistic characterization, SDDBMs provide a more flexible parameterization of the terminal constraint compared to existing diffusion bridge models. Specifically, within the admissible space, the $h$-transform parameters $\alpha$, $\beta$, and $\gamma$ modulate the soft terminal center location, while $\hat{\sigma}^2$ determines its variance spread. This allows SDDBMs to interpolate seamlessly between exact endpoint matching and softer target-guided bridging, leading to a broader and more adaptive family of diffusion bridge dynamics. Consequently, this generalized formulation inherently maintains non-vanishing noise scales at the terminal state, thereby circumventing the pathological training instabilities and the need for specialized boundary score engineering inherent to UniDB. A systematic and detailed comparison is provided in Appendix~\ref{subsec:diff_from_unidb}.

\section{Experiments}

In this section, we evaluate our models on three popular image restoration tasks: image super-resolution, image deraining, and image inpainting. Following~\citep{zhu2025unidb}, we employ four standard evaluation metrics: Peak Signal-to-Noise Ratio (PSNR, higher is better)~\citep{fardo2016formal} for assessing reconstruction quality, Structural Similarity Index (SSIM, higher is better)~\citep{wang2004image} for gauging structural perception, Learned Perceptual Image Patch Similarity (LPIPS, lower is better)~\citep{zhang2018unreasonable} for evaluating the perceptual quality of features, and Fr\'echet Inception Distance (FID, lower is better)~\citep{zhang2018unreasonable} to measure the distributional similarity and diversity of generated images. We choose Bicubic~\citep{zhu2025unidb}, MAXIM~\citep{tu2022maxim}, MHNet~\citep{gao2025mixed},PromptIR~\citep{potlapalli2023promptir}, DDRM~\citep{kawar2022denoising}, IR-SDE~\citep{luo2023image}, GOUB~\citep{yue2023image}, and UniDB~\citep{zhu2025unidb} as our baselines for comparison. Further experimental details are provided in Appendix~\ref{sec:imp_details}.

\begin{table*}[ht]
\setlength{\tabcolsep}{2pt}
  \centering
  \vspace{-4mm}
  \caption{Quantitative comparison with relevant baselines on the DIV2K, Rain100H, and CelebA-HQ $256 \times 256$ datasets. The best and second-best results are highlighted in bold and underlined, respectively.}
  \vskip 0.1in
  \renewcommand{\arraystretch}{1.1}
  \resizebox{\textwidth}{!}{
  \begin{tabular}{|c|cccc|c|cccc|c|cccc|}
    \hline
    \multirow{2}*{\textbf{METHOD}} & \multicolumn{4}{c|}{\textbf{Image Deraining}} & \multirow{2}*{\textbf{METHOD}} & \multicolumn{4}{c|}{\textbf{Image Super-Resolution}} & \multirow{2}*{\textbf{METHOD}} & \multicolumn{4}{c|}{\textbf{Image Inpainting}} \\
    \cline{2-5} \cline{7-10} \cline{12-15}
      & \textbf{PSNR}$\uparrow$ & \textbf{SSIM}$\uparrow$ & \textbf{LPIPS}$\downarrow$ & \textbf{FID}$\downarrow$ & &\textbf{PSNR}$\uparrow$ & \textbf{SSIM}$\uparrow$ & \textbf{LPIPS}$\downarrow$ & \textbf{FID}$\downarrow$ & & \textbf{PSNR}$\uparrow$ & \textbf{SSIM}$\uparrow$ & \textbf{LPIPS}$\downarrow$ & \textbf{FID}$\downarrow$ \\
    \hline
    MAXIM & 30.81 & 0.902 & 0.133 & 58.72 & Bicubic & 26.70 & 0.774 & 0.425 & 36.18 &  PromptIR & 30.22 & 0.918 & 0.068 & 32.69\\
    MHNet &31.08 & 0.899 & 0.126 & 57.93 & DDRM & 24.35 & 0.592 & 0.364 & 78.71 & DDRM & 27.16 & 0.899 & 0.089 & 37.02 \\
    IR-SDE & 31.65 & 0.904 & 0.047 & 18.64 & IR-SDE &  25.90 & 0.657 & 0.231 & 45.36 & IR-SDE & 28.37 & 0.916 & 0.046 & 25.13 \\
    GOUB (SDE) & 31.96 & 0.9028 & 0.046 & 18.14 & GOUB (SDE) & 26.89 & 0.7478 & 0.220 & 20.85 & GOUB (SDE) & 28.98 & 0.9067 & 0.037 & 4.30 \\
    GOUB (ODE) & 34.56 & 0.9414 & 0.077 & 32.83 & GOUB (ODE) & 28.50 & 0.8070 & 0.328 & 22.14 & GOUB (ODE)  & 31.39 & 0.9392 & 0.052 & 12.24 \\
    UniDB (SDE) & 32.05 & 0.9036 & \underline{0.045} & \underline{17.65} &
    UniDB (SDE) & 25.46 & 0.6856 & \underline{0.179} & \underline{16.21} &  UniDB (SDE)  & 29.20 & 0.9077 & \underline{0.036} & \underline{4.08} \\
    UniDB (ODE)  &\underline{34.68} & \underline{0.9426} & 0.074 & 31.16 &
    UniDB (ODE) & \underline{28.64} & \underline{0.8072} & 0.323 & 22.32 &  UniDB (ODE) & \underline{31.67} & \underline{0.9395} & 0.052 & 11.98 \\   
    \hline 
    SDDBMs (SDE)  &{32.94} & {0.9183} & \textbf{0.038} & \textbf{14.21} & SDDBMs (SDE) & 27.22 & 0.7522 & \textbf{0.136} & \textbf{13.70} & SDDBMs (SDE) & {30.30} & {0.9201} & \textbf{0.032} & \textbf{3.48} \\ 
     SDDBMs (ODE)  &\textbf{35.06} & \textbf{0.9464} & 0.066 & 25.75 & SDDBMs (ODE) & \textbf{28.90} & \textbf{0.8164} & 0.309 & 19.95 & SDDBMs (ODE) & \textbf{32.22} & \textbf{0.9399} & 0.051 & 11.87 \\
     \hline
  \end{tabular}
  }
  \vskip -0.1in
  \label{tab:main_results}
\end{table*}

\begin{figure}[ht]
    \centering
    \includegraphics[width=1.0\textwidth]{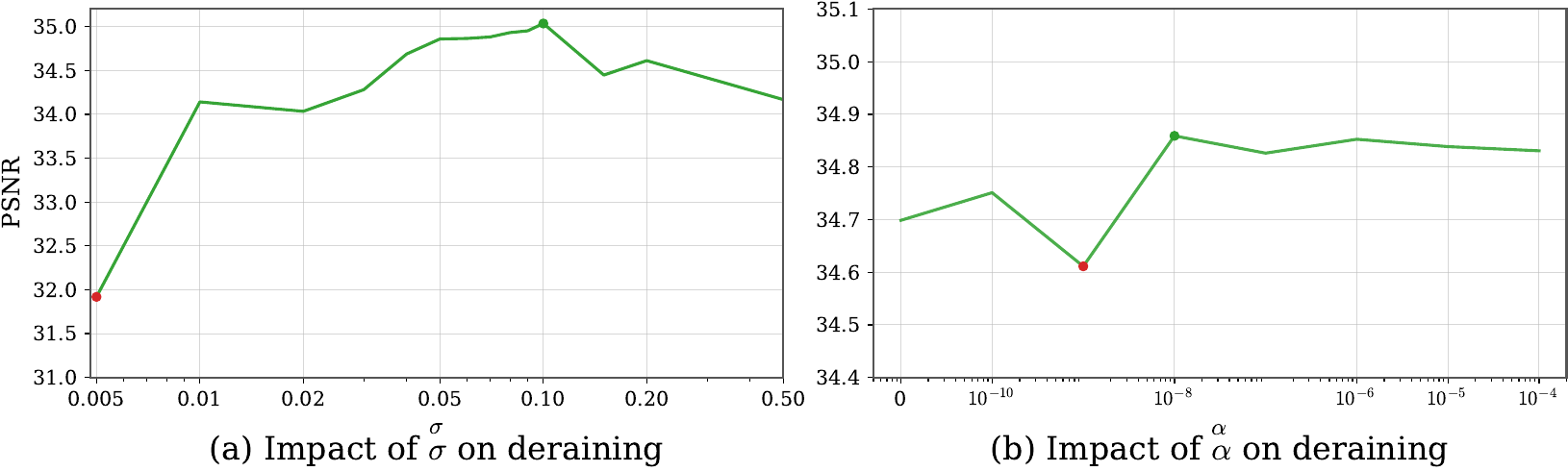}
    \caption{Performance sensitivity to hyperparameters $\sigma$ and $\alpha$ on image deraining task.}
    \label{fig:param_alpha_sigma}
\end{figure}


\begin{figure}[ht]
    \centering
    \includegraphics[width=1.0\textwidth]{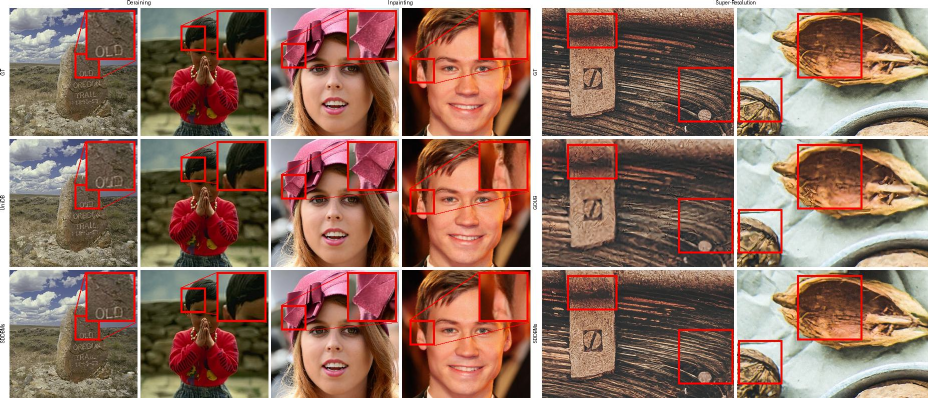}
    \caption{Qualitative comparison between SDDBMs and baseline methods on the Rain100H, CelebA-HQ and DIV2K datasets.
    }
    \label{fig:visual_dr_inp_sr}
\end{figure}

\textbf{Image Super-Resolution.} Single image super-resolution aims to recover a high-resolution (HR) image from its low-resolution (LR) counterpart. We conduct experiments on the DIV2K dataset~\citep{agustsson2017ntire}, which contains high-quality $2$K resolution images. Following standard practice, the LR inputs are bicubically upsampled to match the spatial dimensions of their corresponding HR targets. As reported in Table~\ref{tab:main_results}, our proposed SDDBMs demonstrate superior quantitative performance across various metrics compared to the baselines. Furthermore, the qualitative comparison in Figure~\ref{fig:visual_dr_inp_sr} and Figure~\ref{fig:visual_dr_sr} illustrates that our framework recovers finer structural details and sharper textures.

\begin{wrapfigure}{r}{0.55\textwidth}
\vspace*{-20pt}
\centering
\includegraphics[width=0.55\columnwidth]{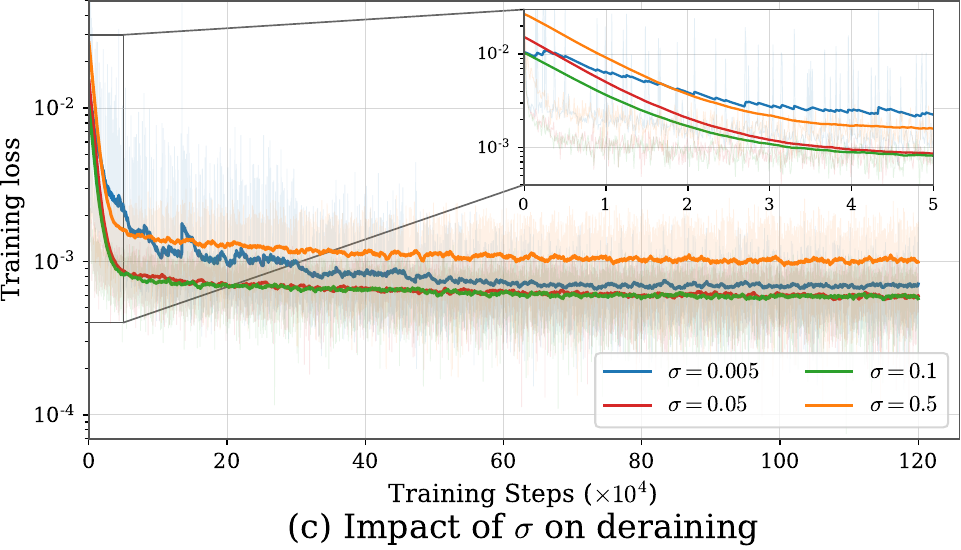}
\caption{Training loss of  the hyperparameter $\sigma$ on deraining task.}
\label{fig:sigma_deraining}
\vskip -0.3 in
\end{wrapfigure}


\textbf{Image Deraining.} For the image deraining task, we utilize the Rain100H dataset~\citep{yang2017deep}, comprising $1800$ rainy/clean image pairs for training and $100$ pairs for testing. For a fair comparison with prior works~\citep{luo2023image,yue2023image, zhu2025unidb}, PSNR and SSIM are evaluated exclusively on the Y channel in the YCbCr color space. 
As shown in Table~\ref{tab:main_results}, Figure~\ref{fig:visual_dr_inp_sr} and Figure~\ref{fig:visual_dr_sr}, SDDBMs achieve state-of-the-art results, demonstrating the effectiveness of our proposed soft-constrained bridge dynamics for image restoration.

\textbf{Image Inpainting.} Image inpainting requires synthesizing plausible content to fill missing or masked regions within an image. We evaluate our method on the CelebA-HQ $256 \times 256$ dataset~\citep{karras2018progressive}, employing $100$ thin masks provided by IR-SDE~\citep{luo2023image} for testing. Quantitative and qualitative results are provided in Table~\ref{tab:main_results}, Figure~\ref{fig:visual_dr_inp_sr} and Figure~\ref{fig:visual_inpainting} respectively. SDDBMs consistently outperform all baseline methods across all evaluation metrics and synthesize highly realistic facial features and hair textures.

We also analyze the sensitivity of our framework to key hyperparameters: $\alpha$, and $\sigma$. 
For a thorough discussion on the underlying motivations for selecting these specific parameters, alongside the remaining experimental results, please refer to Appendix~\ref{sec-ablation-study}.

We first study the effect of the terminal variance parameter $\sigma^2$, which controls the strength of the prescribed Gaussian terminal constraint in SDDBMs. As shown in Figure~\ref{fig:param_alpha_sigma}(a), the performance first improves as $\sigma$ increases, but degrades when $\sigma$ becomes too large. An overly small $\sigma$ makes the terminal distribution highly concentrated, causing the process to approach a hard endpoint bridge and become more susceptible to terminal sampling instability. In contrast, an excessively large $\sigma$ over-relaxes the terminal constraint and weakens target-guided transport. Thus, $\sigma$ governs a key trade-off between sampling stability and target fidelity. Figure~\ref{fig:sigma_deraining} further support this observation from the training perspective which shows that both extremely small and very large $\sigma$ lead to higher training losses, while moderate values stabilize training. These results in Figure~\ref{fig:sigma_deraining} suggest that $\sigma\in[0.05,0.11]$ is a reasonable stable range, while the best performance is typically achieved around $\sigma\in[0.08,0.11]$ (Similar trends are also observed on the other two tasks, as shown in Figure~\ref{fig:param_sigma_loss}(a)(b))

We further investigate the effect of $\alpha$. As shown in Figure~\ref{fig:param_alpha_sigma}(b), the drift scale parameter $\alpha$ plays a crucial role in governing the numerical stability of the drift term. Specifically, when $\alpha$ approaches the theoretical boundary $-e^{-\bar{\theta}_{0:T}}\hat{\sigma}^2 / \bar{\sigma}_{0:T}^2$,the model exhibits a clear performance degradation and becomes unstable. This performance collapse empirically validates the singularity risk proven in Lemma~\ref{lem-terminal-relaxion}. Conversely, keeping $\alpha$ within an appropriate range ensures stable sampling trajectories and optimal restoration quality.



\section{Conclusion}
\label{sec:conclusion}
In this paper, we introduced Soft Denoising Diffusion Bridge Models (SDDBMs) to fundamentally resolve the terminal singularities inherent in hard-constrained diffusion bridges. Driven by the theoretical insight that exact endpoint collapse inevitably causes drift blow-up, our framework prescribes a non-degenerate Gaussian terminal marginal to establish a soft-constrained bridge dynamics. This top-down formulation yields a fully tractable system characterized by a closed-form $h$-function, and exact forward marginals. Building upon this rigorous probabilistic foundation, SDDBMs not only provide a unified perspective that encompasses prior models like DDBMs, GOUB, and UniDB as special cases, but also achieve state-of-the-art performance across diverse image-to-image translation and restoration tasks.

\bibliography{iclr2026_conference}
\bibliographystyle{iclr2026_conference}

\newpage
\appendix

\renewcommand{\thesection}{A.} 
\renewcommand{\thetheorem}{\arabic{theorem}}
\renewcommand{\theequation}{\arabic{equation}}
\renewcommand{\thesubsection}{A.\arabic{subsection}}

\begin{center}
    {\Large \bf Appendix: Soft Denoising Diffusion Bridge Models}
\end{center}

\renewcommand{\thesection}{\Alph{section}}
\renewcommand{\thesubsection}{\thesection.\arabic{subsection}}

\makeatletter
\renewcommand{\@seccntformat}[1]{\csname the#1\endcsname.\quad}
\makeatother

\section{Related works}
\label{sec:related_works}
\textbf{Diffusion models.} Since the inception of diffusion models \citep{sohl2015deep,song2019generative,ho2020denoising,song2021scorebased}, various refinements have been proposed to enhance their performance. Research in this field primarily spans network design ~\citep{song2021scorebased,ho2020denoising,karras2022elucidating,peebles2023scalable}, various noise schedules \citep{nichol2021improved,karras2022elucidating}, and sampling acceleration. The latter includes both training-free numerical solvers ~\citep{lu2022dpm,song2021ddim,zhang2023deis} and training-based methods ~\citep{song2023consistency,frans2025shortcut,geng2025meanflows}. Despite these successes, standard diffusion models typically rely on Gaussian priors, which limits their flexibility in tasks requiring specific endpoint-to-endpoint mapping.

\textbf{Conditional generation.} Approaches for conditional synthesis generally follow two primary trajectories. The first category focuses on modeling the conditional probability $p(\mathbf{x}|\mathbf{y})$, a paradigm widely adopted in scenarios such as text-to-image generation~\citep{dhariwal2021diffusion,ho2022classifier,rombach2022high}. The second category directly models a transport between two distinct distributions. For instance, flow matching ~\citep{lipman2023flow, albergo2025stochastic,liu2023flow} learns a transport map to bridge two distributions inspired by continuity equation. schr{\"o}dinger bridges ~\citep{shi2023diffusion,de2021diffusion} seek an optimal stochastic process $\pi^*$ for inter-distribution transport under KL divergence minimization, yet its computational complexity from iterative procedures restricts practical applicability.
Alternatively, diffusion bridge models establish deterministic endpoint processes by combining linear SDE with Doob's $h$-transform. While successful in image-to-image translation, these models~\citep{yue2023image,zhou2024denoising,li2023bbdm} encounter numerical singularities at the terminal boundary  which prevents direct sampling from the terminal time-a critical limitation that our work aims to resolver.

\textbf{Diffusion models with Stochastic Optimal Control.} In recent years, many works try to integrate SOC with diffusion models. DIS~\citep{berner2024optimal} established a connection between stochastic optimal control and diffusion models which maybe allow to transfer methods from optimal theory to generative modeling.  Rb-modulation~\citep{rout2025rbmodulation} built on concepts from stochastic optimal control to modulate the drift field of reverse diffusion dynamics for training-free style transfer. UniDB~\citep{zhu2025unidb} offered a new perspective on diffusion bridges based on SOC which is also a special case of our SDDBMs.

\section{Complete Setting and Detailed Justifications for Affine Diffusion Dynamics}
\label{sec:full_assumptions_and_justifications}
We now ask what kind of terminal distribution can be generated by a
finite-time bridge dynamics if its drift remains numerically regular up to
the terminal time. To make this question concrete, we focus on affine
diffusion dynamics of the form: 
\[
\mathrm{d}\mathbf x_t=(A_t\mathbf x_t+\mathbf b_t)\mathrm{d}t+g_t \mathrm{d}\mathbf W_t.
\]
This class is particularly relevant for diffusion bridge models, because
many widely used bridge constructions have an affine analytic bridge drift,
or an affine \(h\)-drift correction, at the level of their closed-form
bridge term. Examples include Brownian bridges, GOU/GOUB-type bridges,
UniDB-GOU-type finite-penalty bridges, and many DDBM bridge kernels.
Although the full reverse-time sampler may additionally contain a learned
nonlinear score network, the explicit terminal bridge correction often
takes the affine form
$\mathbf B_h(\mathbf x,t)=A_t\mathbf x+\mathbf b_t$.
Thus, affine dynamics provide a useful setting in which the source of
terminal instability can be isolated through the time-dependent
coefficients \(A_t\) and \(\mathbf b_t\).

We further consider the common setting where \(A_t\), \(\mathbf b_t\), and
\(g_t\) are independent of the source state \(\mathbf x_0\). Such an
\(x_0\)-free forward formulation is often desirable when deriving the
corresponding reverse-time dynamics, since the forward process can then be
specified without explicit access to the source sample. 
We also assume that
{$0<\int_0^T g_t^2\,\mathrm{d}t<\infty$}.
This condition ensures that the dynamics injects a nonzero but finite
amount of stochastic noise: the terminal distribution is therefore
non-deterministic, while the total diffusion energy remains finite.
Finally, to exclude drift-level
singularity, we require the affine coefficients to remain finite up to the
terminal boundary:
\[
\sup_{0\le t\le T}\|A_t\|<\infty,
\qquad
\sup_{0\le t\le T}\|\mathbf b_t\|<\infty.
\]
This condition excludes coefficient-level singularities anywhere on the
finite interval \([0,T]\). 
This requirement is directly tied to neural
parameterization and numerical stability. When these coefficients are to
be represented or approximated by a neural network, a finite-parameter
network with finite weights can only produce finite coefficient values on
the compact time interval \([0,T]\). If the target coefficients become
unbounded at any time, the network cannot represent them uniformly without
leaving the finite-valued regime, and the resulting dynamics may suffer
from unstable training, exploding gradients, or amplified sampling errors.
The terminal-time blow-up in hard endpoint bridges provides a canonical
example of this coefficient-level singularity.

\section{Singularities of Exact Endpoint Conditioning}
\label{sec-measure-singularity}
As discussed in Section~\ref{sec:doob_h_transform}, conventional diffusion
bridge models~\citep{yue2023image} construct a hard endpoint bridge by
choosing the \(h\)-function as the transition density to a prescribed
terminal point \(\mathbf x^\star\), given by 
$h_{\mathrm{hard}}(\mathbf x_t,t)
=
\int
\delta(\mathbf z-\mathbf x^\star)
p(\mathbf z,T\mid \mathbf x_t,t)\,\mathrm d\mathbf z$.
This enforces exact terminal conditioning, requiring the process to hit
\(\mathbf x^\star\) at time \(T\)~\citep{sarkka2019applied,heng2025simulating},
which causes the terminal law to degenerates to a Dirac measure 
$p_{h_{\mathrm{hard}}}(\mathbf{x}_T,T\mid\mathbf{x}_0,0)
=
\delta(\mathbf{x}_T - \mathbf{x}^\star)$.

Proposition~\ref{thm:affine_nonsingular_terminal} shows that a finite-coefficient
affine bridge dynamics yields a non-degenerate Gaussian terminal law whose
density is smooth, continuous, bounded, and still depends on the source
state \(\mathbf x_0\). Exact endpoint conditioning violates both properties,
leading to two closely related singularities.

\textbf{Singularity of exact noise cancellation.}
A defining characteristic of diffusion processes is the continuous injection of stochastic noise. Under the reference dynamics, this perpetual randomness naturally yields a terminal distribution with strictly positive variance. In stark contrast, exact endpoint conditioning requires the process to reach a precise, deterministic state $\mathbf{x}^\star$, forcing the terminal variance to collapse exactly to zero. Because the diffusion term continuously injects noise up to the final moment $T$, the only way to perfectly counteract this inherent randomness is to apply an overwhelmingly strong deterministic correction. Consequently, to completely suppress the local stochasticity and force a Dirac terminal law, the effective drift must inevitably blow up to infinity as $t \to T$.

\textbf{Finite-time source-forgetting singularity.}
The hard constraint also imposes exact source forgetting: regardless of
\(\mathbf x_0\), all admissible paths are forced to terminate at the same
endpoint \(\mathbf x^\star\). In contrast,
Proposition~\ref{thm:affine_nonsingular_terminal} shows that, within
finite-time \(\mathbf x_0\)-free affine bridge dynamics, finite drift
coefficients necessarily preserve source dependence through the terminal
mean. Indeed, the only source-dependent term in the terminal mean is
\(\Phi(T,0)\mathbf x_0\). Exact source forgetting would require the
source-propagation map \(\Phi(T,0)\) to degenerate to the zero map, whereas
bounded affine coefficients make \(\Phi(T,0)\) invertible. Thus, within
this affine \(\mathbf x_0\)-free setting, hard endpoint collapse cannot
occur in the finite-coefficient regime and can only arise through a
terminal-time blow-up of the drift coefficients.

These singularities manifest as terminal drift blow-up. As \(t\to T\), the
conditional variance \(\bar\sigma_{t:T}^2\) vanishes, so the bridge is asked
to approximate an infinitely concentrated terminal law. This produces sharp
or unbounded score targets, amplifies discretization errors, and places a
severe burden on finite-capacity neural networks and finite-step samplers.

\section{Explicit Parametrization}
\label{sec:explicit_param}
By Eqs.~(\ref{eq:new_sde}) and~(\ref{eq-xhat-xzero}), we have 
\begin{equation}\label{eq-sde-xzeorxstarmu}
\scalebox{0.93}{$\displaystyle
\begin{aligned}
   \mathrm{d}\mathbf{x}_t = & \Bigg\{
    - \left(\theta_t + g^2_t \frac{e^{-2\bar{\theta}_{t:T}}}{\hat{\sigma}^2+\bar{\sigma}^2_{t:T}}\right) \mathbf{x}_t 
    + \left[\theta_t + g^2_t\frac{e^{-\bar{\theta}_{t:T}}}{\hat{\sigma}^2+ \bar{\sigma}^2_{t:T}}\left(e^{-\bar{\theta}_{t:T}}- \frac{
\sigma^2 \left(1-e^{-\bar{\theta}_{0:T}}\right)
}
{
\bar{\sigma}_{0:T}^2-\sigma^2
}-1\right)\right]\boldsymbol{\mu} \\
    &+ g^2_t \frac{e^{-\bar{\theta}_{t:T}}}{\hat{\sigma}^2+\bar{\sigma}^2_{t:T}} \left(- \frac{
\sigma^2 e^{-\bar{\theta}_{0:T}}
}
{
\bar{\sigma}_{0:T}^2-\sigma^2
} \mathbf{x}_0 + \frac{
\bar{\sigma}_{0:T}^2
}
{
\bar{\sigma}_{0:T}^2-\sigma^2
} \mathbf{x}^\star\right)
    \Bigg\} \mathrm{d} t + g_t \mathrm{d}\mathbf{w}_t.
\end{aligned}
$}
\end{equation}
Eq.~(\ref{eq-sde-xzeorxstarmu}) is an explicit parametrization of the
soft \(h\)-transformed SDE in Eq.~(\ref{eq:new_sde}). 
It gives a drift that is affine in
\(\mathbf{x}_t\) and depends explicitly only on the source state
\(\mathbf{x}_0\), the target state \(\mathbf{x}^{\star}\), and the prior
mean \(\boldsymbol{\mu}\).
Its parameters are chosen so that the terminal marginal under the
\(h\)-transformed path measure is exactly
$p_h(\mathbf{x}_T,T\mid\mathbf{x}_0,0)
=
\mathcal N
\left(
\mathbf{x}_T;
\mathbf{x}^{\star},
\sigma^2\mathbf I
\right)$.

\section{\texorpdfstring{\(\mathbf{x}_0\)-Free Reformulation}{x0-Free Reformulation}}
\label{sec-remove-xzero}
Recall that $\hat{\mathbf{x}} = \alpha \mathbf{x}_0 + \beta \mathbf{x}^\star + \gamma \boldsymbol{\mu}$, where $\alpha, \beta$, and $\gamma$ are constants independent of $t$. 
By Eq. (\ref{eq:gou_transition_variance_mean}), we have $\hat{\mathbf{x}} - \bar{\boldsymbol{\mu}}_{t:T} = \alpha \mathbf{x}_0 + \beta \mathbf{x}^\star -  e^{-\bar{\theta}_{t:T}}\mathbf{x}_t
    +
    \left(e^{-\bar{\theta}_{t:T}}+\gamma - 1
    \right)\boldsymbol{\mu}$.
Consequently, Eq. (\ref{eq:new_sde}) becomes
\begin{equation}
    \begin{aligned}
       \mathrm{d}\mathbf{x}_t=& \Bigg\{- \left(\theta_t + g^2_t \frac{e^{-2\bar{\theta}_{t:T}}}{\hat{\sigma}^2+\bar{\sigma}^2_{t:T}}\right) \mathbf{x}_t + \left[\theta_t + g^2_t\frac{e^{-\bar{\theta}_{t:T}}}{\hat{\sigma}^2+ \bar{\sigma}^2_{t:T}}\left(e^{-\bar{\theta}_{t:T}}+\gamma-1\right)\right]\boldsymbol{\mu} \\
        &+ g^2_t \frac{e^{-\bar{\theta}_{t:T}}}{\hat{\sigma}^2+\bar{\sigma}^2_{t:T}} \left(\alpha \mathbf{x}_0 + \beta \mathbf{x}^\star\right)\Bigg\} \mathrm{d} t + g_t \mathrm{d}\mathbf{w}_t.
    \end{aligned}
    \label{eq:x_hat_sde}
\end{equation}

The purpose of this section is to derive an \(\mathbf{x}_0\)-free
reformulation of the soft \(h\)-transformed dynamics in
Eq.~(\ref{eq:x_hat_sde}). Although Eq.~(\ref{eq:x_hat_sde}) is
well-defined and nonsingular, its drift explicitly depends on the initial
state \(\mathbf{x}_0\). For deriving the corresponding reverse-time
dynamics, it is often desirable to express the forward process only in
terms of the current state \(\mathbf{x}_t\), the target state
\(\mathbf{x}^{\star}\), and the prior mean \(\boldsymbol{\mu}\). To this
end, we construct an equivalent linear SDE whose drift no longer contains
\(\mathbf{x}_0\) explicitly.

We perform this reformulation in two steps. First, we analyze the
deterministic flow induced by the drift of Eq.~(\ref{eq:x_hat_sde}). The
closed-form flow allows us to express \(\mathbf{x}_0\) as a function of
\((\mathbf{x}_t,\mathbf{x}^{\star},\boldsymbol{\mu})\), which determines
the drift coefficients of the \(\mathbf{x}_0\)-free SDE. Second, since
modifying the drift also changes the variance evolution of the process, we
calibrate the diffusion coefficient \(\eta_t\) so that the time marginals
of the reformulated SDE remain consistent with the \(h\)-transformed
marginal distribution in Proposition~\ref{prop:ph}. 

As shown in Proposition~\ref{thm:affine_nonsingular_terminal}, any finite-time affine bridge with non-singular, initial-state-independent coefficients inherently retains a dependence on the source state $\mathbf{x}_0$.
We therefore introduce the
\(\rho\)-relaxed terminal matching
$\mathbf{x}^{\ast}
=
\mathbf{x}^{\star}
+
\rho(\mathbf{x}_0-\mathbf{x}^{\star}),
\,
\rho>0 $.
Instead of forcing the terminal Gaussian center to be exactly
\(\mathbf{x}^{\star}\) and fully independent of \(\mathbf{x}_0\), this
relaxation allows a controlled source dependence. For small \(\rho\),
\(\mathbf{x}^{\ast}\) remains close to \(\mathbf{x}^{\star}\), so the bridge
is still guided toward the target while avoiding the finite-time
source-forgetting singularity caused by exact terminal matching.

\subsection{\texorpdfstring{$\mathbf{x}_0$-Free Drift Reformulation}{x0-Free Drift Reformulation}}
We first consider the deterministic dynamics obtained by removing the
stochastic term from Eq.~(\ref{eq:x_hat_sde}). The resulting trajectory
admits the following closed-form solution.
\begin{restatable}{proposition}{newsdeodesolution}
\label{prop:new_sde_ode_solution}
The deterministic trajectory obtained by removing the Brownian noise term from Eq.~(\ref{eq:x_hat_sde}) admits the following closed-form solution:
\begin{equation}
    \begin{aligned}
        \mathbf{x}_t
        =&
        \frac{
        e^{-\bar{\theta}_{0:t}}
        \left(\hat{\sigma}^2+\bar{\sigma}^2_{t:T}\right)
        +\alpha e^{\bar{\theta}_{t:T}}
        \left(\bar{\sigma}^2_{0:T}-\bar{\sigma}^2_{t:T}\right)
        }
        {
        \hat{\sigma}^2+\bar{\sigma}^2_{0:T}
        }
        \mathbf{x}_0
        +
        \frac{
        \beta e^{\bar{\theta}_{t:T}}
        \left(\bar{\sigma}^2_{0:T}-\bar{\sigma}^2_{t:T}\right)
        }
        {
        \hat{\sigma}^2+\bar{\sigma}^2_{0:T}
        }
        \mathbf{x}^{\star}
        \\
        &+
        \left[
        1-
        \frac{
        e^{-\bar{\theta}_{0:t}}
        \left(\hat{\sigma}^2+\bar{\sigma}^2_{t:T}\right)
        +(1-\gamma)e^{\bar{\theta}_{t:T}}
        \left(\bar{\sigma}^2_{0:T}-\bar{\sigma}^2_{t:T}\right)
        }
        {
        \hat{\sigma}^2+\bar{\sigma}^2_{0:T}
        }
        \right]\boldsymbol{\mu}.
    \end{aligned}
    \label{eq:new_sde_ode_solution}
\end{equation}
\end{restatable}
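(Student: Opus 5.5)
We treat Eq.~(\ref{eq:x_hat_sde}) without the Brownian term as a scalar-coefficient linear ODE and solve it by variation of constants. The whole computation rests on one identity, which we establish first. From Eq.~(\ref{eq:gou_transition_variance_mean}), $\bar{\sigma}^2_{t:T}=\int_t^T e^{-2\bar{\theta}_{u:T}}g_u^2\,du$. Differentiating in the lower limit gives
\[
\frac{\mathrm d}{\mathrm dt}\bar{\sigma}^2_{t:T}=-g_t^2e^{-2\bar{\theta}_{t:T}}.
\]
Set $D_t:=\hat{\sigma}^2+\bar{\sigma}^2_{t:T}$, so that $D_t'=-g_t^2e^{-2\bar{\theta}_{t:T}}$ and $D_T=\hat{\sigma}^2$. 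This identity turns every time-varying coefficient in the drift into an exact derivative, and I expect identifying it to be the only non-routine step.

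\textbf{Step 1: recenter and find the fundamental solution.} First we remove $\boldsymbol{\mu}$ from the homogeneous part. Regrouping the drift of Eq.~(\ref{eq:x_hat_sde}) gives
\[
-\Big(\theta_t+\tfrac{g_t^2e^{-2\bar{\theta}_{t:T}}}{D_t}\Big)(\mathbf{x}_t-\boldsymbol{\mu})+\tfrac{g_t^2e^{-\bar{\theta}_{t:T}}}{D_t}\big(\alpha\mathbf{x}_0+\beta\mathbf{x}^\star+(\gamma-1)\boldsymbol{\mu}\big).
\]
Hence $\mathbf{y}_t:=\mathbf{x}_t-\boldsymbol{\mu}$ solves a linear ODE whose forcing is a fixed vector $\mathbf{v}:=\alpha\mathbf{x}_0+\beta\mathbf{x}^\star+(\gamma-1)\boldsymbol{\mu}$ times a scalar function. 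I then claim that the fundamental solution of the homogeneous part is
\[
\Psi(t)=\frac{e^{-\bar{\theta}_{0:t}}D_t}{D_0}.
\]
This is checked by differentiating: $\frac{\mathrm d}{\mathrm dt}\big(e^{-\bar{\theta}_{0:t}}D_t\big)=e^{-\bar{\theta}_{0:t}}\big(-\theta_tD_t-g_t^2e^{-2\bar{\theta}_{t:T}}\big)$, which gives exactly the coefficient $-(\theta_t+g_t^2e^{-2\bar{\theta}_{t:T}}/D_t)$ after dividing by $e^{-\bar{\theta}_{0:t}}D_t$. Also $\Psi(0)=1$.

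\textbf{Step 2: variation of constants.} We write $\mathbf{y}_t=\Psi(t)\big[\mathbf{y}_0+\mathbf{v}\int_0^t\Psi(s)^{-1}g_s^2e^{-\bar{\theta}_{s:T}}D_s^{-1}\,ds\big]$. The integrand is
\[
\frac{D_0e^{\bar{\theta}_{0:s}}}{D_s}\cdot\frac{g_s^2e^{-\bar{\theta}_{s:T}}}{D_s}=D_0e^{\bar{\theta}_{0:T}}\,\frac{g_s^2e^{-2\bar{\theta}_{s:T}}}{D_s^2}=D_0e^{\bar{\theta}_{0:T}}\frac{\mathrm d}{\mathrm ds}\Big(\frac1{D_s}\Big),
\]
where the last equality uses the identity above. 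The integral therefore equals $D_0e^{\bar{\theta}_{0:T}}(1/D_t-1/D_0)$. Multiplying by $\Psi(t)$ gives
\[
K_t:=\frac{e^{\bar{\theta}_{t:T}}\big(\bar{\sigma}^2_{0:T}-\bar{\sigma}^2_{t:T}\big)}{\hat{\sigma}^2+\bar{\sigma}^2_{0:T}}.
\]

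\textbf{Step 3: collect coefficients.} Undoing the shift gives $\mathbf{x}_t=\boldsymbol{\mu}+\Psi(t)(\mathbf{x}_0-\boldsymbol{\mu})+K_t\,\mathbf{v}$. Reading off the coefficients:
\begin{itemize}
\item the coefficient of $\mathbf{x}_0$ is $\Psi(t)+\alpha K_t$;
\item the coefficient of $\mathbf{x}^\star$ is $\beta K_t$;
\item the coefficient of $\boldsymbol{\mu}$ is $1-\Psi(t)-(1-\gamma)K_t$.
\end{itemize}
Substituting $\Psi(t)=e^{-\bar{\theta}_{0:t}}(\hat{\sigma}^2+\bar{\sigma}^2_{t:T})/(\hat{\sigma}^2+\bar{\sigma}^2_{0:T})$ and the expression for $K_t$ reproduces Eq.~(\ref{eq:new_sde_ode_solution}) term by term.

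As a sanity check, at $t=0$ we have $\Psi=1$ and $K_0=0$, so $\mathbf{x}_t=\mathbf{x}_0$, as required.
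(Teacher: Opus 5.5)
Your proof is correct, and it takes a different route from the paper's. You solve the noise-free linear ODE directly. After recentering at $\boldsymbol{\mu}$, the single identity $\frac{\mathrm d}{\mathrm dt}\bar{\sigma}^2_{t:T}=-g_t^2e^{-2\bar{\theta}_{t:T}}$ does two things: it yields the propagator $\Psi(t)=e^{-\bar{\theta}_{0:t}}(\hat{\sigma}^2+\bar{\sigma}^2_{t:T})/(\hat{\sigma}^2+\bar{\sigma}^2_{0:T})$, and it turns the Duhamel integrand into an exact derivative of $1/(\hat{\sigma}^2+\bar{\sigma}^2_{s:T})$. The regrouping, the propagator check, the integral and the coefficient collection all go through.

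The paper never integrates the ODE. It argues that, because the drift is affine and the noise additive, the noise-free trajectory equals $\mathbb{E}[\mathbf{x}_t]$. It then reads this mean off the Gaussian marginal of Proposition~\ref{prop:ph} and substitutes $\hat{\mathbf{x}}=\alpha\mathbf{x}_0+\beta\mathbf{x}^\star+\gamma\boldsymbol{\mu}$ and $\bar{\boldsymbol{\mu}}_{0:T}$. Finally it collects terms using the purely algebraic identities $e^{-\bar{\theta}_{0:T}}=e^{-\bar{\theta}_{0:t}}e^{-\bar{\theta}_{t:T}}$ and $\bar{\sigma}^2_{0:T}=e^{-2\bar{\theta}_{t:T}}\bar{\sigma}^2_{0:t}+\bar{\sigma}^2_{t:T}$. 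The latter is what converts the factor $e^{-\bar{\theta}_{t:T}}\bar{\sigma}^2_{0:t}$ into the same $K_t$ you obtain.

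The paper's route is shorter given the machinery already in place. It also shows in the same stroke that this trajectory is the mean of the $h$-transformed marginal, which is exactly what the $\mathbf{x}_0$-free reformulation and Proposition~\ref{prop:diffuse} rely on. However, it inherits its validity from Proposition~\ref{prop:ph}, and hence from the Doob $h$-transform marginal formula.

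Your argument is self-contained and elementary, and proves the statement literally as an ODE fact with no probabilistic input. It needs only $\hat{\sigma}^2+\bar{\sigma}^2_{s:T}>0$ on $[0,t]$, which holds for $\hat{\sigma}^2>0$. Combined with the observation that the mean of a linear additive-noise SDE solves the noise-free ODE, it gives an independent check of the mean in Proposition~\ref{prop:ph}. It also exposes the structure used later: your $\mathbf{x}_0$-coefficient $\Psi(t)+\alpha K_t$ is exactly $\phi_t/(\hat{\sigma}^2+\bar{\sigma}^2_{0:T})$. That is the quantity whose nonvanishing Lemma~\ref{lem-terminal-relaxion} characterizes and which must be inverted to eliminate $\mathbf{x}_0$.
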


\begin{proof}
Consider the stochastic process \(\mathbf{x}_t\) governed by
Eq.~(\ref{eq:x_hat_sde}). 
Define $\mathbf{m}_t := \mathbb{E}[\mathbf{X}_t]$.
Since Eq.~(\ref{eq:x_hat_sde}) is linear in
\(\mathbf{x}_t\) and has an additive diffusion coefficient independent of
\(\mathbf{x}_t\), 
the deterministic trajectory obtained by removing the Brownian noise term from Eq.~(\ref{eq:x_hat_sde}) is precisely \(\mathbf{m}_t\).
Moreover,
Eq.~(\ref{eq:ph}) implies
\begin{align}
\mathbf{x}_t \sim \mathcal N
\left(
\bar{\boldsymbol{\mu}}_{0:t}
+
\frac{
e^{-\bar{\theta}_{t:T}}\bar{\sigma}_{0:t}^2
}
{
\hat{\sigma}^2+\bar{\sigma}_{0:T}^2
}
(\hat{\mathbf{x}}-\bar{\boldsymbol{\mu}}_{0:T}),
\frac{
\bar{\sigma}_{0:t}^2
(\hat{\sigma}^2+\bar{\sigma}_{t:T}^2)
}
{
\hat{\sigma}^2+\bar{\sigma}_{0:T}^2
}
\mathbf I
\right).
\end{align}
Hence, 
\begin{equation}
    \mathbf{m}_t
    =
    \bar{\boldsymbol{\mu}}_{0:t}
    +
    \frac{
    e^{-\bar{\theta}_{t:T}}\bar{\sigma}_{0:t}^2
    }
    {
    \hat{\sigma}^2+\bar{\sigma}_{0:T}^2
    }
    \left(
    \hat{\mathbf{x}}
    -
    \bar{\boldsymbol{\mu}}_{0:T}
    \right).
    \label{eq:mean_h_process}
\end{equation}
Recall that
\begin{equation}
    \hat{\mathbf{x}}
    =
    \alpha\mathbf{x}_0
    +
    \beta\mathbf{x}^{\star}
    +
    \gamma\boldsymbol{\mu}.
    \label{eq:xhat_alpha_beta_gamma}
\end{equation}
Moreover, for the GOU process, 
\begin{equation}  \label{eq:gou_means}
    \bar{\boldsymbol{\mu}}_{0:T}
    =
    e^{-\bar{\theta}_{0:T}}\mathbf{x}_0
    +
    \left(
    1-e^{-\bar{\theta}_{0:T}}
    \right)\boldsymbol{\mu}, \qquad
    \bar{\sigma}_{0:T}^2
    =
    e^{-2\bar{\theta}_{t:T}}
    \bar{\sigma}_{0:t}^2
    +
    \bar{\sigma}_{t:T}^2.
\end{equation}
For compactness, define
\begin{equation}
    K_t:=
    \frac{
    e^{\bar{\theta}_{t:T}}
    \left(
    \bar{\sigma}_{0:T}^2-\bar{\sigma}_{t:T}^2
    \right)
    }
    {\hat{\sigma}^2+\bar{\sigma}_{0:T}^2}.
    \label{eq:Kt_def}
\end{equation}
Substituting Eqs.~(\ref{eq:xhat_alpha_beta_gamma}) and~(\ref{eq:gou_means})
into Eq.~(\ref{eq:mean_h_process})
gives
\begin{equation}
    \begin{aligned}
    \mathbf{m}_t
    =&
    e^{-\bar{\theta}_{0:t}}\mathbf{x}_0
    +
    \left(
    1-e^{-\bar{\theta}_{0:t}}
    \right)\boldsymbol{\mu}
    \\
    &+
    K_t
    \Big[
    \alpha\mathbf{x}_0
    +
    \beta\mathbf{x}^{\star}
    +
    \gamma\boldsymbol{\mu}
    -
    e^{-\bar{\theta}_{0:T}}\mathbf{x}_0
    -
    \left(
    1-e^{-\bar{\theta}_{0:T}}
    \right)\boldsymbol{\mu}
    \Big].
    \end{aligned}
    \label{eq:mean_expand_before_collecting}
\end{equation}
Hence,
\begin{equation}
    \begin{aligned}
    \mathbf{m}_t
    =&
    \left[
    e^{-\bar{\theta}_{0:t}}
    +
    K_t
    \left(
    \alpha-e^{-\bar{\theta}_{0:T}}
    \right)
    \right]\mathbf{x}_0
    +
    \beta K_t\mathbf{x}^{\star}
    \\
    &+
    \left[
    1-e^{-\bar{\theta}_{0:t}}
    +
    K_t
    \left(
    \gamma-1+e^{-\bar{\theta}_{0:T}}
    \right)
    \right]\boldsymbol{\mu}.
    \end{aligned}
    \label{eq:mean_expand_collected}
\end{equation}
Using $e^{-\bar{\theta}_{0:T}}
=
e^{-\bar{\theta}_{0:t}}e^{-\bar{\theta}_{t:T}}$,
we have
\begin{equation}
    K_t e^{-\bar{\theta}_{0:T}}
    =
    e^{-\bar{\theta}_{0:t}}
    \frac{
    \bar{\sigma}_{0:T}^2-\bar{\sigma}_{t:T}^2
    }
    {
    \hat{\sigma}^2+\bar{\sigma}_{0:T}^2
    }.
\end{equation}
Therefore,
\begin{equation}
    e^{-\bar{\theta}_{0:t}}
    -
    K_t e^{-\bar{\theta}_{0:T}}
    =
    e^{-\bar{\theta}_{0:t}}
    \frac{
    \hat{\sigma}^2+\bar{\sigma}_{t:T}^2
    }
    {
    \hat{\sigma}^2+\bar{\sigma}_{0:T}^2
    }.
    \label{eq:key_coeff_identity}
\end{equation}
Substituting Eq.~(\ref{eq:key_coeff_identity}) and the definition of \(K_t\)
into Eq.~(\ref{eq:mean_expand_collected}) yields
\begin{equation}
    \begin{aligned}
        \mathbf{m}_t
        =&
        \frac{
        e^{-\bar{\theta}_{0:t}}
        \left(
        \hat{\sigma}^2+\bar{\sigma}_{t:T}^2
        \right)
        +
        \alpha e^{\bar{\theta}_{t:T}}
        \left(
        \bar{\sigma}_{0:T}^2-\bar{\sigma}_{t:T}^2
        \right)
        }
        {
        \hat{\sigma}^2+\bar{\sigma}_{0:T}^2
        }
        \mathbf{x}_0
        +
        \frac{
        \beta e^{\bar{\theta}_{t:T}}
        \left(
        \bar{\sigma}_{0:T}^2-\bar{\sigma}_{t:T}^2
        \right)
        }
        {
        \hat{\sigma}^2+\bar{\sigma}_{0:T}^2
        }
        \mathbf{x}^{\star}
        \\
        &+
        \left[
        1-
        \frac{
        e^{-\bar{\theta}_{0:t}}
        \left(
        \hat{\sigma}^2+\bar{\sigma}_{t:T}^2
        \right)
        +
        (1-\gamma)e^{\bar{\theta}_{t:T}}
        \left(
        \bar{\sigma}_{0:T}^2-\bar{\sigma}_{t:T}^2
        \right)
        }
        {
        \hat{\sigma}^2+\bar{\sigma}_{0:T}^2
        }
        \right]\boldsymbol{\mu}.
    \end{aligned}
    \label{eq:mean_final_expanded}
\end{equation}
The proposition is proved.
\end{proof}

Rearranging
Eq.~(\ref{eq:new_sde_ode_solution}), we obtain 
\begin{equation}
\begin{aligned}
    \mathbf{x}_0
    =&
    \frac{
    \left(\hat{\sigma}^2+\bar{\sigma}^2_{0:T}\right)\mathbf{x}_t
    -\beta e^{\bar{\theta}_{t:T}}
    \left(\bar{\sigma}^2_{0:T}-\bar{\sigma}^2_{t:T}\right)
    \mathbf{x}^{\star}
    }
    {
    \phi_t
    }
    \\
    &-
    \frac{
    \left[
    \left(\hat{\sigma}^2+\bar{\sigma}^2_{0:T}\right)
    -e^{-\bar{\theta}_{0:t}}
    \left(\hat{\sigma}^2+\bar{\sigma}^2_{t:T}\right)
    -(1-\gamma)e^{\bar{\theta}_{t:T}}
    \left(\bar{\sigma}^2_{0:T}-\bar{\sigma}^2_{t:T}\right)
    \right]\boldsymbol{\mu}
    }
    {
    \phi_t
    },
\end{aligned}
\label{eq:x_0_x_hat}
\end{equation}
where $\phi_t$ is defined in Eq.~(\ref{eq-def-phit}).

Substituting $\mathbf{x}_0$ into the drift of
Eq.~(\ref{eq:x_hat_sde}) gives the reformulated forward dynamics
\begin{equation}\label{eq:final_sde}
   \mathrm{d}\mathbf{x}_t
    =
    \left(
    f_t\mathbf{x}_t
    +
    m_t\mathbf{x}^{\star}
    +
    h_t\boldsymbol{\mu}
    \right)\mathrm{d} t
    +
    \eta_t \mathrm{d}\mathbf{w}_t,
\end{equation}
where $\eta_t$ is an undetermined diffusion coefficient, and $f_t$, $m_t$ and $h_t$ are defined in Eqs.~(\ref{eq:ft_def}) and~(\ref{eq:ht_def}), respectively.

\subsection{\texorpdfstring{Well-Posedness via \(\rho\)-Relaxed Terminal Matching}{Well-Posedness via rho-Relaxed Terminal Matching}}
Note that Eq.~(\ref{eq:final_sde}) becomes ill-defined whenever
\(\phi_t=0\), since the drift coefficients contain \(1/\phi_t\).
Therefore, to ensure that the reformulated dynamics remain well-defined
and free of such singularities, we require \(\phi_t\neq 0\) over the time
interval of interest. The following lemma gives an equivalent
characterization of this condition.
Recall that $\hat{\mathbf{x}} = \alpha \mathbf{x}_0 + \beta \mathbf{x}^\star + \gamma \boldsymbol{\mu}$
and 
$\mathbf{x}^{\ast}
=
\mathbf{x}^{\star}
+
\rho(\mathbf{x}_0-\mathbf{x}^{\star})$.


\begin{restatable}{lemma}{lemterminalrelaxion}
\label{lem-terminal-relaxion}
{The following statements are equivalent:
(i)~\(\phi_t \neq 0\) for all \(t\in[0,T]\);
(ii)~\(\alpha > -e^{-\bar{\theta}_{0:T}}\hat{\sigma}^2 / \bar{\sigma}_{0:T}^2\); and
(iii)~\(\rho>0\).}
\end{restatable}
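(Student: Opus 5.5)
The plan is to reduce (i) to a sign condition on an affine function of $\bar{\sigma}^2_{t:T}$, and then to translate (ii) into (iii) using the explicit weight parameters of Theorem~\ref{thm-ph-key}.

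\textbf{Step 1: normalize $\phi_t$.} Since $e^{\bar{\theta}_{t:T}}>0$ and $e^{-\bar{\theta}_{0:t}}e^{-\bar{\theta}_{t:T}}=e^{-\bar{\theta}_{0:T}}$, dividing the definition of $\phi_t$ in Eq.~(\ref{eq-def-phit}) by $e^{\bar{\theta}_{t:T}}$ gives
\[
\psi_t:=e^{-\bar{\theta}_{t:T}}\phi_t=e^{-\bar{\theta}_{0:T}}\bigl(\hat{\sigma}^2+s_t\bigr)+\alpha\bigl(\bar{\sigma}^2_{0:T}-s_t\bigr),\qquad s_t:=\bar{\sigma}^2_{t:T}.
\]
Thus $\phi_t\neq0$ if and only if $\psi_t\neq 0$. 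As a function of $s$, $\psi$ is affine. Moreover $t\mapsto s_t$ is continuous, with $s_0=\bar{\sigma}^2_{0:T}$ and $s_T=0$, so by the intermediate value theorem $\{s_t : t\in[0,T]\}$ is exactly the interval $[0,\bar{\sigma}^2_{0:T}]$.

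\textbf{Step 2: (i)$\Leftrightarrow$(ii).} At $s=\bar{\sigma}^2_{0:T}$ (i.e.\ $t=0$) we have $\psi=e^{-\bar{\theta}_{0:T}}(\hat{\sigma}^2+\bar{\sigma}^2_{0:T})>0$. At $s=0$ (i.e.\ $t=T$) we have $\psi_T=e^{-\bar{\theta}_{0:T}}\hat{\sigma}^2+\alpha\bar{\sigma}^2_{0:T}$. An affine function on a closed interval that is positive at one endpoint has no zero on the interval exactly when it is also positive at the other endpoint. If $\psi_T\le 0$, either $\psi_T=0$ or a sign change forces a zero in between. Hence (i) holds iff $\psi_T>0$, which rearranges to $\alpha>-e^{-\bar{\theta}_{0:T}}\hat{\sigma}^2/\bar{\sigma}^2_{0:T}$, i.e.\ (ii).

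\textbf{Step 3: (ii)$\Leftrightarrow$(iii).} Substitute $\mathbf{x}^\ast=(1-\rho)\mathbf{x}^\star+\rho\mathbf{x}_0$ and $\bar{\boldsymbol{\mu}}_{0:T}=e^{-\bar{\theta}_{0:T}}\mathbf{x}_0+(1-e^{-\bar{\theta}_{0:T}})\boldsymbol{\mu}$ into Eq.~(\ref{eq-xhat-sigmahat}), and read off the $\mathbf{x}_0$-coefficient of $\hat{\mathbf{x}}$ (treating $\mathbf{x}_0,\mathbf{x}^\star,\boldsymbol{\mu}$ as free). This gives
\[
\alpha=\frac{\rho\bar{\sigma}^2_{0:T}-\sigma^2e^{-\bar{\theta}_{0:T}}}{\bar{\sigma}^2_{0:T}-\sigma^2},\qquad \hat{\sigma}^2=\frac{\sigma^2\bar{\sigma}^2_{0:T}}{\bar{\sigma}^2_{0:T}-\sigma^2}.
\]
Therefore
\[
\alpha+\frac{e^{-\bar{\theta}_{0:T}}\hat{\sigma}^2}{\bar{\sigma}^2_{0:T}}=\frac{\rho\,\bar{\sigma}^2_{0:T}}{\bar{\sigma}^2_{0:T}-\sigma^2}.
\]
Since $0<\sigma^2<\bar{\sigma}^2_{0:T}$, this quantity is positive iff $\rho>0$, which establishes (ii)$\Leftrightarrow$(iii).

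The main obstacle I expect is Step 3, specifically justifying that $\alpha$ is the $\mathbf{x}_0$-coefficient induced by the $\rho$-relaxed center via Theorem~\ref{thm-ph-key}, so that $\alpha$ and $\hat{\sigma}^2$ are linked rather than independent. Once that identification is in place, the remaining work is routine algebra. The minor point in Step 2 is to make sure the boundary case $\psi_T=0$ (a zero exactly at $t=T$) is correctly excluded by the strict inequalities in (ii) and (iii).
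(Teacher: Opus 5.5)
Your proposal is correct and follows the paper's route. For (ii)\(\Leftrightarrow\)(iii) you use the same coefficient identification, \(\alpha+e^{-\bar{\theta}_{0:T}}\hat{\sigma}^2/\bar{\sigma}^2_{0:T}=\rho\,\bar{\sigma}^2_{0:T}/(\bar{\sigma}^2_{0:T}-\sigma^2)\), and for (i)\(\Leftrightarrow\)(ii) the same endpoint-sign argument (\(\phi_0>0\), the sign of \(\phi_T\), and the IVT). Your observation that \(e^{-\bar{\theta}_{t:T}}\phi_t\) is affine in \(\bar{\sigma}^2_{t:T}\) is a slightly cleaner substitute for the paper's monotone threshold \(\alpha_t\), since it never divides by \(\bar{\sigma}^2_{0:T}-\bar{\sigma}^2_{t:T}\); just note that the inclusion \(\bar{\sigma}^2_{t:T}\in[0,\bar{\sigma}^2_{0:T}]\) comes from the nonnegative integrand, not from the IVT, which only gives the reverse containment.
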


\begin{proof}
At first, we show that  
\begin{align}\label{eq-alpha-rho-relation}
    \alpha
    >
    -\frac{
    e^{-\bar{\theta}_{0:T}}\hat{\sigma}^2    
    }
    {
    \bar{\sigma}_{0:T}^2
    }\qquad \rightleftharpoons \qquad \rho>0
\end{align}
Recall that 
$\mathbf{x}^{\ast}
=
\mathbf{x}^{\star}
+
\rho(\mathbf{x}_0-\mathbf{x}^{\star})$.
Thus, by Eqs. (\ref{eq-xhat-sigmahat}) and (\ref{eq:gou_transition_variance_mean}), we have
\begin{align*}
\hat{\mathbf{x}} &= \frac{
\bar{\sigma}_{0:T}^2
}
{
\bar{\sigma}_{0:T}^2-\sigma^2
}
\mathbf{x}^\ast
-
\frac{
\sigma^2
}
{
\bar{\sigma}_{0:T}^2-\sigma^2
}
\bar{\boldsymbol{\mu}}_{0:T} 
=  - \frac{
\sigma^2 e^{-\bar{\theta}_{0:T}}
}
{
\bar{\sigma}_{0:T}^2-\sigma^2
} \mathbf{x}_0  + \frac{
\bar{\sigma}_{0:T}^2
}
{
\bar{\sigma}_{0:T}^2-\sigma^2
}
\mathbf{x}^\ast - \frac{
\sigma^2 \left(1-e^{-\bar{\theta}_{0:T}}\right)
}
{
\bar{\sigma}_{0:T}^2-\sigma^2
}\boldsymbol{\mu}\\
& = 
\left(\frac{\rho
\bar{\sigma}_{0:T}^2
}
{
\bar{\sigma}_{0:T}^2-\sigma^2
} - \frac{
\sigma^2 e^{-\bar{\theta}_{0:T}}
}
{
\bar{\sigma}_{0:T}^2-\sigma^2
}\right) \mathbf{x}_0  + \frac{\left(1- \rho\right)
\bar{\sigma}_{0:T}^2
}
{
\bar{\sigma}_{0:T}^2-\sigma^2
} 
\mathbf{x}^\star - \frac{
\sigma^2 \left(1-e^{-\bar{\theta}_{0:T}}\right)
}
{
\bar{\sigma}_{0:T}^2-\sigma^2
}\boldsymbol{\mu}.
\end{align*}
Combined with $\hat{\mathbf{x}} = \alpha \mathbf{x}_0 + \beta \mathbf{x}^\star + \gamma \boldsymbol{\mu}$, 
we have 
\begin{align*}
\alpha = \frac{\rho
\bar{\sigma}_{0:T}^2
}
{
\bar{\sigma}_{0:T}^2-\sigma^2
} - \frac{
\sigma^2 e^{-\bar{\theta}_{0:T}}
}
{
\bar{\sigma}_{0:T}^2-\sigma^2
} ={\rho \frac{\hat{\sigma}^2+\bar{\sigma}^2_{0:T} }{\bar{\sigma}^2_{0:T}}-\frac{e^{-\bar{\theta}_{0:T}\hat{\sigma}^2}}{\bar{\sigma}^2_{0:T}}}
\end{align*}

Since \(0<\sigma^2<\bar{\sigma}_{0:T}^2\), Eq.~(\ref{eq-alpha-rho-relation}) is immediate.

In the next, we show that 
\begin{align}\label{eq-alpha-phi-relation}
    \alpha
    >
    -\frac{
    e^{-\bar{\theta}_{0:T}}\hat{\sigma}^2    
    }
    {
    \bar{\sigma}_{0:T}^2
    }\qquad \rightleftharpoons \qquad \phi_t\neq 0 \quad  \forall\, t\in [0,T].
\end{align}
Then the lemma is immediate.
Recall that 
\[\phi_t
    =
    e^{-\bar{\theta}_{0:t}}
    \left(\hat{\sigma}^2+\bar{\sigma}^2_{t:T}\right)
    +
    \alpha e^{\bar{\theta}_{t:T}}
    \left(\bar{\sigma}^2_{0:T}-\bar{\sigma}^2_{t:T}\right).\]
Define
\begin{equation}\label{eq-def-alphat}
    \alpha_t = - \frac{e^{-\bar{\theta}_{0:t}}}{e^{\bar{\theta}_{t:T}}} \frac{\hat{\sigma}^2 + \bar{\sigma}^2_{t:T}}{\bar{\sigma}^2_{0:T} - \bar{\sigma}^2_{t:T}}  = - e^{-\bar{\theta}_{0:T}} \frac{\hat{\sigma}^2 + \bar{\sigma}^2_{t:T}}{\bar{\sigma}^2_{0:T} - \bar{\sigma}^2_{t:T}}.
\end{equation}

At first, we show that $\phi_t\neq 0$ for all $t\in [0,T]$ implies  
\begin{align}\label{eq-alpha-lowerbound}
\alpha
    >
    \frac{-e^{-\bar{\theta}_{0:T}}\hat{\sigma}^2}{\bar{\sigma}_{0:T}^2}.
\end{align}
Assume for contradiction that $\phi_t\neq 0$ for all $t\in [0,T]$ and 
\[\alpha
    \leq
    \frac{-e^{-\bar{\theta}_{0:T}}\hat{\sigma}^2}{\bar{\sigma}_{0:T}^2}.\] 
Since $e^{\bar{\theta}_{0:T}} \geq 0$ and $\bar{\sigma}^2_{0:T}-\bar{\sigma}^2_{T:T}>0$,
we have $\phi_t \leq 0$.
Note that
\[
\phi_0 = e^{-\bar{\theta}_{0:0}}\left(\hat{\sigma}^2+\bar{\sigma}^2_{0:T}\right) = \hat{\sigma}^2+\bar{\sigma}^2_{0:T} > 0, \,\,\, \phi_t=e^{-\bar{\theta}_{0:T}}\hat{\sigma}^2+ \alpha \bar{\sigma}^2_{0:T} \leq 0. 
\]
Moreover, by Eq.~(\ref{eq:gou_transition_variance_mean}),
\(\bar{\theta}_{0:t}\), \(\bar{\theta}_{t:T}\), and
\(\bar{\sigma}_{t:T}^2\) are continuous functions of \(t\).
Therefore, \(\phi_t\) is also continuous in \(t\). Since
\(\phi_0>0\) and \(\phi_t\leq 0\), the Intermediate Value Theorem implies
that there exists some \(t^\ast\in(0,T]\) such that
$\phi_{t^\ast}=0$,
which leads to a contradiction.

In the next, we show that Eq. (\ref{eq-alpha-lowerbound}) implies that $\phi_t\neq 0$ for all $t\in [0,T]$.
Assume that Eq. (\ref{eq-alpha-lowerbound}) holds.
Recall the definition of $\alpha_t$ in Eq.~(\ref{eq-def-alphat}).
For each fixed \(t\in(0,T]\), we have \(\phi_t>0\) if and only if
$\alpha>\alpha_t$.
Furthermore, since \(\bar{\sigma}_{t:T}^2\) is monotonically decreasing in
\(t\), and since
\[
e^{-\bar{\theta}_{0:T}}>0,\qquad
\hat{\sigma}^2>0,\qquad
\bar{\sigma}_{t:T}^2\ge0,
\]
it follows that the threshold \(\alpha_t\) is monotonically increasing as a
function of \(t\). 
Hence, by Eq. (\ref{eq-alpha-lowerbound}) we obtain
\begin{align}
\alpha_t \leq \alpha_T = \frac{-e^{-\bar{\theta}_{0:T}}\hat{\sigma}^2}{\bar{\sigma}_{0:T}^2} < \alpha \qquad \forall t\in (0,T].
\end{align}
Thus, $\phi_t >0$ for all $t\in (0,T]$.
Recall that  $\phi_0 >0$.
We obtain that $\phi_t >0$ for all $t\in [0,T]$.
In summary, Eq. (\ref{eq-alpha-phi-relation}) is proved.
The lemma is established.
\end{proof}

The above lemma characterizes the admissible range of \(\alpha\) under which
the reformulated dynamics remain well-defined. In particular, under the
relaxed parametrization
$\mathbf{x}^{\ast}
=
\mathbf{x}^{\star}
+
\rho(\mathbf{x}_0-\mathbf{x}^{\star})$,
the condition \(\phi_t\neq 0\) on \([0,T]\) is equivalent to
\(\rho>0\). Hence, for the \(\mathbf{x}_0\)-free reformulated dynamics, the
auxiliary terminal center \(\mathbf{x}^{\ast}\) cannot be chosen to coincide
exactly with the desired target \(\mathbf{x}^{\star}\). Instead, a small but
strictly positive relaxation is required, so that \(\mathbf{x}^{\ast}\)
remains close to \(\mathbf{x}^{\star}\) while the coefficient \(\phi_t\)
stays nonzero throughout the time interval.

\subsection{{Variance Matching and Diffusion Coefficient Calibration}}

It remains to specify the diffusion coefficient $\eta_t$ in
Eq.~(\ref{eq:final_sde}). This coefficient cannot be chosen
independently of the drift. Indeed, for a linear SDE of the form
$
   \mathrm{d}\mathbf{x}_t
    =
    \left(f_t\mathbf{x}_t+\mathbf{b}_t\right)\mathrm{d} t
    +
    \eta_t \mathrm{d}\mathbf{w}_t,
$
the marginal covariance evolves according to
$
    \frac{\mathrm{d}}{\mathrm{d} t}\mathrm{Cov}(\mathbf{x}_t)
    =
    2f_t\,\mathrm{Cov}(\mathbf{x}_t)
    +
    \eta_t^2\mathbf{I}.
$
Therefore, once the drift coefficient is changed from that in
Eq.~(\ref{eq:x_hat_sde}) to the coefficient $f_t$ in
Eq.~(\ref{eq:ft_def}), the stochastic perturbation must be recalibrated
so that the reformulated process preserves the target marginal
distribution in Proposition~\ref{prop:ph}. This leads to the following
result.

\begin{restatable}{proposition}{propdiffuse}
\label{prop:diffuse}
Assume that the drift in Eq.~(\ref{eq:final_sde}) has been chosen so that
the mean trajectory matches that of the \(h\)-transformed marginal
distribution in
Proposition~\ref{prop:ph}.
Then the time marginals are consistent if the diffusion
coefficient \(\eta_t\) is chosen as in Eq.~(\ref{eq:eta}).
\end{restatable}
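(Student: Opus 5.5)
Since Eq.~(\ref{eq:final_sde}) is a linear SDE with additive noise, started from the deterministic point $\mathbf{x}_0$, all its time marginals are isotropic Gaussians $\mathcal N(\mathbf m_t, v_t\mathbf I)$. Consistency with Proposition~\ref{prop:ph} therefore amounts to matching two quantities: the mean and the scalar variance. The mean is matched by hypothesis. The drift coefficients $f_t,m_t,h_t$ were obtained by substituting Eq.~(\ref{eq:x_0_x_hat}) into the drift of Eq.~(\ref{eq:x_hat_sde}), so the mean ODE $\dot{\mathbf m}_t=f_t\mathbf m_t+m_t\mathbf{x}^\star+h_t\boldsymbol\mu$ is solved by the trajectory of Proposition~\ref{prop:new_sde_ode_solution}. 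This trajectory equals the mean in Eq.~(\ref{eq:ph}), and by uniqueness for linear ODEs it is the only solution. So the whole task reduces to the variance.

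For the variance, I will use the covariance ODE $\dot v_t=2f_tv_t+\eta_t^2$ with $v_0=0$. I will set $v_t:=\bar\sigma'^2_{0:t}=\bar\sigma^2_{0:t}(\hat\sigma^2+\bar\sigma^2_{t:T})/(\hat\sigma^2+\bar\sigma^2_{0:T})$ and define $\eta_t^2:=\dot v_t-2f_tv_t$. Solutions of this linear ODE are unique, so with this choice of $\eta_t$ the variance of Eq.~(\ref{eq:final_sde}) equals $\bar\sigma'^2_{0:t}$ for all $t$, and together with the mean this gives consistency of the marginals. What remains is to show that $\dot v_t-2f_tv_t$ equals the square of Eq.~(\ref{eq:eta}) and is nonnegative.

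\textbf{Computing $\dot v_t$.} I will use $\frac{d}{dt}\bar\theta_{0:t}=\theta_t$ and $\frac{d}{dt}\bar\theta_{t:T}=-\theta_t$. From Eq.~(\ref{eq:gou_transition_variance_mean}), $\frac{d}{dt}\bar\sigma^2_{t:T}=-g_t^2e^{-2\bar\theta_{t:T}}$. The identity $\bar\sigma^2_{0:T}=e^{-2\bar\theta_{t:T}}\bar\sigma^2_{0:t}+\bar\sigma^2_{t:T}$ from Eq.~(\ref{eq:gou_means}) gives $\bar\sigma^2_{0:t}=e^{2\bar\theta_{t:T}}(\bar\sigma^2_{0:T}-\bar\sigma^2_{t:T})$, hence $\frac{d}{dt}\bar\sigma^2_{0:t}=-2\theta_t\bar\sigma^2_{0:t}+g_t^2$. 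Combining these,
\[
\dot v_t=\frac{(g_t^2-2\theta_t\bar\sigma^2_{0:t})(\hat\sigma^2+\bar\sigma^2_{t:T})-g_t^2e^{-2\bar\theta_{t:T}}\bar\sigma^2_{0:t}}{\hat\sigma^2+\bar\sigma^2_{0:T}}.
\]

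\textbf{Subtracting $2f_tv_t$.} By Eq.~(\ref{eq:ft_def}), $2f_tv_t=2g_t^2e^{-\bar\theta_{t:T}}(\alpha-e^{-\bar\theta_{0:T}})v_t/\phi_t-2\theta_tv_t$. The $-2\theta_t v_t$ piece cancels the $\theta_t$ term of $\dot v_t$ exactly, so every remaining term is proportional to $g_t^2$. This matches the form $g_t^2\{\cdots\}/\bar\sigma^2_{0:T}$ in Eq.~(\ref{eq:eta}).

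\textbf{Main obstacle: matching the form of Eq.~(\ref{eq:eta}).} The difficulty is rewriting the remaining bracket in terms of $\varphi_t$ and $\sigma^2$. The plan has three steps.
\begin{enumerate}
\item Use Theorem~\ref{thm-ph-key} in the form $\hat\sigma^2=\sigma^2\bar\sigma^2_{0:T}/(\bar\sigma^2_{0:T}-\sigma^2)$. This gives $\hat\sigma^2+\bar\sigma^2_{0:T}=\bar\sigma^4_{0:T}/(\bar\sigma^2_{0:T}-\sigma^2)$ and $(\hat\sigma^2+\bar\sigma^2_{t:T})(\bar\sigma^2_{0:T}-\sigma^2)=\varphi_t$.
\item Substituting these, $(\hat\sigma^2+\bar\sigma^2_{t:T})/(\hat\sigma^2+\bar\sigma^2_{0:T})=\varphi_t/\bar\sigma^4_{0:T}$ and $1/(\hat\sigma^2+\bar\sigma^2_{0:T})=(\bar\sigma^2_{0:T}-\sigma^2)/\bar\sigma^4_{0:T}$.
\item The $\varphi_t$ terms then produce $\varphi_t[1-2e^{-\bar\theta_{t:T}}\bar\sigma^2_{0:t}(\alpha-e^{-\bar\theta_{0:T}})/\phi_t]$, and the last term produces $\bar\sigma^2_{0:t}e^{-2\bar\theta_{t:T}}(\sigma^2-\bar\sigma^2_{0:T})$, each up to the common normalization. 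I expect the overall prefactor to be $g_t^2/\bar\sigma^4_{0:T}$, i.e.\ one power of $\bar\sigma^2_{0:T}$ more than in Eq.~(\ref{eq:eta}). I will track it carefully: either it is absorbed by a normalization I am overlooking, or the stated formula has a typo in the prefactor power.
\end{enumerate}

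\textbf{Nonnegativity.} Finally I must check $\dot v_t-2f_tv_t\ge0$, so that the square root in Eq.~(\ref{eq:eta}) is real. This is a genuine open step in the plan; the difficulty is the sign of the middle term. By Lemma~\ref{lem-terminal-relaxion}, $\phi_t>0$, but $(\alpha-e^{-\bar\theta_{0:T}})$ can have either sign, and when it is positive the term $-2e^{-\bar\theta_{t:T}}\bar\sigma^2_{0:t}(\alpha-e^{-\bar\theta_{0:T}})/\phi_t$ is negative. A separate estimate is therefore needed. If I cannot prove it in general, I will state it as an explicit admissibility condition on $(\alpha,\sigma)$, under which the proposition's claim that this choice of $\eta_t$ yields consistent marginals holds.
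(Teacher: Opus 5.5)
Your plan is the paper's proof. Both use Gaussian marginals, take the mean as matched by hypothesis, solve $\dot V_t=2f_tV_t+\eta_t^2$ with $V_0=V^\ast_0=0$, set $\eta_t^2:=\dot V^\ast_t-2f_tV^\ast_t$, and differentiate $\bar\sigma^2_{t:T}$ and $\bar\sigma^2_{0:t}$. The only difference is that the paper rewrites $V^\ast_t=\bar\sigma^2_{0:t}\varphi_t/\bar\sigma^4_{0:T}$ before differentiating, and you do it after. There is no typo in the prefactor. Eq.~(\ref{eq:eta}) gives $\eta_t$ with $g_t/\bar\sigma^2_{0:T}$ in front of a square root, so $\eta_t^2$ carries exactly the $g_t^2/\bar\sigma^4_{0:T}$ your computation produces.

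On nonnegativity you are right that something is missing; the paper's proof never checks it. However, the sign of $\alpha-e^{-\bar\theta_{0:T}}$ is the wrong criterion, because the last term $\bar\sigma^2_{0:t}e^{-2\bar\theta_{t:T}}(\sigma^2-\bar\sigma^2_{0:T})$ is always negative. The bracket in fact collapses. The quickest route is to note that Eq.~(\ref{eq:x_hat_sde}) already has marginal variance $V^\ast_t$, with diffusion $g_t$ and $\mathbf x_t$-coefficient $F_t=-\theta_t-g_t^2e^{-2\bar\theta_{t:T}}/(\hat\sigma^2+\bar\sigma^2_{t:T})$. Hence $\eta_t^2=g_t^2+2(F_t-f_t)V^\ast_t$. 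Using $\hat\sigma^2+\bar\sigma^2_{0:T}=\hat\sigma^2+\bar\sigma^2_{t:T}+e^{-2\bar\theta_{t:T}}\bar\sigma^2_{0:t}$, one gets
\[
\eta_t^2=g_t^2\Bigl(1-\frac{2\alpha e^{-\bar\theta_{t:T}}\bar\sigma^2_{0:t}}{\phi_t}\Bigr)
=g_t^2\,\frac{e^{-\bar\theta_{0:t}}(\hat\sigma^2+\bar\sigma^2_{t:T})-\alpha e^{\bar\theta_{t:T}}(\bar\sigma^2_{0:T}-\bar\sigma^2_{t:T})}{\phi_t}.
\]
This agrees with the square of Eq.~(\ref{eq:eta}) and gives $\eta_t=g_t$ at $\alpha=0$.

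With $\phi_t>0$, this expression is nonnegative on $[0,T]$ if and only if $\alpha\le e^{-\bar\theta_{0:T}}\hat\sigma^2/\bar\sigma^2_{0:T}$. The binding time is $t=T$, by the same monotonicity argument as in Lemma~\ref{lem-terminal-relaxion}. This threshold equals $e^{-\bar\theta_{0:T}}\sigma^2/(\bar\sigma^2_{0:T}-\sigma^2)$, which is below $e^{-\bar\theta_{0:T}}$ whenever $\sigma^2<\bar\sigma^2_{0:T}/2$. So Eq.~(\ref{eq:eta}) can be imaginary near $t=T$ even when $\alpha<e^{-\bar\theta_{0:T}}$.

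Your fallback is therefore the correct statement. The proposition holds for $-e^{-\bar\theta_{0:T}}\hat\sigma^2/\bar\sigma^2_{0:T}<\alpha\le e^{-\bar\theta_{0:T}}\hat\sigma^2/\bar\sigma^2_{0:T}$, equivalently $0<\rho\le 2e^{-\bar\theta_{0:T}}\sigma^2/\bar\sigma^2_{0:T}$. The paper leaves this restriction implicit.
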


\begin{proof}
Consider the process governed by
Eq.~(\ref{eq:final_sde}). Since the drift is affine in
\(\mathbf{x}_t\), the process is Gaussian at each time \(t\) when
initialized from a deterministic \(\mathbf{x}_0\). Therefore, its time
marginal is fully determined by its mean and covariance.
By construction, the drift in Eq.~(\ref{eq:final_sde}) matches the mean
trajectory of the \(h\)-transformed marginal distribution. It remains to
choose \(\eta_t\) so that the covariance also matches the target
covariance.

Eq. (\ref{eq:final_sde}) implies
\begin{equation}
   \mathrm{d}\mathbf{x}_t
    =
    \left[
    f_t\mathbf{x}_t+\mathbf{b}_t
    \right]\mathrm{d} t
    +
    \eta_t \mathrm{d}\mathbf{w}_t,
\end{equation}
where \(\mathbf{b}_t\) is deterministic and does not affect the covariance.
Thus, if
$\operatorname{Cov}(\mathbf{x}_t)=V_t\mathbf I$,
then \(V_t\) satisfies
\begin{equation}
    \frac{\mathrm{d} V_t}{\mathrm{d} t}
    =
    2f_tV_t+\eta_t^2.
    \label{eq:variance_ode}
\end{equation}
Recall that 
\begin{equation}
    \varphi_t
    =
    \sigma^2
    \left(
    \bar{\sigma}_{0:T}^2-\bar{\sigma}_{t:T}^2
    \right)
    +
    \bar{\sigma}_{0:T}^2\bar{\sigma}_{t:T}^2,\qquad 
    \hat{\sigma}^2
=
\frac{
\sigma^2\bar{\sigma}_{0:T}^2
}
{
\bar{\sigma}_{0:T}^2-\sigma^2
}.
\end{equation}
Hence,
the marginal covariance of of the \(h\)-transformed marginal
distribution in
Proposition~\ref{prop:ph} is
\begin{equation}
\frac{
\bar{\sigma}_{0:t}^2
(\hat{\sigma}^2+\bar{\sigma}_{t:T}^2)
}
{
\hat{\sigma}^2+\bar{\sigma}_{0:T}^2
}
\mathbf I
    =
    \frac{
    \bar{\sigma}_{0:t}^2
    \varphi_t
    }
    {
    \bar{\sigma}_{0:T}^4
    }\mathbf I.
    \label{eq:target_variance}
\end{equation}
Define 
\begin{align}
V^{\ast}_t :=\frac{
    \bar{\sigma}_{0:t}^2
    \varphi_t
    }
    {
    \bar{\sigma}_{0:T}^4
    }\mathbf,\qquad 
    \eta_t^2
    =:
    \frac{\mathrm{d} V^{\ast}_t}{\mathrm{d} t}
    -
    2f_tV^{\ast}_t.
    \label{eq:eta_from_variance_ode}
\end{align}
We have $V^{\ast}_0 = 0$.
Moreover, we also have $V_0 = 0$ by $\operatorname{Cov}(\mathbf{x}_0)=\mathbf 0 = V_0\mathbf I$.
Hence,
Eqs. (\ref{eq:variance_ode}) and (\ref{eq:eta_from_variance_ode}) imply \(V_t=V^{\ast}_t\).
In addition, for the GOU variance, we
have
\begin{equation}
    \frac{\mathrm{d}}{\mathrm{d} t}\bar{\sigma}_{t:T}^2
    =
    -g_t^2e^{-2\bar{\theta}_{t:T}},
    \qquad
    \frac{\mathrm{d}}{\mathrm{d} t}\bar{\sigma}_{0:t}^2
    =
    g_t^2-2\theta_t\bar{\sigma}_{0:t}^2.
    \label{eq:variance_derivatives}
\end{equation}
Thus,
\begin{equation}
    \frac{\mathrm{d}\varphi_t}{\mathrm{d} t}
    =
    g_t^2e^{-2\bar{\theta}_{t:T}}
    \left(
    \sigma^2-\bar{\sigma}_{0:T}^2
    \right).
    \label{eq:varphi_derivative}
\end{equation}
Hence
\begin{equation}
    \begin{aligned}
    \frac{\mathrm{d} V^{\ast}_t}{\mathrm{d} t}
    &=
    \frac{1}{\bar{\sigma}_{0:T}^4}
    \left[
    \left(
    g_t^2-2\theta_t\bar{\sigma}_{0:t}^2
    \right)\varphi_t
    +
    \bar{\sigma}_{0:t}^2
    g_t^2e^{-2\bar{\theta}_{t:T}}
    \left(
    \sigma^2-\bar{\sigma}_{0:T}^2
    \right)
    \right].
    \end{aligned}
    \label{eq:target_variance_derivative}
\end{equation}
Substituting Eq.~(\ref{eq:target_variance_derivative}) and Eq.~(\ref{eq:ft_def}) into Eq.~(\ref{eq:eta_from_variance_ode}), we obtain
\begin{equation}
    \eta^2_t
    =
    \frac{g^2_t}{\bar{\sigma}^4_{0:T}}
    \left\{
    \varphi_t
    \left[
    1-
    \frac{
    2e^{-\bar{\theta}_{t:T}}
    \bar{\sigma}^2_{0:t}
    \left(\alpha-e^{-\bar{\theta}_{0:T}}\right)
    }
    {
    \phi_t
    }
    \right]
    +
    \bar{\sigma}^2_{0:t}
    e^{-2\bar{\theta}_{t:T}}
    \left(
    \sigma^2-\bar{\sigma}^2_{0:T}
    \right)
    \right\}.
\end{equation}
This is exactly Eq.~(\ref{eq:eta}). Consequently, the reformulated process
has the same covariance as the target \(h\)-transformed marginal
distribution. Since the drift has already been chosen to match the target
mean and both marginals are Gaussian, the time marginals are consistent.
\end{proof}

This choice of $\eta_t$ ensures that the stochastic perturbation in
Eq.~(\ref{eq:final_sde}) compensates for the modified linear
drift and preserves the marginal covariance specified by
Proposition~\ref{prop:ph}. In this sense, Eq.~(\ref{eq:final_sde})
should be interpreted as a marginally consistent reformulation of
Eq.~(\ref{eq:x_hat_sde}), rather than as a pathwise identity obtained by
a direct substitution inside the stochastic term.

By Propositions~\ref{prop:new_sde_ode_solution} and~\ref{prop:diffuse}, the reverse SDE and Mean-ODE in Eq.~(\ref{eq:new_sde_reverse_mean_ode}) follow immediately from the frameworks established in~\citep{song2021scorebased, yue2023image}.

\section{Distinction and Theoretical Advantages over UniDB}
\label{subsec:diff_from_unidb}

To crystallize the theoretical novelty of Soft Denoising Diffusion Bridge Models (SDDBMs) and contextualize their relationship with UniDB, we systematically delineate the fundamental distinctions between the two frameworks. While UniDB can be recovered as a special case within our unified formulation, SDDBMs transcend a simple probabilistic reinterpretation or incremental generalization. Instead, our framework introduces profound advantages in terms of mathematical tractability, expressiveness, and training stability, as conceptualized in Table~\ref{tab:comparison_unidb} and elaborated below:

\begin{table}[ht]
\caption{Key theoretical and practical distinctions between UniDB and the proposed SDDBMs.}
\label{tab:comparison_unidb}
\begin{center}
\small
\begin{tabular}{lll}
\toprule
\textbf{Aspect} & \textbf{UniDB} & \textbf{SDDBMs (Ours)} \\ 
\midrule
Derivation Foundation & SOC with finite terminal penalty & $h$-transform with prescribed Gaussian \\
Probabilistic Tractability & Implicit terminal marginal & Explicit closed-form $q$, $h$, and posteriors \\
Parametric Flexibility & Rigid scalar penalty ($\hat{\sigma}^2 = \kappa^{-1}$) & General parametric trajectories \\
Boundary \& Training & Vanishing noise, specialized scoring & Non-vanishing noise, uniform training \\
\bottomrule
\end{tabular}
\end{center}
\end{table}

\begin{enumerate}
    \item \textbf{Conceptual Paradigm and Derivation:} UniDB originates from the perspective of stochastic optimal control, utilizing a finite terminal penalty to regularize the optimal path. Conversely, SDDBMs are grounded in a purely generative, measure-theoretic foundation. By commencing directly from a prescribed Gaussian terminal marginal, SDDBMs construct the bridge trajectories via a calibrated Gaussian terminal reweighting that induces the $h$-transformed path measure. This top-down probabilistic formulation ensures a clear generative intent from the outset.
    
    \item \textbf{Tractability of Probabilistic Objects:} A key theoretical limitation of UniDB is that several essential probabilistic quantities---such as the induced terminal marginal, the terminal likelihood, and the exact calibration of the $h$-function---remain implicit and analytically un-resolvable due to its control-centric formulation. In sharp contrast, SDDBMs establish a fundamentally tractable generative path by directly commencing from a prescribed target Gaussian distribution. From this target marginal, our framework uniquely and explicitly derives the entire cascade of foundational probabilistic objects in closed form, including the forward path measure $q$, the $h$-function, the time-$t$ forward marginals, and the one-step posterior transitions. This direct top-down derivation eliminates the need for indirect approximations, providing a fully transparent and exact analytical foundation for the bridge dynamics.
    
    \item \textbf{Model Expressiveness and Parametric Degrees of Freedom:} Within our unified framework, UniDB represents a highly constrained configuration restricted to a fixed terminal center $\hat{x}=x^\star$ and a scalar variance $\hat{\sigma}^2 = \kappa^{-1}$. SDDBMs substantially expand the model family by unlocking general parametric trajectories governed by $\alpha, \beta, \gamma$, and $\sigma$.
    This generalized formulation permits independent and flexible control over both the terminal center and variance, moving far beyond the rigid scalar penalty of UniDB to accommodate complex, asymmetric boundary conditions.
    
    
    \item \textbf{Boundary Behavior and Practical Training:} From a practical training standpoint, UniDB (along with related methods like GOUB) suffers from a pathological boundary behavior where the noise scale vanishes at the terminal state, forcing the adoption of specialized, ad-hoc score parameterizations at the endpoints. SDDBMs naturally circumvent this complication by ensuring a non-vanishing noise scale at the terminal boundary. Consequently, our framework eliminates the need for specialized boundary parameterization, yielding a more uniform training objective and superior numerical stability.
\end{enumerate}

By positioning UniDB as a specific parameter instance, we demonstrate that SDDBMs do not merely re-explain existing boundaries but unlock a fundamentally more expressive, controllable, and numerically stable family of diffusion bridges characterized by analytical posteriors and explicit endpoint control.

\section{Missing proofs}

\subsection{Proof of Proposition~\ref{thm:affine_nonsingular_terminal}}
\label{sec:thm_affine_nonsingular_terminal}

\thmterminaldistribution*

\begin{proof}
Let $R_t := \Phi(t,0)$.
Then \(R_t\) satisfies
\[
    \frac{\mathrm{d}}{\mathrm{d}t}R_t=A_tR_t,
    \qquad
    R_0=I.
\]
Since \(A_t\) is uniformly bounded on the finite interval \([0,T]\), the
matrix \(R_t\) is invertible for every \(t\in[0,T]\). Moreover,
\[
    \frac{\mathrm{d}}{\mathrm{d}t}R_t^{-1}
    =
    -R_t^{-1}A_t.
\]
Define $\mathbf y_t := R_t^{-1}\mathbf x_t$.
Using Itô's product rule and the SDE $\mathrm{d}\mathbf{x}_t
    =
    (A_t\mathbf{x}_t + \mathbf{b}_t)\mathrm{d}t
    +
    g_t \mathrm{d}\mathbf{W}_t$,
\[
\begin{aligned}
    \mathrm{d}\mathbf y_t
    &=
    \mathrm{d}(R_t^{-1}\mathbf x_t) \\
    &=
    \left(\frac{\mathrm{d}}{\mathrm{d}t}R_t^{-1}\right)\mathbf x_t\mathrm{d}t
    +
    R_t^{-1}\mathrm{d}\mathbf x_t \\
    &=
    -R_t^{-1}A_t\mathbf x_t\mathrm{d}t
    +
    R_t^{-1}
    \left[
    (A_t\mathbf x_t+\mathbf b_t)\mathrm{d}t
    +
    g_t\mathrm{d}\mathbf W_t
    \right] \\
    &=
    R_t^{-1}\mathbf b_t\mathrm{d}t
    +
    R_t^{-1}g_t\mathrm{d}\mathbf W_t.
\end{aligned}
\]
Integrating from \(0\) to \(T\) gives
\[
    \mathbf y_T
    =
    \mathbf x_0
    +
    \int_0^T R_s^{-1}\mathbf b_s\,\mathrm{d}s
    +
    \int_0^T R_s^{-1}g_s\, \mathrm{d}\mathbf W_s.
\]
Multiplying both sides by \(R_T\), and using
$R_TR_s^{-1} =
    \Phi(T,s)$,
we obtain the variation-of-constants representation
\begin{equation}
    \mathbf x_T
    =
    \Phi(T,0)\mathbf x_0
    +
    \int_0^T \Phi(T,s)\mathbf b_s\, \mathrm{d}s
    +
    \int_0^T \Phi(T,s)g_s\, \mathrm{d}\mathbf W_s.
    \label{eq:affine_solution_formula}
\end{equation}
The first two terms in Eq.~(\ref{eq:affine_solution_formula}) are
deterministic given a fixed \(\mathbf x_0\). The last term is a Gaussian
random vector, since it is an Itô integral with deterministic integrand.
Therefore, $\mathbf x_T\mid \mathbf x_0$
is Gaussian with mean
\[
    \mathbf m_T(\mathbf x_0)
    =
    \Phi(T,0)\mathbf x_0
    +
    \int_0^T \Phi(T,s)\mathbf b_s\,\mathrm{d}s
\]
and covariance
\[
    \Sigma_T
    =
    \int_0^T g_s^2
    \Phi(T,s)\Phi(T,s)^\top \,\mathrm{d}s.
\]

\vspace{0.2cm}

It remains to show that \(\Sigma_T\) is positive definite. Let
\[
    M_A:=\sup_{0\le t\le T}\|A_t\|<\infty.
\]
By Gronwall's inequality, for \(0\le s\le T\),
\[
    \|\Phi(T,s)\|
    \le
    e^{M_A(T-s)}
    \le
    e^{M_AT}.
\]
Similarly, since \(\Phi(T,s)\) is invertible and
\(\Phi(T,s)^{-1}=\Phi(s,T)\), we also have
\[
    \|\Phi(T,s)^{-1}\|
    =
    \|\Phi(s,T)\|
    \le
    e^{M_A(T-s)}
    \le
    e^{M_AT}.
\]
Hence, for any \(\mathbf v\in\mathbb R^d\),
\[
    \|\Phi(T,s)^\top \mathbf v\|
    \ge
    e^{-M_AT}\|\mathbf v\|.
\]
Therefore,
\[
\begin{aligned}
    \mathbf v^\top \Sigma_T \mathbf v
    &=
    \int_0^T
    g_s^2
    \mathbf v^\top
    \Phi(T,s)\Phi(T,s)^\top
    \mathbf v
    \,\mathrm{d}s \\
    &=
    \int_0^T
    g_s^2
    \|\Phi(T,s)^\top \mathbf v\|^2\,
    \mathrm{d}s \\
    &\ge
    e^{-2M_AT}
    \left(
    \int_0^T g_s^2\,\mathrm{d}s
    \right)
    \|\mathbf v\|^2.
\end{aligned}
\]
By Eq.~(\ref{eq:bounded_affine_coefficients}), we obtain 
\[
    \mathbf v^\top \Sigma_T \mathbf v>0,\qquad \forall\, \mathbf v\neq0.
\]
Thus, $\Sigma_T\succ0$.
Since \(\Sigma_T\) is positive definite, the terminal Gaussian is
non-degenerate and has density
\begin{equation*}
    p(\mathbf x,T\mid \mathbf x_0,0)
    =
    \frac{1}{(2\pi)^{d/2}|\Sigma_T|^{1/2}}
    \exp
    \left(
    -\frac12
    (\mathbf x-\mathbf m_T(\mathbf x_0))^\top
    \Sigma_T^{-1}
    (\mathbf x-\mathbf m_T(\mathbf x_0))
    \right).
\end{equation*}
Thus, as a function of \(\mathbf x\), this density it is smooth, i.e., it has continuous derivatives of all
orders. In particular, it is continuous.
Moreover, since
\[
    p(\mathbf x,T\mid \mathbf x_0,0)
    \le
    \frac{1}{(2\pi)^{d/2}|\Sigma_T|^{1/2}}  <\infty ,
\]
the density is bounded.

Finally, we prove that the terminal density depends on the source state.
For two initial states \(\mathbf x_0,\mathbf y_0\), the difference between
the terminal means is
\[
\begin{aligned}
    \mathbf m_T(\mathbf x_0)-\mathbf m_T(\mathbf y_0)
    &=
    \Phi(T,0)(\mathbf x_0-\mathbf y_0).
\end{aligned}
\]
Recall that \(\Phi(T,0)\) is invertible, we have
$\det \Phi(T,0)
    \neq0$.
Therefore, if \(\mathbf x_0\neq\mathbf y_0\), then
\[
    \Phi(T,0)(\mathbf x_0-\mathbf y_0)\neq0,
\]
and hence
$\mathbf m_T(\mathbf x_0)\neq \mathbf m_T(\mathbf y_0)$.
This completes the
proof.
\end{proof}
\subsection{Proof of Theorem~\ref{thm-ph-key}}
\label{sec:thm_ph-key}

\thmphkey*
\begin{proof}
By assumption,
\[
p_h(\mathbf x,T\mid\mathbf{x}_0,0)
=
\mathcal N(\mathbf x;\mathbf{x}^\ast,\sigma^2\mathbf I),
\qquad 
p(\mathbf x,T\mid\mathbf{x}_0,0)
=
\mathcal N(\mathbf x;\bar{\boldsymbol{\mu}}_{0:T},
\bar{\sigma}_{0:T}^2\mathbf I).
\]
Since \(p(\mathbf x,T\mid\mathbf{x}_0,0)>0\) for all \(\mathbf x\),
Eq.~(\ref{eq:soft_h_function_terminal_density}) implies
\begin{equation*}
    \mathcal N(\mathbf x;\mathbf{x}^\ast,\sigma^2\mathbf I)
    =
    \frac{
    \mathcal N(\mathbf x;\bar{\boldsymbol{\mu}}_{0:T},
\bar{\sigma}_{0:T}^2\mathbf I)q(\mathbf{x})
    }
    {
    h(\mathbf{x}_0,0)
    }.
\end{equation*}
Equivalently,
\[
q(\mathbf x)
=
h(\mathbf{x}_0,0)
\frac{
\mathcal N(\mathbf x;\mathbf{x}^\ast,\sigma^2\mathbf I)
}
{
\mathcal N(\mathbf x;\bar{\boldsymbol{\mu}}_{0:T},
\bar{\sigma}_{0:T}^2\mathbf I)
}.
\]
Hence,
\[
q(\mathbf x)
\propto
\exp
\left\{
-\frac12
\left[
\frac{\|\mathbf x-\mathbf{x}^\ast\|^2}{\sigma^2}
-
\frac{\|\mathbf x-\bar{\boldsymbol{\mu}}_{0:T}\|^2}
{\bar{\sigma}_{0:T}^2}
\right]
\right\}.
\]
By \[
0<\sigma^2<\bar{\sigma}_{0:T}^2,
\]
we obtain
\[
\frac1{\sigma^2}
-
\frac1{\bar{\sigma}_{0:T}^2} > 0.
\]
Hence, by that $q$ is a nonnegative probability density,
we obtain
\[
q(\mathbf x)
=
\mathcal N(\mathbf x;\hat{\mathbf{x}},\hat{\sigma}^2\mathbf I),
\]
where
\begin{align*}
\hat{\mathbf{x}}
&=
\hat{\sigma}^2
\left(
\frac{\mathbf{x}^\ast}{\sigma^2}
-
\frac{\bar{\boldsymbol{\mu}}_{0:T}}{\bar{\sigma}_{0:T}^2}
\right)
=
\frac{
\bar{\sigma}_{0:T}^2
}
{
\bar{\sigma}_{0:T}^2-\sigma^2
}
\mathbf{x}^\ast
-
\frac{
\sigma^2
}
{
\bar{\sigma}_{0:T}^2-\sigma^2
}
\bar{\boldsymbol{\mu}}_{0:T},\\
\hat{\sigma}^2
&=
\left(
\frac1{\sigma^2}
-
\frac1{\bar{\sigma}_{0:T}^2}
\right)^{-1}
=
\frac{
\sigma^2\bar{\sigma}_{0:T}^2
}
{
\bar{\sigma}_{0:T}^2-\sigma^2
}.
\end{align*}
The theorem is proved.
\end{proof}

\renewcommand{\thesubsection}{A.\arabic{subsection}}
\subsection{Proof of Theorem~\ref{thm:h_function}}
\label{sec:thm_h_function}

\thmhfunction*

\begin{proof}
Using the standard product identity for Gaussian densities~\citep{petersen2012matrix},
\begin{equation}
\int_{\mathbb R^d}
\mathcal N(\mathbf z;\mathbf m_1,\Sigma_1)
\mathcal N(\mathbf z;\mathbf m_2,\Sigma_2)
\,\mathrm{d} \mathbf z
=
\mathcal N(\mathbf m_1;\mathbf m_2,\Sigma_1+\Sigma_2),
\end{equation}
we obtain
\begin{equation}
\begin{aligned}
h(\mathbf x_t,t)
&=
\int
\mathcal N(
\mathbf z;
\bar{\boldsymbol{\mu}}_{t:T},
\bar\sigma^2_{t:T}\mathbf I
)
\mathcal N(
\mathbf z;
\hat{\mathbf x},
\hat\sigma^2\mathbf I
)
\,\mathrm{d} \mathbf z \\
&=
\mathcal N(
\bar{\boldsymbol{\mu}}_{t:T};
\hat{\mathbf x},
(\bar\sigma^2_{t:T}+\hat\sigma^2)\mathbf I
) \\
&=
\frac{1}
{
[2\pi(\hat\sigma^2+\bar\sigma^2_{t:T})]^{d/2}
}
\exp\left[
-\frac{
\|\bar{\boldsymbol{\mu}}_{t:T}-\hat{\mathbf x}\|^2
}
{
2(\hat\sigma^2+\bar\sigma^2_{t:T})
}
\right].
\end{aligned}
\end{equation}

Next, we verify that $h(\mathbf{x}_t,t)$ satisfies the KBE. We first compute $\nabla_{\mathbf{x}_t} h(\mathbf{x}_t,t)$ and $\nabla^2_{\mathbf{x}_t} h(\mathbf{x}_t,t)$:
\begin{align*}
        \nabla_{\mathbf{x}_t} h(\mathbf{x}_t,t)&=-h(\mathbf{x}_t,t)\frac{1}{2(\hat{\sigma}^2+\bar{\sigma}^2_{t:T})} \frac{\partial\|\bar{\boldsymbol{\mu}}_{t:T}-\hat{\mathbf{x}}\|^2}{\partial \mathbf{x}_t} \\
        &=-h(\mathbf{x}_t,t)\frac{\bar{\boldsymbol{\mu}}_{t:T}-\hat{\mathbf{x}}}{\hat{\sigma}^2+\bar{\sigma}_{t:T}^2} \frac{\partial \bar{\boldsymbol{\mu}}_{t:T}}{\partial \mathbf{x}_t} \\
        &=-h(\mathbf{x}_t,t) e^{-\bar{\theta}_{t:T}}\frac{\bar{\boldsymbol{\mu}}_{t:T}-\hat{\mathbf{x}}}{\hat{\sigma}^2+\bar{\sigma}_{t:T}^2},\\
        \nabla^2_{\mathbf{x}_t} h(\mathbf{x}_t,t)&=\nabla_{\mathbf{x}_t}  \left(\nabla_{\mathbf{x}_t} h(\mathbf{x}_t,t)\right)\\
        &=-\nabla_{\mathbf{x}_t} \left[h(\mathbf{x}_t,t)e^{-\bar{\theta}_{t:T}}\frac{\bar{\boldsymbol{\mu}}_{t:T}-\hat{\mathbf{x}}}{\hat{\sigma}^2+\bar{\sigma}_{t:T}^2} \right] \\
        &=-h(\mathbf{x}_t,t)\frac{e^{-2\bar{\theta}_{t:T}} \,\mathbf{I}}{\hat{\sigma}^2+\bar{\sigma}^2_{t:T}}+ h(\mathbf{x}_t,t) e^{-2\bar{\theta}_{t:T}}\frac{(\bar{\boldsymbol{\mu}}_{t:T}-\hat{\mathbf{x}}) (\bar{\boldsymbol{\mu}}_{t:T}-\hat{\mathbf{x}})^\top}{(\hat{\sigma}^2+\bar{\sigma}^2_{t:T})^2} \\
        &= h(\mathbf{x}_t,t)e^{-2\bar{\theta}_{t:T}} \left[\frac{(\bar{\boldsymbol{\mu}}_{t:T}-
        \hat{\mathbf{x}})(\bar{\boldsymbol{\mu}}_{t:T}-\hat{\mathbf{x}})^\top}{(\hat{\sigma}^2+\bar{\sigma}^2_{t:T})^2}-\frac{\mathbf{I}}{\hat{\sigma}^2+\bar{\sigma}^2_{t:T}}\, \right].
    \end{align*}
Hence, we obtain 
\begin{equation*}
    \begin{aligned}
        \Delta_{\mathbf{x}_t} h(\mathbf{x}_t,t)
        =h(\mathbf{x}_t,t)e^{-2\bar{\theta}_{t:T}} \left[\frac{\|\bar{\boldsymbol{\mu}}_{t:T}-\hat{\mathbf{x}}\|^2}{(\hat{\sigma}^2+\bar{\sigma}^2_{t:T})^2} -\frac{d}{\hat{\sigma}^2+\bar{\sigma}^2_{t:T}}\right].
    \end{aligned}
\end{equation*}   
Moreover,
\begin{equation*}
    \begin{aligned}
        \frac{\partial h(\mathbf{x}_t,t)}{\partial t}=&-\frac{d}{2}(\hat{\sigma}^2+\bar{\sigma}^2_{t:T})^{-1}\frac{\partial \bar{\sigma}^2_{t:T}}{\partial t} h(\mathbf{x}_t,t)-\frac{1}{2} h(\mathbf{x}_t,t) \frac{\partial}{\partial t} \left[\frac{\|\bar{\boldsymbol{\mu}}_{t:T}-\hat{\mathbf{x}}\|^2}{\hat{\sigma}^2+\bar{\sigma}^2_{t:T}}\right] \\
        =&h(\mathbf{x}_t,t)\frac{g^2_t e^{-2\bar{\theta}_{t;T}}d}{2(\hat{\sigma}^2+\bar{\sigma}^2_{t:T})}-\frac{1}{2}h(\mathbf{x}_t,t) \left[\frac{2(\bar{\boldsymbol{\mu}}_{t:T}-\hat{\mathbf{x}})}{\hat{\sigma}^2+\bar{\sigma}^2_{t:T}}\frac{\partial \bar{\boldsymbol{\mu}}_{t:T}}{\partial t}-\frac{\|\bar{\boldsymbol{\mu}}_{t:T}-\hat{\mathbf{x}}\|^2}{(\hat{\sigma}^2+\bar{\sigma}^2_{t:T})^2} \frac{\partial \bar{\sigma}^2_{t:T}}{\partial t}\right] \\
        =&{h(\mathbf{x}_t,t)\left[\frac{g^2_t e^{-2\bar{\theta}_{t:T}}d}{2(\hat{\sigma}^2+\bar{\sigma}^2_{t:T})}-\frac{(\bar{\boldsymbol{\mu}}_{t:T}-\hat{\mathbf{x}})}{\hat{\sigma}^2+\bar{\sigma}^2_{t:T}}
        \frac{\partial \bar{\boldsymbol{\mu}}_{t:T}}{\partial t}+\frac{\|\bar{\boldsymbol{\mu}}_{t:T}-\hat{\mathbf{x}}\|^2}{2(\hat{\sigma}^2+\bar{\sigma}^2_{t:T})^2}\frac{\partial \bar{\sigma}^2_{t:T}}{\partial t }\right]} \\
        =&h(\mathbf{x}_t,t)\left[\frac{g^2_t e^{-2\bar{\theta}_{t:T}}d}{2(\hat{\sigma}^2+\bar{\sigma}^2_{t:T})}-\frac{(\bar{\boldsymbol{\mu}}_{t:T}-\hat{\mathbf{x}})(\mathbf{x}_t-\boldsymbol{\mu})}{\hat{\sigma}^2+\bar{\sigma}^2_{t:T}}
        e^{-\bar{\theta}_{t:T}}\theta_t-\frac{\|\bar{\boldsymbol{\mu}}_{t:T}-\hat{\mathbf{x}}\|^2}{2(\hat{\sigma}^2+\bar{\sigma}^2_{t:T})^2}g^2_t e^{-2\bar{\theta}_{t:T}} \right].
    \end{aligned}
\end{equation*}
Therefore,
\begin{equation*}
    \begin{aligned}
         &\quad \theta_t(\boldsymbol{\mu} - \mathbf{x}_t) \cdot \nabla_{\mathbf{x}_t} h(\mathbf{x}_t,t) + \frac{1}{2}g^2_t \Delta_{\mathbf{x}_t} h(\mathbf{x}_t,t)\\
        &=-\theta_t h(\mathbf{x}_t,t) \frac{(\bar{\boldsymbol{\mu}}_{t:T}-\hat{\mathbf{x}})(\boldsymbol{\mu}-\mathbf{x}_t)}{\hat{\sigma}^2+\bar{\sigma}^2_{t:T}} e^{-\bar{\theta}_{t:T}}+ \frac{1}{2}g^2_t h(\mathbf{x}_t,t) e^{-2\bar{\theta}_{t:T}}\left[\frac{\|\bar{\boldsymbol{\mu}}_{t:T}-\hat{\mathbf{x}}\|^2}{(\hat{\sigma}^2+\bar{\sigma}^2_{t:T})^2} -\frac{d}{\hat{\sigma}^2+\bar{\sigma}^2_{t:T}}\right] \\
        &=-\frac{\partial h(\mathbf{x}_t,t)}{\partial t}.
    \end{aligned}
\end{equation*}   
Thus, the $h$-function satisfies the KBE.
\end{proof}

\renewcommand{\thesubsection}{A.\arabic{subsection}}
\subsection{Proof of Proposition~\ref{prop:ph}}
\label{sec:prop_ph}

\propph*

\begin{proof}
By applying the standard marginalization and conditioning properties of Gaussian distributions \citep{petersen2012matrix}, we can rewrite the product of the marginal and conditional densities. Specifically, assuming $A, S$, and $R$ are positive scalars, for isotropic Gaussian variables
\[
\mathbf{x} \sim \mathcal{N}(\mathbf{m}, S\mathbf{I}),
\qquad
\mathbf{y} \mid \mathbf{x} \sim \mathcal{N}(A\mathbf{x} + \mathbf{c}, R\mathbf{I}),
\]
the joint density $p(\mathbf{x}, \mathbf{y})$ can be factored as $p(\mathbf{y})p(\mathbf{x} \mid \mathbf{y})$, which yields the following identity:
\[
\begin{aligned}
&\mathcal{N}(\mathbf{x}; \mathbf{m}, S\mathbf{I}) \mathcal{N}(\mathbf{y}; A\mathbf{x} + \mathbf{c}, R\mathbf{I}) \\
=&
\mathcal{N}(\mathbf{y}; A\mathbf{m} + \mathbf{c}, (A^2S + R)\mathbf{I})
\mathcal{N}
\left(
\mathbf{x};
\mathbf{m} + \frac{SA}{A^2S + R} (\mathbf{y} - A\mathbf{m} - \mathbf{c}),
\frac{SR}{A^2S + R}\mathbf{I}
\right).
\end{aligned}
\]
Within our framework, we instantiate the parameters as follows:
\[
\mathbf{x} = \mathbf{x}_t, \quad
\mathbf{y} = \hat{\mathbf{x}}, \quad
\mathbf{m} = \bar{\boldsymbol{\mu}}_{0:t}, \quad
S = \bar{\sigma}_{0:t}^2,\quad 
A = e^{-\bar{\theta}_{t:T}}, \quad
\mathbf{c} = \boldsymbol{\mu}(1 - e^{-\bar{\theta}_{t:T}}), \quad
R = \hat{\sigma}^2 + \bar{\sigma}_{t:T}^2.
\]
Substituting these into the aforementioned identities yields the desired marginal parameters:
\[
A\mathbf{m} + \mathbf{c} = \bar{\boldsymbol{\mu}}_{0:T},
\]
and
\[
A^2S + R = \hat{\sigma}^2 + \bar{\sigma}_{0:T}^2.
\]

Hence, by Eqs. (\ref{eq-def-pztxtt}) and (\ref{eq:h_function_new_trans}),
\[
\begin{aligned}
p(\mathbf{x}_t,t\mid \mathbf{x}_0,0)h(\mathbf{x}_t,t)&={\mathcal{N}(\hat{\mathbf{x}};\bar{\boldsymbol{\mu}}_{0:T},\left(\hat{\sigma}^2+\bar{\sigma}^2_{0:T}\right) \mathbf{I})\mathcal N
\left(
\mathbf{x}_t;
\bar{\boldsymbol{\mu}}_{0:t}
+
\frac{
e^{-\bar{\theta}_{t:T}}\bar{\sigma}_{0:t}^2
}
{
\hat{\sigma}^2+\bar{\sigma}_{0:T}^2
}
(\hat{\mathbf{x}}-\bar{\boldsymbol{\mu}}_{0:T}),
\frac{
\bar{\sigma}_{0:t}^2
(\hat{\sigma}^2+\bar{\sigma}_{t:T}^2)
}
{
\hat{\sigma}^2+\bar{\sigma}_{0:T}^2
}
\mathbf I
\right)} \\
&=
h(\mathbf{x}_0,0)
\mathcal N
\left(
\mathbf{x}_t;
\bar{\boldsymbol{\mu}}_{0:t}
+
\frac{
e^{-\bar{\theta}_{t:T}}\bar{\sigma}_{0:t}^2
}
{
\hat{\sigma}^2+\bar{\sigma}_{0:T}^2
}
(\hat{\mathbf{x}}-\bar{\boldsymbol{\mu}}_{0:T}),
\frac{
\bar{\sigma}_{0:t}^2
(\hat{\sigma}^2+\bar{\sigma}_{t:T}^2)
}
{
\hat{\sigma}^2+\bar{\sigma}_{0:T}^2
}
\mathbf I
\right).
\end{aligned}
\]
By Eq.~(\ref{eq:general_h_marginal}), we obtain
\[
p_h(\mathbf{x}_t,t\mid\mathbf{x}_0,0)
=\frac{p(\mathbf{x}_t,t|\mathbf{x}_0,0)h(\mathbf{x}_t,t)}{h(\mathbf{x}_0,0)}=
\mathcal N
\left(
\mathbf{x}_t;
\bar{\boldsymbol{\mu}}_{0:t}
+
\frac{
e^{-\bar{\theta}_{t:T}}\bar{\sigma}_{0:t}^2
}
{
\hat{\sigma}^2+\bar{\sigma}_{0:T}^2
}
(\hat{\mathbf{x}}-\bar{\boldsymbol{\mu}}_{0:T}),
\frac{
\bar{\sigma}_{0:t}^2
(\hat{\sigma}^2+\bar{\sigma}_{t:T}^2)
}
{
\hat{\sigma}^2+\bar{\sigma}_{0:T}^2
}
\mathbf I
\right).
\]
\end{proof}

\subsection{Proof of Proposition~\ref{prop:generalization}}
\label{sec:prop_generalization}

\propgeneralization*

\begin{proof}
We prove the proposition by substituting the corresponding
hyper-parameter choices into the general reformulated SDDBM dynamics
in Eq.~(\ref{eq:final_sde}). 
Let
$\alpha=0, \beta=1,\gamma=0$.
Recall that 
\begin{align}
\phi_t
    &=
    e^{-\bar{\theta}_{0:t}}
    \left(\hat{\sigma}^2+\bar{\sigma}^2_{t:T}\right)
    +
    \alpha e^{\bar{\theta}_{t:T}}
    \left(\bar{\sigma}^2_{0:T}-\bar{\sigma}^2_{t:T}\right),\label{eq-phi-def-recall}\\
    \hat{\sigma}^2
&=
\frac{
\sigma^2\bar{\sigma}_{0:T}^2
}
{
\bar{\sigma}_{0:T}^2-\sigma^2
}.\label{eq:sigma_hat_sigma_relation}
\end{align}
Moreover, for the GOU process, 
\begin{equation}
    e^{-\bar{\theta}_{0:T}}
    =
    e^{-\bar{\theta}_{0:t}}
    e^{-\bar{\theta}_{t:T}},
    \qquad
    \bar{\sigma}_{0:T}^2
    =
    e^{-2\bar{\theta}_{t:T}}\bar{\sigma}_{0:t}^2
    +
    \bar{\sigma}_{t:T}^2.
    \label{eq:basic_id_generalization}
\end{equation}

By $\alpha=0$ and Eq. (\ref{eq-phi-def-recall}), we have
\begin{equation}
    \phi_t
    =
    e^{-\bar{\theta}_{0:t}}
    \left(
    \hat{\sigma}^2+\bar{\sigma}_{t:T}^2
    \right).
    \label{eq:phi_alpha_zero}
\end{equation}
Substituting \(\alpha=0\) and Eq. (\ref{eq:phi_alpha_zero}) into Eq.~(\ref{eq:eta}), we obtain
\begin{equation}
    \begin{aligned}
    \eta_t^2
    =&
    \frac{g_t^2}{\bar{\sigma}_{0:T}^4}
    \left\{
    \varphi_t
    \left[
    1+
    \frac{
    2e^{-2\bar{\theta}_{t:T}}\bar{\sigma}_{0:t}^2
    }
    {
    \hat{\sigma}^2+\bar{\sigma}_{t:T}^2
    }
    \right]
    +
    \bar{\sigma}_{0:t}^2e^{-2\bar{\theta}_{t:T}}
    \left(
    \sigma^2-\bar{\sigma}_{0:T}^2
    \right)
    \right\}.
    \end{aligned}
    \label{eq:eta_alpha_zero_start}
\end{equation}
By Eq.~(\ref{eq:sigma_hat_sigma_relation}), we obtain
\begin{equation}
    \varphi_t
    =
    \frac{
    \bar{\sigma}_{0:T}^4
    \left(
    \hat{\sigma}^2+\bar{\sigma}_{t:T}^2
    \right)
    }
    {
    \bar{\sigma}_{0:T}^2+\hat{\sigma}^2
    },
    \qquad
    \sigma^2-\bar{\sigma}_{0:T}^2
    =
    -
    \frac{
    \bar{\sigma}_{0:T}^4
    }
    {
    \bar{\sigma}_{0:T}^2+\hat{\sigma}^2
    }.
    \label{eq:varphi_sigma_minus}
\end{equation}
Substituting Eq.~(\ref{eq:varphi_sigma_minus}) into
Eq.~(\ref{eq:eta_alpha_zero_start}) and using Eq.~(\ref{eq:basic_id_generalization}) gives
\begin{equation}
    \begin{aligned}
    \eta_t^2
    &=
    \frac{g_t^2}{\bar{\sigma}_{0:T}^4}
    \left[
    \frac{
    \bar{\sigma}_{0:T}^4
    \left(
    \hat{\sigma}^2+\bar{\sigma}_{t:T}^2
    \right)
    }
    {
    \bar{\sigma}_{0:T}^2+\hat{\sigma}^2
    }
    \left(
    1+
    \frac{
    2e^{-2\bar{\theta}_{t:T}}\bar{\sigma}_{0:t}^2
    }
    {
    \hat{\sigma}^2+\bar{\sigma}_{t:T}^2
    }
    \right)
    -
    \frac{
    e^{-2\bar{\theta}_{t:T}}\bar{\sigma}_{0:t}^2\bar{\sigma}_{0:T}^4
    }
    {
    \bar{\sigma}_{0:T}^2+\hat{\sigma}^2
    }
    \right]
    \\
    &=
    g_t^2
    \frac{
    \hat{\sigma}^2+\bar{\sigma}_{t:T}^2
    +
    e^{-2\bar{\theta}_{t:T}}\bar{\sigma}_{0:t}^2
    }
    {
    \bar{\sigma}_{0:T}^2+\hat{\sigma}^2
    }
    \\
    &=
    g_t^2.
    \end{aligned}
    \label{eq:eta_alpha_zero_gt}
\end{equation}
Therefore, \(\eta_t=g_t\) holds for all cases with
\(\alpha=0\).

We now simplify the drift under the same hyper-parameter choice. 
By Eqs. (\ref{eq:basic_id_generalization}) and (\ref{eq:phi_alpha_zero}),
we have 
\begin{align}\label{eq:phi_alpha_zero_theta}
\frac{e^{-\bar{\theta}_{0:T}}}{\phi_t} = \frac{ e^{-\bar{\theta}_{t:T}}}{
    \hat{\sigma}^2+\bar{\sigma}_{t:T}^2}.
\end{align}
From Eqs.~(\ref{eq:ft_def}),~(\ref{eq:ht_def}), and~(\ref{eq:phi_alpha_zero_theta}), we obtain


\begin{equation}
    {f_t
    =
    -\theta_t
    -g^2_t\frac{ e^{-2\bar{\theta}_{t:T}} }{\left(\hat{\sigma}^2+\bar{\sigma}^2_{t:T}\right)}},
    \label{eq:ft_alpha_zero}
\end{equation}

\begin{equation}
    {m_t
    =
    g_t^2
    \frac{\beta
    e^{-\bar{\theta}_{t:T}}
    }
    {
    \hat{\sigma}^2+\bar{\sigma}_{t:T}^2
    }},
    \label{eq:mt_alpha_zero}
\end{equation}

and
\begin{equation}
    {h_t
    =
    \theta_t
    +
    g_t^2
    \frac{
    e^{-\bar{\theta}_{t:T}}
    \left[ e^{-\bar{\theta}_{t:T}}+(\gamma-1) \right]
    }
    {
    \hat{\sigma}^2+\bar{\sigma}_{t:T}^2
    }}.
    \label{eq:ht_alpha_zero}
\end{equation}

\paragraph{DDBM-VP.}
For DDBM-VP, we choose
\[
\theta_t=\frac12g_t^2,
\qquad
\boldsymbol{\mu}=\mathbf 0,
\qquad
\alpha=0,\quad
\beta=1,\quad
\gamma=0,
\qquad
\hat{\sigma}=0.
\]
Then
\[
\hat{\sigma}^2+\bar{\sigma}_{t:T}^2
=
\bar{\sigma}_{t:T}^2.
\]
Since \(\boldsymbol{\mu}=\mathbf 0\), the term \(h_t\boldsymbol{\mu}\)
vanishes. 
Thus, by Eqs. (\ref{eq:ft_alpha_zero}) and (\ref{eq:mt_alpha_zero}), the drift in Eq.(~\ref{eq:final_sde}) becomes
\begin{equation}
    \begin{aligned}
    f_t\mathbf{x}_t+m_t\mathbf{x}^{\star}
    &=
    -
    \left(
    \frac12g_t^2
    +
    g_t^2
    \frac{
    e^{-2\bar{\theta}_{t:T}}
    }
    {
    \bar{\sigma}_{t:T}^2
    }
    \right)\mathbf{x}_t
    +
    g_t^2
    \frac{
    e^{-\bar{\theta}_{t:T}}
    }
    {
    \bar{\sigma}_{t:T}^2
    }
    \mathbf{x}^{\star}.
    \end{aligned}
\end{equation}
Together with \(\eta_t=g_t\), this yields
\begin{equation}
   \mathrm{d}\mathbf{x}_t
    =
    \left[
    -
    \left(
    \frac12g_t^2
    +
    g_t^2
    \frac{
    e^{-2\bar{\theta}_{t:T}}
    }
    {
    \bar{\sigma}_{t:T}^2
    }
    \right)\mathbf{x}_t
    +
    g_t^2
    \frac{
    e^{-\bar{\theta}_{t:T}}
    }
    {
    \bar{\sigma}_{t:T}^2
    }
    \mathbf{x}^{\star}
    \right]\mathrm{d} t
    +
    g_t d\mathbf w_t,
\end{equation}
which recovers the DDBM bridge with a VP reference process.

\paragraph{DDBM-VE.}
For DDBM-VE, we choose
\[
\theta_t=0,
\qquad
\boldsymbol{\mu}=\mathbf 0,
\qquad
\alpha=0,\quad
\beta=1,\quad
\gamma=0,
\qquad
\hat{\sigma}=0.
\]
Then
\[
\hat{\sigma}^2+\bar{\sigma}_{t:T}^2
=
\bar{\sigma}_{t:T}^2.
\]
Since \(\theta_t=0\), we have
\[
\bar{\theta}_{t:T}=0,
\qquad
e^{-\bar{\theta}_{t:T}}=1.
\]
Therefore, by Eqs.~(\ref{eq:ft_alpha_zero}) and~(\ref{eq:mt_alpha_zero}),
\[
f_t
=
-
g_t^2
\frac{1}{\bar{\sigma}_{t:T}^2},
\qquad
m_t
=
g_t^2
\frac{1}{\bar{\sigma}_{t:T}^2}.
\]
Again \(h_t\boldsymbol{\mu}=0\), and the drift in Eq.~(\ref{eq:final_sde}) becomes
\[
f_t\mathbf{x}_t+m_t\mathbf{x}^{\star}=g_t^2
\frac{
\mathbf{x}^{\star}-\mathbf{x}_t
}
{
\bar{\sigma}_{t:T}^2
}.
\]
Together with \(\eta_t=g_t\), we obtain
\begin{equation}
   \mathrm{d}\mathbf{x}_t
    =
    g_t^2
    \frac{
    \mathbf{x}^{\star}-\mathbf{x}_t
    }
    {
    \bar{\sigma}_{t:T}^2
    }\mathrm{d} t
    +
    g_t d\mathbf w_t,
\end{equation}
which recovers the DDBM bridge with a VE reference process.

\paragraph{GOUB.}
For GOUB, we choose
\[
\theta_t=\frac{1}{2\lambda^2}g_t^2,
\qquad
\boldsymbol{\mu}=\mathbf{x}^{\star},
\qquad
\alpha=0,\quad
\beta=1,\quad
\gamma=0,
\qquad
\hat{\sigma}=0.
\]
By \(\boldsymbol{\mu}=\mathbf{x}^{\star}\), 
Eqs.~(\ref{eq:ft_alpha_zero}),~(\ref{eq:mt_alpha_zero}) and~(\ref{eq:ht_alpha_zero}),  the drift in Eq.~(\ref{eq:final_sde}) becomes
\begin{equation}\label{eq-drift-mueqxstar}
    \begin{aligned}
    &\quad f_t\mathbf{x}_t+m_t\mathbf{x}^{\star} + h_t \boldsymbol{\mu} =f_t \mathbf{x}_t+(m_t+h_t)\mathbf{x}^{\star} \\
    &= -\left(\theta_t
    +
    g_t^2
    \frac{
    e^{-2\bar{\theta}_{t:T}}
    }
    {
    \hat{\sigma}^2+ \bar{\sigma}_{t:T}^2
    }\right)\mathbf{x}_t + 
    \left( g_t^2
    \frac{
    e^{-\bar{\theta}_{t:T}}
    }
    {\hat{\sigma}^2+
    \bar{\sigma}_{t:T}^2
    }
    +
    \theta_t
    +
    g_t^2
    \frac{
    e^{-\bar{\theta}_{t:T}}
    \left(
    e^{-\bar{\theta}_{t:T}}-1
    \right)
    }
    {\hat{\sigma}^2+
    \bar{\sigma}_{t:T}^2
    }\right)\mathbf{x}^{\star}
    \\&=
    \left(\theta_t
    +
    g_t^2
    \frac{
    e^{-2\bar{\theta}_{t:T}}
    }
    {
    \hat{\sigma}^2+\bar{\sigma}_{t:T}^2
    }\right) \left(
\mathbf{x}^{\star}-\mathbf{x}_t
\right).
    \end{aligned}
\end{equation}
Together with $\hat{\sigma} = 0$ and \(\eta_t=g_t\), we obtain
\begin{equation}
   \mathrm{d}\mathbf{x}_t
    =
    \left(
    \theta_t
    +
    g_t^2
    \frac{
    e^{-2\bar{\theta}_{t:T}}
    }
    {
    \bar{\sigma}_{t:T}^2
    }
    \right)
    \left(
    \mathbf{x}^{\star}-\mathbf{x}_t
    \right)\mathrm{d} t
    +
    g_t d\mathbf w_t,
\end{equation}
which is the GOUB bridge dynamics.

\paragraph{UniDB.}
For UniDB, we choose
\[
\theta_t=\frac{1}{2\lambda^2}g_t^2,
\qquad
\boldsymbol{\mu}=\mathbf{x}^{\star},
\qquad
\alpha=0,\quad
\beta=1,\quad
\gamma=0,
\qquad
\hat{\sigma}^2=\kappa^{-1},
\]
where \(\kappa\) denotes the UniDB terminal penalty coefficient.
Since \(\boldsymbol{\mu}=\mathbf{x}^{\star}\), we also have Eq. (\ref{eq-drift-mueqxstar}).
Together with $\hat{\sigma}^2=\kappa^{-1}$ and \(\eta_t=g_t\), we obtain 
\(\eta_t=g_t\), we obtain
\begin{equation}
   \mathrm{d}\mathbf{x}_t
    =
    \left(
    \theta_t
    +
    g_t^2
    \frac{
    e^{-2\bar{\theta}_{t:T}}
    }
    {
    \kappa^{-1}
    +
    \bar{\sigma}_{t:T}^2
    }
    \right)
    \left(
    \mathbf{x}^{\star}-\mathbf{x}_t
    \right)\mathrm{d} t
    +
    g_t d\mathbf w_t,
\end{equation}
which recovers the UniDB dynamics.
\end{proof}

\subsection{Derivation of the training objective }
\label{sec:training_objective_append}

For notational simplicity, when no ambiguity arises, we omit the explicit
time indices in the transition densities; for example, we write
\[
p_h(\mathbf{x}_t,t\mid\mathbf{x}_T,T) :=
p_h(\mathbf{x}_t\mid \mathbf{x}_T),
\]
and similarly abbreviate
\begin{align*}
p_h(\mathbf{x}_t\mid \mathbf{x}_0,\mathbf{x}_T)
&:=
p_h(\mathbf{x}_t,t\mid \mathbf{x}_0,0;\mathbf{x}_T,T),\\
p_\theta(\mathbf{x}_{t-1}\mid \mathbf{x}_t,\mathbf{x}_T)
&:=
p_\theta(\mathbf{x}_{t-1},t-1\mid \mathbf{x}_t,t;\mathbf{x}_T,T),\\
p_h(\mathbf{x}_{t-1}\mid \mathbf{x}_0,\mathbf{x}_t,\mathbf{x}_T)
&:=
p_h(\mathbf{x}_{t-1},t-1\mid
\mathbf{x}_0,0,\mathbf{x}_t,t;\mathbf{x}_T,T).
\end{align*}
From Proposition~\ref{prop:ph}, the forward
marginal of the proposed bridge is
\begin{equation}
    p_h(\mathbf{x}_t\mid \mathbf{x}_0,\mathbf{x}_T)
    =
    \mathcal N
    \left(
    \mathbf{x}_t;
    \bar{\boldsymbol{\mu}}'_{0:t},
    \bar{\sigma}_{0:t}^{\prime 2}\mathbf I
    \right),
    \label{eq:forward_marginal_training}
\end{equation}
where 
$\bar{\boldsymbol{\mu}}'_{0:t}
    =
    a_t\mathbf{x}_0
    +
    b_t{\mathbf{x}^\star}
    +
    c_t\boldsymbol{\mu}$.
We construct a Gaussian one-step forward transition that is consistent with
the marginals in Eq.~(\ref{eq:forward_marginal_training}). Specifically,
we define
\begin{equation}
\label{eq:forward_transition_training}
\begin{aligned}
& p_h(\mathbf{x}_t\mid \mathbf{x}_{t-1},\mathbf{x}_T) \\ 
    =&\mathcal N
    \left(
    \mathbf{x}_t;
    \frac{a_t}{a_{t-1}}\mathbf{x}_{t-1}
    +
    \left(
    b_t-\frac{a_t b_{t-1}}{a_{t-1}}
    \right){\mathbf{x}^\star}
    +
    \left(
    c_t-\frac{a_t c_{t-1}}{a_{t-1}}
    \right)\boldsymbol{\mu},
    \left(
    \bar{\sigma}_{0:t}^{\prime 2}
    -
    \frac{a_t^2}{a_{t-1}^2}
    \bar{\sigma}_{0:t-1}^{\prime 2}
    \right)\mathbf I
    \right).
\end{aligned}
\end{equation}
This transition is valid whenever
\[
\bar{\sigma}_{0:t}^{\prime 2}
-
\frac{a_t^2}{a_{t-1}^2}
\bar{\sigma}_{0:t-1}^{\prime 2}
\ge 0.
\]
It is also consistent with the desired forward marginal. Indeed, if
\[
\mathbf{x}_{t-1}\mid \mathbf{x}_0,\mathbf{x}_T
\sim
\mathcal N
\left(
\bar{\boldsymbol{\mu}}'_{0:t-1},
\bar{\sigma}_{0:t-1}^{\prime 2}\mathbf I
\right),
\]
then Eq.~(\ref{eq:forward_transition_training}) gives
\begin{equation}
    \begin{aligned}
    \mathbb E[\mathbf{x}_t\mid \mathbf{x}_0,\mathbf{x}_T]&{=
    \mathbb{E}\left[\mathbb{E}[\mathbf{x}_t|\mathbf{x}_0,\mathbf{x}_{t-1},\mathbf{x}_T]|\mathbf{x}_0,\mathbf{x}_T\right]}\\
    &{=\mathbb{E}\left[ \frac{a_t}{a_{t-1}}\mathbf{x}_{t-1}+\left(b_t-\frac{b_{t-1}a_{t}}{a_{t-1}}\right)\mathbf{x}^\star + \left(c_t-\frac{a_tc_{t-1}}{a_{t-1}}\right)\boldsymbol{\mu} \right]} \\
    &=
    \frac{a_t}{a_{t-1}}\bar{\boldsymbol{\mu}}'_{0:t-1}
    +
    \left(
    b_t-\frac{a_t b_{t-1}}{a_{t-1}}
    \right){\mathbf{x}^\star}
    +
    \left(
    c_t-\frac{a_t c_{t-1}}{a_{t-1}}
    \right)\boldsymbol{\mu}
    \\
    &=
    a_t\mathbf{x}_0+b_t{\mathbf{x}^\star}+c_t\boldsymbol{\mu}
    =
    \bar{\boldsymbol{\mu}}'_{0:t},
    \end{aligned}
    \label{eq:transition_mean_consistency}
\end{equation}
and
\begin{equation}
\begin{aligned}
    \operatorname{Var}(\mathbf{x}_t\mid \mathbf{x}_0,\mathbf{x}_T)
    &={\mathbb{E} \left[ \operatorname{Var}\left(\mathbf{x}_t|\mathbf{x}_0,\mathbf{x}_{t-1},\mathbf{x}_T\right)|\mathbf{x}_0,\mathbf{x}_T \right] + \operatorname{Var} \left( \mathbb{E}\left[\mathbf{x}_t|\mathbf{x}_0,\mathbf{x}_{t-1},\mathbf{x}_T  \right]|\mathbf{x}_0,\mathbf{x}_T \right)} \\
    &={\left(\bar{\sigma}'^2_{0:t}-\frac{a^2_t}{a^2_{t-1}} \bar{\sigma}'^2_{0:t}\right)\mathbf{I}+ \operatorname{Var}\left( \frac{a_t}{a_{t-1}}\mathbf{x}_{t-1}+\left(b_t-\frac{a_tb_{t-1}}{a_{t-1}}\right)\mathbf{x}^\star+\left(c_t-\frac{a_tc_{t-1}}{a_{t-1}}\right)\boldsymbol{\mu} |\mathbf{x}_0,\mathbf{x}_T\right)} \\
    &=
    \left(
    \bar{\sigma}_{0:t}^{\prime 2}
    -
    \frac{a_t^2}{a_{t-1}^2}
    \bar{\sigma}_{0:t-1}^{\prime 2}
    \right)\mathbf I+\frac{a_t^2}{a_{t-1}^2}
    \bar{\sigma}_{0:t-1}^{\prime 2}\mathbf I
    =
    \bar{\sigma}_{0:t}^{\prime 2}\mathbf I.
    \label{eq:transition_variance_consistency}
\end{aligned}
\end{equation}
Thus, the transition in Eq.~(\ref{eq:forward_transition_training})
preserves the prescribed forward marginal in Eq.~(\ref{eq:forward_marginal_training}).

We now derive the posterior
$p_h(\mathbf{x}_{t-1}\mid \mathbf{x}_0,\mathbf{x}_t,\mathbf{x}_T)$.
Under the transition in Eq.~(\ref{eq:forward_transition_training}), we can write
\[
\mathbf{x}_t
=
\frac{a_t}{a_{t-1}}\mathbf{x}_{t-1}
+
\left(
b_t-\frac{a_t b_{t-1}}{a_{t-1}}
\right){\mathbf{x}^\star}
+
\left(
c_t-\frac{a_t c_{t-1}}{a_{t-1}}
\right)\boldsymbol{\mu}
+
\boldsymbol{\xi}_t,
\]
where
\[
\boldsymbol{\xi}_t
\sim
\mathcal N
\left(
\mathbf 0,
\left(
\bar{\sigma}_{0:t}^{\prime 2}
-
\frac{a_t^2}{a_{t-1}^2}
\bar{\sigma}_{0:t-1}^{\prime 2}
\right)\mathbf I
\right).
\]
Moreover, \(\boldsymbol{\xi}_t\) is independent of
\(\mathbf{x}_{t-1}\) conditioned on \((\mathbf{x}_0,\mathbf{x}_T)\). 
In addition, Eq.~(\ref{eq:forward_marginal_training}) implies
\[
\mathbf{x}_{t-1}\mid \mathbf{x}_0,\mathbf{x}_T
\sim
\mathcal N
\left(
\bar{\boldsymbol{\mu}}'_{0:t-1},
\bar{\sigma}_{0:t-1}^{\prime 2}\mathbf I
\right),
\]
Hence,
\((\mathbf{x}_{t-1},\mathbf{x}_t)\mid(\mathbf{x}_0,\mathbf{x}_T)\) is
jointly Gaussian. Its conditional mean is
\[
\begin{bmatrix}
\bar{\boldsymbol{\mu}}'_{0:t-1}\\
\bar{\boldsymbol{\mu}}'_{0:t}
\end{bmatrix}.
\]
Moreover, by Eq.~(\ref{eq:forward_marginal_training}),
\[
\operatorname{Var}(\mathbf{x}_{t-1}\mid \mathbf{x}_0,\mathbf{x}_T)
=
\bar{\sigma}_{0:t-1}^{\prime 2}\mathbf I.
\]
Hence, by Eq.~(\ref{eq:forward_transition_training}), 
\[
\begin{aligned}
\operatorname{Cov}
\left(
\mathbf{x}_{t-1},
\mathbf{x}_t
\mid
\mathbf{x}_0,\mathbf{x}_T
\right)
&=
\operatorname{Cov}
\left(
\mathbf{x}_{t-1},
\frac{a_t}{a_{t-1}}\mathbf{x}_{t-1}
+
\boldsymbol{\xi}_t
\mid
\mathbf{x}_0,\mathbf{x}_T
\right)
\\
&=
\frac{a_t}{a_{t-1}}
\operatorname{Var}
\left(
\mathbf{x}_{t-1}
\mid
\mathbf{x}_0,\mathbf{x}_T
\right)
\\
&=
\frac{a_t}{a_{t-1}}
\bar{\sigma}_{0:t-1}^{\prime 2}\mathbf I.
\end{aligned}
\]
Combined with Eq.~(\ref{eq:forward_marginal_training}), we obtain that the conditional covariance of \((\mathbf{x}_{t-1},\mathbf{x}_t)\mid(\mathbf{x}_0,\mathbf{x}_T)\) is
\[
\begin{bmatrix}
\bar{\sigma}_{0:t-1}^{\prime 2}\mathbf I
&
\frac{a_t}{a_{t-1}}
\bar{\sigma}_{0:t-1}^{\prime 2}\mathbf I
\\
\frac{a_t}{a_{t-1}}
\bar{\sigma}_{0:t-1}^{\prime 2}\mathbf I
&
\bar{\sigma}_{0:t}^{\prime 2}\mathbf I
\end{bmatrix}.
\]
Therefore, by the standard conditioning formula for jointly Gaussian
random variables,
\[
p_h(\mathbf{x}_{t-1}\mid \mathbf{x}_0,\mathbf{x}_t,\mathbf{x}_T)
=
\mathcal N
\left(
\mathbf{x}_{t-1};
\boldsymbol{\mu}_{t-1},
\sigma_{t-1}^2\mathbf I
\right),
\]
where
\[
\boldsymbol{\mu}_{t-1}
=
\bar{\boldsymbol{\mu}}'_{0:t-1}
+
\frac{
a_t\bar{\sigma}_{0:t-1}^{\prime 2}
}
{
a_{t-1}\bar{\sigma}_{0:t}^{\prime 2}
}
\left(
\mathbf{x}_t-\bar{\boldsymbol{\mu}}'_{0:t}
\right),
\]
and
\[
\sigma_{t-1}^2
=
\bar{\sigma}_{0:t-1}^{\prime 2}
-
\frac{
a_t^2
\bar{\sigma}_{0:t-1}^{\prime 4}
}
{
a_{t-1}^2\bar{\sigma}_{0:t}^{\prime 2}
}.
\]

Next, we parameterize the reverse transition as
\begin{equation}
    p_\theta(\mathbf{x}_{t-1}\mid \mathbf{x}_t,\mathbf{x}_T)
    =
    \mathcal N
    \left(
    \mathbf{x}_{t-1};
    \boldsymbol{\mu}_{\theta,t-1},
    \sigma_{\theta,t-1}^2\mathbf I
    \right).
    \label{eq:model_reverse_training}
\end{equation}
Following the Euler discretization of the reverse-time dynamics, and using
the noise-prediction parameterization
\begin{equation}
    \nabla_{\mathbf{x}_t}\log p_h(\mathbf{x}_t\mid\mathbf{x}_T)
    \approx
    -
    \frac{
    \boldsymbol{\epsilon}_\theta(\mathbf{x}_t,\mathbf{x}_T,t)
    }
    {
    \bar{\sigma}'_{0:t}
    },
    \label{eq:noise_prediction_score}
\end{equation}
we set
\begin{equation}
    \boldsymbol{\mu}_{\theta,t-1}
    =
    \mathbf{x}_t
    -
    \left[
    f_t\mathbf{x}_t
    +
    m_t{\mathbf{x}^\star}
    +
    h_t\boldsymbol{\mu}
    +
    \frac{\eta_t^2}{\bar{\sigma}'_{0:t}}
    \boldsymbol{\epsilon}_\theta(\mathbf{x}_t,\mathbf{x}_T,t)
    \right],
    \label{eq:model_mean_training}
\end{equation}
and
\begin{equation}
    \sigma_{\theta,t-1}^2=\eta_t^2.
    \label{eq:model_variance_training}
\end{equation}
If a discretization step size \(\Delta t\) is used, then the drift term in
Eq.~(\ref{eq:model_mean_training}) should be multiplied by \(\Delta t\),
and the model variance should be set to
\(\sigma_{\theta,t-1}^2=\eta_t^2\Delta t\).

The variational training objective minimizes the Gaussian reverse KL
\begin{equation}
    \mathrm{KL}
    \left(
    p_h(\mathbf{x}_{t-1}\mid\mathbf{x}_0,\mathbf{x}_t,\mathbf{x}_T)
    \,\|\,
    p_\theta(\mathbf{x}_{t-1}\mid\mathbf{x}_t,\mathbf{x}_T)
    \right).
    \label{eq:reverse_KL_training}
\end{equation}
Using the KL divergence between two isotropic Gaussian distributions, we
obtain

\begin{equation*}
{
    \begin{aligned}&\mathrm{KL} \left(p(\mathbf{x}_{t-1}|\mathbf{x}_0,\mathbf{x}_t,\mathbf{x}_T)||p_{\theta}(\mathbf{x}_{t-1}|\mathbf{x}_t,\mathbf{x}_T)\right) \\
        =& \mathbb{E}_{p(\mathbf{x}_{t-1}|\mathbf{x}_0,\mathbf{x}_t,\mathbf{x}_T)} \left[ \log \frac{\frac{1}{(2\pi \sigma^2_{t-1})^{\frac{d}{2}}}\exp\left[{-{||\mathbf{x}_{t-1}-\boldsymbol{\mu}_{t-1}||^2}/{2 \sigma^2_{t-1}}}\right]}{\frac{1}{(2\pi\sigma_{\theta,t-1}^2)^{\frac{d}{2}}}\exp\left[{-{||\mathbf{x}_{t-1}-\boldsymbol{\mu}_{\theta,t-1}||^2}/{2\sigma^2_{\theta,t-1}}}\right]}\right] \\
        =& \mathbb{E}_{p(\mathbf{x}_{t-1}|\mathbf{x}_0,\mathbf{x}_t,\mathbf{x}_T)} \left[d \log \sigma_{\theta,t-1}- d \log \sigma_{t-1}-\frac{||\mathbf{x}_{t-1}-\boldsymbol{\mu}_{t-1}||^2}{2\sigma^2_{t-1}}+ \frac{||\mathbf{x}_{t-1}-\boldsymbol{\mu}_{\theta,t-1}||^2}{2\sigma^2_{\theta,t-1}}\right] \\
        =& d \log \sigma_{\theta,t-1}-d \log \sigma_{t-1}-\frac{d}{2} +\frac{d \sigma^2_{t-1}}{2 \sigma^2_{\theta,t-1}}+\frac{||\boldsymbol{\mu}_{t-1}-\boldsymbol{\mu}_{\theta,t-1}||^2}{2\sigma^2_{\theta,t-1}}
    \end{aligned}}
\end{equation*}

When \(\sigma_{\theta,t-1}^2\) is fixed and not learned, the first four
terms in the right hard of the above equality are independent of the network
parameters. Dropping these constants yields the training objective
\begin{equation}
    \mathcal L
    =
    \mathbb E_{t,\mathbf{x}_0,\mathbf{x}_t,\mathbf{x}_T}
    \left[
    \frac{
    1
    }
    {
    2\sigma_{\theta,t-1}^2
    }
    \left\|
    \boldsymbol{\mu}_{t-1}
    -
    \boldsymbol{\mu}_{\theta,t-1}
    \right\|^2
    \right].
    \label{eq:training_objective_final}
\end{equation}
In practice, following prior image-to-image
diffusion models~\citep{yue2023image,zhu2025unidb,luo2023image}, we replace the squared \(L_2\) mean-matching term
with an \(L_1\) surrogate to better preserve pixel-level details.
This completes the derivation.

\section{GOU Process}
\label{sec:gou_p}

\begin{theorem}
\label{theorem:gou_p}
    For a given GOU process:
    \begin{equation}\label{eq:gou_process}
   \mathrm{d}\mathbf{x}_t= \theta_t \left(\boldsymbol{\mu}-\mathbf{x}_t\right)\mathrm{d} t + g_t \mathrm{d}\mathbf{w}_t
\end{equation}
where $\boldsymbol{\mu}$ is a given state vector, $\theta_t$ denotes a scalar coefficient and $g_t$ represents the diffusion coefficient. It processes a closed-form analytical solution:

\begin{equation}
    p(\mathbf{x}_t|\mathbf{x}_s)= \mathcal{N} \left( \boldsymbol{\mu}+(\mathbf{x}_s-\boldsymbol{\mu})e^{-\bar{\theta}_{s:t}},\frac{g^2_t}{2 \theta_t} \left( 1-e^{-\bar{\theta}_{s:t}}  \right)\bm{I} \right), \,\,\, \bar{\theta}_{s:t}=\int_s^t \theta_\tau \mathrm{d}\tau
\end{equation}

\end{theorem}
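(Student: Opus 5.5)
The plan is to solve the linear SDE (\ref{eq:gou_process}) explicitly with an integrating factor. This is the scalar-drift special case of the variation-of-constants argument in the proof of Proposition~\ref{thm:affine_nonsingular_terminal}, with $A_t=-\theta_t\mathbf I$ and $\mathbf b_t=\theta_t\boldsymbol{\mu}$. Then $\Phi(t,s)=e^{-\bar{\theta}_{s:t}}\mathbf I$.

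\textbf{Step 1: solve the SDE.} Apply It\^o's product rule to $\mathbf y_\tau:=e^{\bar{\theta}_{s:\tau}}(\mathbf{x}_\tau-\boldsymbol{\mu})$ for $\tau\in[s,t]$. The $\theta_\tau$ terms cancel, leaving $\mathrm{d}\mathbf y_\tau=e^{\bar{\theta}_{s:\tau}}g_\tau\,\mathrm{d}\mathbf{w}_\tau$. Integrating from $s$ to $t$ and multiplying by $e^{-\bar{\theta}_{s:t}}$ gives
\[
\mathbf{x}_t=\boldsymbol{\mu}+(\mathbf{x}_s-\boldsymbol{\mu})e^{-\bar{\theta}_{s:t}}+\int_s^t e^{-\bar{\theta}_{u:t}}g_u\,\mathrm{d}\mathbf{w}_u .
\]

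\textbf{Step 2: identify the law.} Conditional on $\mathbf{x}_s$, the stochastic integral has a deterministic integrand, so it is a centred Gaussian vector independent of $\mathbf{x}_s$. Hence $\mathbf{x}_t\mid\mathbf{x}_s$ is Gaussian with the stated mean. By It\^o's isometry its covariance is
\[
\bar{\sigma}_{s:t}^2\mathbf I=\int_s^t e^{-2\bar{\theta}_{u:t}}g_u^2\,\mathrm{d}u\;\mathbf I,
\]
which matches Eq.~(\ref{eq:gou_transition_variance_mean}).

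\textbf{Step 3: closed form of the variance.} This is where I expect the main obstacle, because the claimed expression $\frac{g_t^2}{2\theta_t}(1-e^{-\bar{\theta}_{s:t}})$ is not valid for arbitrary $(\theta_t,g_t)$. I would invoke the parametrization from Section~\ref{sec:goub}, namely $g_u^2=2\lambda^2\theta_u$, so that $g_t^2/(2\theta_t)=\lambda^2$ is constant. Under this assumption
\[
e^{-2\bar{\theta}_{u:t}}g_u^2=\lambda^2\,\frac{\mathrm{d}}{\mathrm{d}u}e^{-2\bar{\theta}_{u:t}},
\]
and integrating gives $\bar{\sigma}_{s:t}^2=\lambda^2(1-e^{-2\bar{\theta}_{s:t}})$. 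The exponent that comes out naturally is therefore $2\bar{\theta}_{s:t}$, which agrees with Section~\ref{sec:goub}. I would state the result in that form and note that the displayed $1-e^{-\bar{\theta}_{s:t}}$ should read $1-e^{-2\bar{\theta}_{s:t}}$.

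\textbf{Step 4: limiting behaviour.} Letting $\bar{\theta}_{0:t}\to\infty$ sends the mean to $\boldsymbol{\mu}$ and the variance to $\lambda^2$. This gives the stationary law $\mathcal N(\boldsymbol{\mu},\lambda^2\mathbf I)$ claimed in Section~\ref{sec:goub}.
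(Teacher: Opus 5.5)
Your proposal is correct and follows the paper's argument: an integrating factor, then It\^o's isometry for the variance. The paper multiplies $\mathbf{x}_t$ by $e^{\bar{\theta}_{0:t}}$ instead of multiplying $\mathbf{x}_\tau-\boldsymbol{\mu}$ by $e^{\bar{\theta}_{s:\tau}}$, which is only a cosmetic difference. You are also right on both points you flag: the paper's proof silently assumes $g_u^2=2\lambda^2\theta_u$, and its own computation ends with $1-e^{-2\bar{\theta}_{s:t}}$, so the exponent in the displayed statement is a typo.
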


\begin{proof}
    We define ${y}_t:=e^{\int_0^t{\theta}_u \,\mathrm{d} u}$, and $\mathbf{z}_t=\mathbf{x}_t y_t$. Thus, with It\^o's formula. we get:
    \begin{equation}
    \begin{aligned}
        \mathrm{d} \mathbf z_t&=\mathrm{d} (\mathbf{x}_t y_t) \\
        &= \mathbf{x}_t \mathrm{d} y_t + y_t\mathrm{d}\mathbf{x}_t \\
        &= \theta_t \mathbf{x}_t y_t\mathrm{d} t +   y_t\left[\theta_t(\boldsymbol{\mu}-\mathbf{x}_t) \mathrm{d} t + g_t \mathrm{d}\mathbf{w}_t\right]\\
        &= \boldsymbol{\mu} \theta_t y_t \mathrm{d} t + g_t y_t \mathrm{d}\mathbf{w}_t
    \end{aligned}
\end{equation}

Integrating from $s$ to $t$, we get:
\begin{equation}
    \begin{aligned}
        \int_s^t \mathbf{z}_u \,\mathrm{d} u &= \boldsymbol{\mu} \int_s^t \theta_u y_u \,\mathrm{d} u +\int_s^t g_u y_u \,\mathrm{d} \mathbf{w}_u \\
        \mathbf{z}_t&=\mathbf{z}_s + \boldsymbol{\mu} \int_s^t \theta_u y_u \,\mathrm{d} u +\int_s^t g_u y_u \,\mathrm{d} \mathbf{w}_u \\
        \mathbf{x}_t&=\frac{y_s}{y_t} \mathbf{x}_s + \frac{\boldsymbol{\mu}}{y_t} \int_s^t \theta_u y_u \,\mathrm{d} u + \frac{1}{y_t}\int_s^t g_u y_u \,\mathrm{d} \mathbf{w}_u \\
        &= e^{-\int_s^t \theta_u \,\mathrm{d} u} \mathbf{x}_s + \boldsymbol{\mu} \int_s^t e^{-\int_u^t \theta_{\tau}\,\mathrm{d}  \tau} \theta_u \,\mathrm{d} u +\int_s^t e^{-\int_u^t \theta_{\tau}\,\mathrm{d}  \tau} g_u \,\mathrm{d} \mathbf{w}_u\\
        &=e^{-\int_s^t \theta_u \,\mathrm{d} u} \mathbf{x}_s+ \boldsymbol{\mu} \left[1-e^{-\int_s^t \theta_u \,\mathrm{d} u}\right]+ \int_s^t e^{-\int_u^t \theta_{\tau}\,\mathrm{d}  \tau} g_u \,\mathrm{d} \mathbf{w}_u
    \end{aligned}
\end{equation}

It is obvious that the transition kernel is a Gaussian distribution. And we have:
\begin{equation}
    \begin{aligned}
        \int_s^t e^{-\int_u^t \theta_{\tau}\,\mathrm{d}  \tau} g_u \,\mathrm{d} \mathbf{w}_u &= \mathcal{N} \left(\mathbf{0}, \int_s^t e^{-2\int_u^t \theta_{\tau}\,\mathrm{d}  \tau} g^2_u \,\mathrm{d} u\, \bm{I} \right) \\
        &=\mathcal{N} \left( \mathbf{0}, \lambda^2\int_s^t e^{-2\int_u^t\theta_{\tau}\,\mathrm{d}  \tau} 2 \theta_u \,\mathrm{d}  u \, \bm{I}  \right) \\
        &= \mathcal{N} \left(\mathbf{0},\frac{g^2_t}{2\theta_t}\left( 1-e^{-2\bar{\theta}_{s:t}} \right) \bm{I}\right)
    \end{aligned}
\end{equation}

Therefore:
\begin{equation}
    \mathbf{x}_t=e^{-\bar{\theta}_{s:t}} \mathbf{x}_s + \boldsymbol{\mu} \left[1-e^{-\bar{\theta}_{s:t}}\right] + \mathcal{N} \left(0,\frac{g^2_t}{2\theta_t}\left( 1-e^{-2\bar{\theta}_{s:t}} \right) \bm{I}\right)
\end{equation}
This concludes the proof of Theorem~\ref{theorem:gou_p}.

\end{proof}

\section{Experimental Details}
\label{sec:imp_details}

To ensure a fair comparison, our experimental setup strictly aligns with those of GOUB~\citep{yue2023image} and UniDB~\citep{zhu2025unidb}. We parameterize the noise estimation network using a standard U-Net architecture~\citep{luo2023image}. The models are optimized using Adam~\citep{DBLP:journals/corr/KingmaB14} with momentum parameters $\beta_1=0.9$ and $\beta_2=0.99$, and a learning rate of $10^{-4}$. During training, we use randomly cropped $128 \times 128$ image patches with a batch size of $8$. For the noise schedule $\theta_t$, we adopt the flipped cosine schedule~\citep{nichol2021improved,luo2023image}:
\begin{equation}
    \theta_t = 1 - \frac{\cos^2 \left( \frac{t/T + s}{1 + s} \frac{\pi}{2} \right)}{\cos^2 \left( \frac{s}{1 + s} \frac{\pi}{2} \right)},
\end{equation}
where the offset $s=0.008$ is applied to ensure a smooth transition near the boundary. Following established protocols, the steady-state variance is set to $\lambda^2=(30/255)^2$, bounding the standard deviation as $\sigma < 30/255$. To maintain numerical stability, the terminal coefficient $e^{\bar{\theta}_{0:T}}$ is truncated at $0.005$ rather than strictly zero.

All models are trained on a single NVIDIA A100 GPU (40GB). For the deraining and inpainting tasks, we train the models for $1.2$ million iterations. For the super-resolution task, the training budget is set to $600,000$ iterations. 


Regarding the specific implementation of the model formulation, we parameterize the noise estimation network as $\boldsymbol{\epsilon}_{\theta}(\mathbf{x}_t, t, \mathbf{x}^\star)$ instead of $\boldsymbol{\epsilon}_{\theta}(\mathbf{x}_t, t, \mathbf{x}_T)$. This design choice stems from the fact that $\mathbf{x}_T$ inherently couples the target state $\mathbf{x}^\star$ with the initial clean state $\mathbf{x}_0$ and the stochastic noise. Since $\mathbf{x}_0$ remains inaccessible at inference time and the random noise conveys negligible structural information, substituting $\mathbf{x}_T$ with the clear condition $\mathbf{x}^\star$ effectively decouples the network from unknown variables and enhances generalization. Furthermore, during the sampling stage, the reverse trajectory is initialized at the terminal state $\mathbf{x}_T$. In alignment with our network parameterization, we approximate the initial sample as $\mathbf{x}_T \approx (b_T + c_T)\mathbf{x}^\star$, thereby omitting the unavailable initial state $\mathbf{x}_0$ and the uninformative noise term without sacrificing reconstruction quality. 

For sampling, we use the following reverse process,
\begin{equation}
\begin{aligned}
    \mathrm{d}\mathbf{x}_t&
    =
    \left[
        \mathbf{f}(\mathbf{x}_t,t)
        +
        \zeta
        \eta_t^2
        \frac{\boldsymbol{\epsilon}_{\theta}
        (\mathbf{x}_t,t,\mathbf{x}_T)}
        {\bar{\sigma}'_{0:t}}
    \right]\mathrm{d}t
    +\eta_t
    \,\mathrm{d}\bar{\mathbf{w}}_t ,\,
    \mathrm{d} \mathbf{x}_t= \left[\mathbf{f}(\mathbf{x}_t,t)+\frac{1}{2}\zeta \eta_t^2 \frac{\boldsymbol{\epsilon}_{\theta}
        (\mathbf{x}_t,t,\mathbf{x}_T)}
        {\bar{\sigma}'_{0:t}} \right] \mathrm{d} t
\end{aligned}
\end{equation}

When $\zeta=1$, the above formulations recover the standard reverse
SDE and its corresponding probability flow ODE. However,
Eq.~(\ref{eq:model_mean_training}) is derived from a first-order Euler
discretization of the reverse SDE, which introduces discretization
errors. Therefore, directly applying the theoretically derived drift
coefficient may lead to inaccurate score estimation and unstable
sampling trajectories in practice. To compensate for this discrepancy,
we introduce $\zeta$ as an empirical correction factor for the
score-driven drift term. As shown in Table~\ref{tab:param_zeta}, adjusting $\zeta$ provides improved
restoration performance and more stable sampling trajectories compared
with using the theoretical coefficient. Therefore, we treat $\zeta$ as
a sampling hyperparameter and select its value according to validation
performance. We set $\zeta$ to
$1.12$, $1.115$, and $1.045$ for deraining, super-resolution, and inpainting, respectively.

\begin{table*}[ht]
\setlength{\tabcolsep}{2pt}
  \centering
  \vspace{-4mm}
  \caption{Quantitative comparison of different sampling strategies on image restoration tasks. 
Results are reported on DIV2K, Rain100H, and CelebA-HQ $256 \times 256$ datasets for image super-resolution, deraining, and inpainting, respectively. }
  \vskip 0.1in
  \renewcommand{\arraystretch}{1.1}
  \resizebox{\textwidth}{!}{
  \begin{tabular}{|c|cccc|c|cccc|c|cccc|}
    \hline
    \multirow{2}*{\textbf{METHOD}} & \multicolumn{4}{c|}{\textbf{Image Deraining}} & \multirow{2}*{\textbf{METHOD}} & \multicolumn{4}{c|}{\textbf{Image Super-Resolution}} & \multirow{2}*{\textbf{METHOD}} & \multicolumn{4}{c|}{\textbf{Image Inpainting}} \\
    \cline{2-5} \cline{7-10} \cline{12-15}
      & \textbf{PSNR}$\uparrow$ & \textbf{SSIM}$\uparrow$ & \textbf{LPIPS}$\downarrow$ & \textbf{FID}$\downarrow$ & &\textbf{PSNR}$\uparrow$ & \textbf{SSIM}$\uparrow$ & \textbf{LPIPS}$\downarrow$ & \textbf{FID}$\downarrow$ & & \textbf{PSNR}$\uparrow$ & \textbf{SSIM}$\uparrow$ & \textbf{LPIPS}$\downarrow$ & \textbf{FID}$\downarrow$ \\
    \hline

    SDE & 32.38 & 0.9069 & {0.042} & {15.68} &
    SDE & 22.81 & 0.5409 & {0.366} & {33.78} &  SDE  & 29.62 & 0.9084 & {0.034} & {3.86} \\
    Mean-ODE  & {34.96} & {0.9450} & 0.069 & 28.44 &
    Mean-ODE & {28.80} & {0.8141} & 0.314 & 30.20 &  Mean-ODE & {32.00} & \textbf{0.9404} & 0.051 & 11.84 \\   
    \hline 
    SDE-$\zeta$ &{32.94} & {0.9183} & \textbf{0.038} & \textbf{14.21} & SDE-$\zeta$ & 27.22 & 0.7522 & \textbf{0.136} & \textbf{13.70} & SDE-$\zeta$ & {30.30} & {0.9201} & \textbf{0.032} & \textbf{3.48} \\ 
     ODE-$\zeta$  &\textbf{35.06} & \textbf{0.9464} & 0.066 & 25.75 & ODE-$\zeta$ & \textbf{28.90} & \textbf{0.8164} & 0.309 & 19.95 & ODE-$\zeta$ & \textbf{32.22} & {0.9399} & 0.051 & 11.87 \\
     \hline
  \end{tabular}
  }
  \vskip -0.1in
  \label{tab:param_zeta}
\end{table*}

\section{Ablation Study}\label{sec-ablation-study}

In this section, we systematically evaluate the sensitivity of our framework to its core hyperparameters. Recall that the $h$-transform parameters $\alpha$, $\beta$, and $\gamma$ modulate the soft terminal center location, while $\hat{\sigma}^2$ determines its variance spread. We omit $\gamma$ from this analysis because we consistently set $\mu = x^\star$ across all experiments, rendering its independent variation redundant. Furthermore, since $\hat{\sigma}^2$ is explicitly determined by $\sigma$ via Eq.~(\ref{eq-xhat-sigmahat}), our ablation study focuses exclusively on the three essential parameters: $\alpha$, $\beta$, and $\sigma$.



\begin{figure}[ht]
    \centering
    \includegraphics[width=1.0\textwidth]{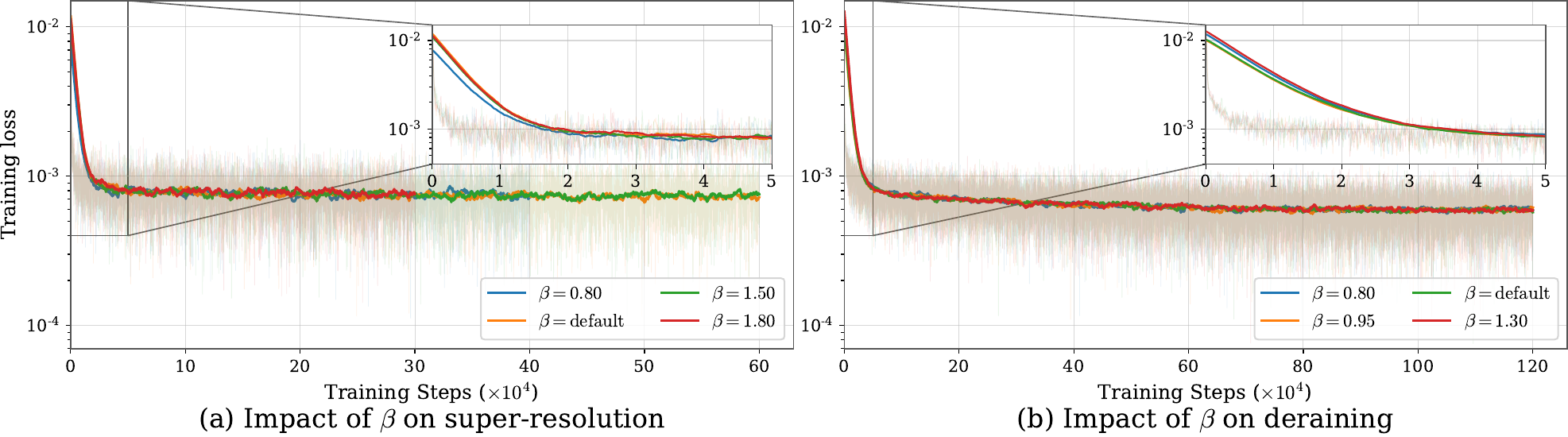}
    \caption{Training loss of the hyperparameter $\beta$ on super-resolution and deraining tasks.}
    \label{fig:param_beta_loss}
\end{figure}


\begin{figure}[ht]
    \centering
    \includegraphics[width=1.0\textwidth]{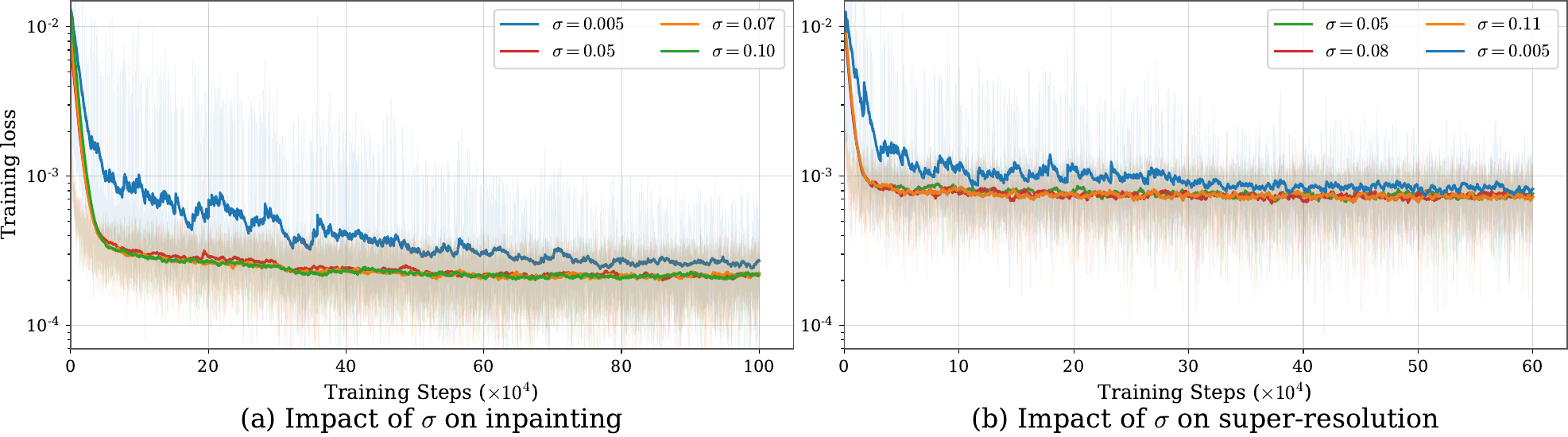}
    \caption{Training loss of  the hyperparameter $\sigma$ on inpainting and super-resolution tasks. }
    \label{fig:param_sigma_loss}
\end{figure}

\noindent\textbf{Sensitivity Analysis of $\sigma$:} We first study the effect of the terminal variance parameter $\sigma^2$, which controls the strength of the prescribed Gaussian terminal constraint in SDDBMs. As shown in Figure~\ref{fig:param_alpha_sigma}(a), the performance first improves as $\sigma$ increases, but degrades when $\sigma$ becomes too large. An overly small $\sigma$ makes the terminal distribution highly concentrated, causing the process to approach a hard endpoint bridge and become more susceptible to terminal sampling instability. In contrast, an excessively large $\sigma$ over-relaxes the terminal constraint and weakens target-guided transport. Thus, $\sigma$ governs a key trade-off between sampling stability and target fidelity. Figure~\ref{fig:sigma_deraining} further support this observation from the training perspective which shows that both extremely small and very large $\sigma$ lead to higher training losses, while moderate values stabilize training. These results in Figure~\ref{fig:sigma_deraining} suggest that $\sigma\in[0.05,0.11]$ is a reasonable stable range, while the best performance is typically achieved around $\sigma\in[0.08,0.11]$ (Similar trends are also observed on the other two tasks, as shown in Figure~\ref{fig:param_sigma_loss}(a)(b))

\noindent\textbf{Impact of $\alpha$ and $\beta$:} The quantitative results regarding hyperparameter $\alpha$ are depicted in Figure~\ref{fig:param_alpha_sigma}(b). Notably, the restoration performance degrades precipitously as $\alpha$ approaches the critical threshold of $-e^{-\bar{\theta}_{0:T}}\hat{\sigma}^2 / \bar{\sigma}_{0:T}^2$, as established in Lemma~\ref{lem-terminal-relaxion}.
This severe drop empirically validates that nearing this limit triggers the singularity problem, substantially undermining the model's generative efficacy. 
Regarding $\beta$ (Figure~\ref{fig:param_beta_loss}), we observe that different $\beta$ values lead to comparable performance, indicating that the model is robust to the choice of $\beta$ within the tested range. However, a relatively large $\beta$ tends to produce higher training losses during the early stage of optimization and slightly degrades the final performance. Therefore, we recommend choosing $\beta$ values around the default setting\footnote{By default, we set $\beta=1+\frac{\hat{\sigma}^2}{\bar{\sigma}^2_{0:T}}$ and $\gamma=-\frac{\hat{\sigma}^2}{\bar{\sigma}^2_{0:T}}\left(1-e^{-\bar{\theta}_{0:T}}\right)$, yielding $b_T=1$.}, which provides stable training and reliable restoration quality.

\section{Additional Visual Results}

\begin{figure}[ht]
    \centering
    \includegraphics[width=1.0\textwidth]{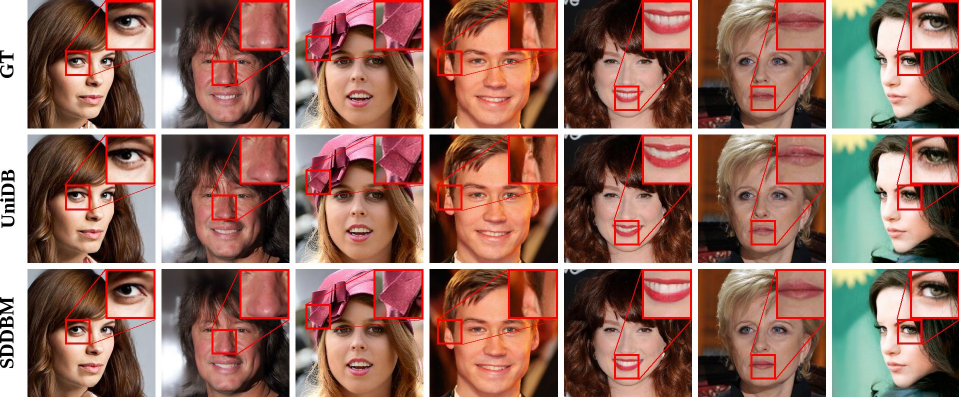}
    \caption{Qualitative comparison between UniDB and SDDBMs on the CelebA-HQ dataset for image inpainting.}
    \label{fig:visual_inpainting}
\end{figure}

\begin{figure}[ht]
    \centering
    \includegraphics[width=1.0\textwidth]{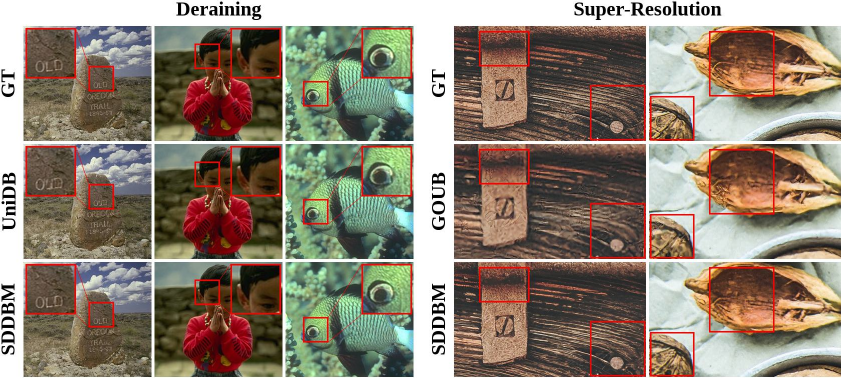}
    \caption{Qualitative comparison between SDDBMs and baseline methods on the Rain100H dataset for image deraining (left) and the DIV2K dataset for image super-resolution (right).}
    \label{fig:visual_dr_sr}
\end{figure}

\begin{figure}[ht]
    \centering
    \includegraphics[width=1.0\textwidth]{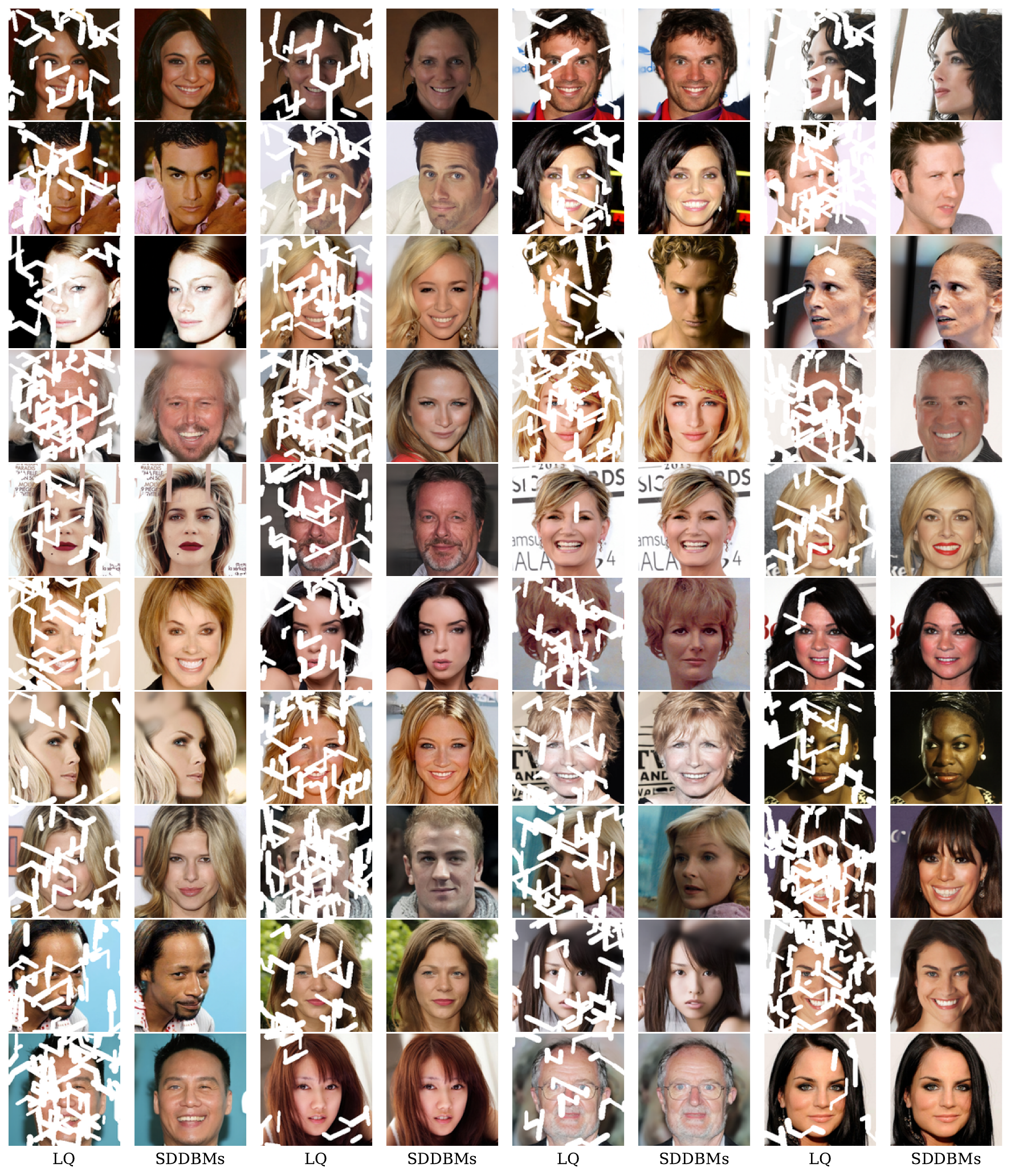}
    \caption{Additional visual results on thin mask inpainting with CelebA-HQ datasets.}
    \label{fig:visual_inpainting_appendix}
\end{figure}
\clearpage

\begin{figure}[ht]
    \centering
    \includegraphics[width=1.0\textwidth]{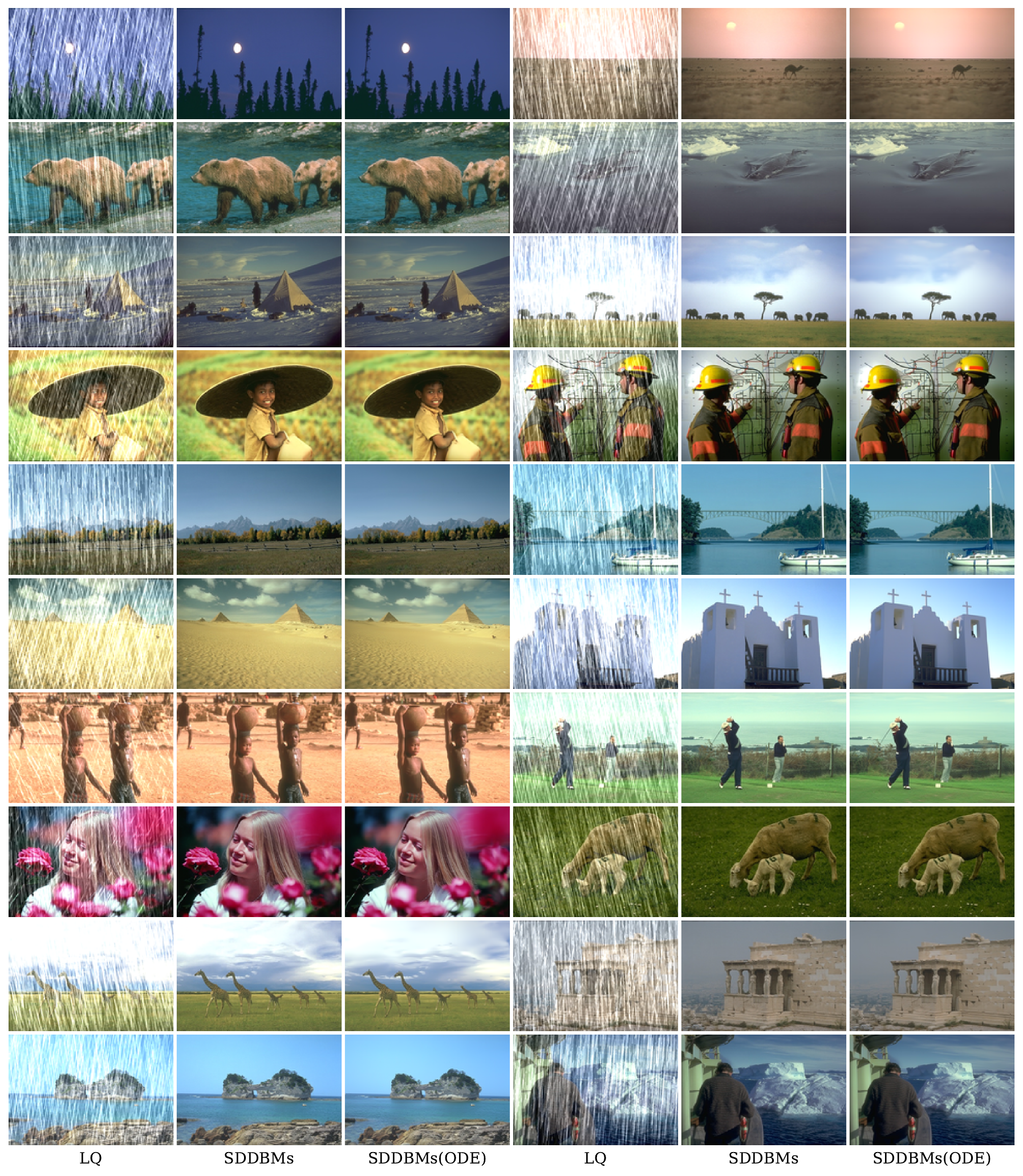}
    \caption{Additional visual results on deraining with Rain 100H datasets.}
    \label{fig:visual_deraining_appendix_1}
\end{figure}
\clearpage

\begin{figure}[ht]
    \centering
    \includegraphics[width=1.0\textwidth]{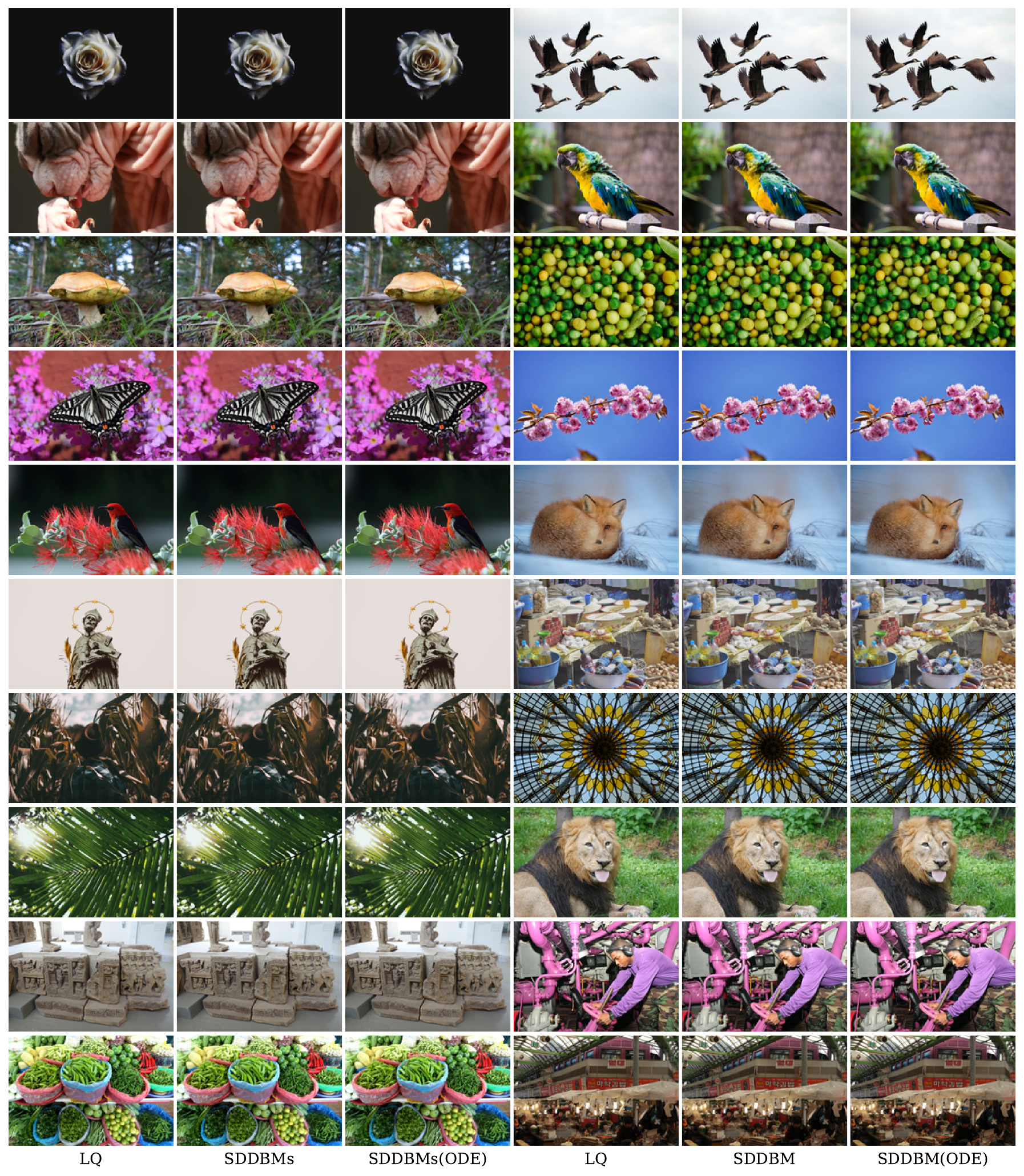}
    \caption{Additional visual results on super-resolution with DIV2K datasets.}
    \label{fig:visual_deraining_appendix_2}
\end{figure}
\clearpage

\end{document}